\documentclass{article}
\usepackage{authblk}
\usepackage{natbib}
\usepackage[margin=1in]{geometry}
\setlength\parindent{0pt}
\usepackage[utf8]{inputenc} 
\usepackage[T1]{fontenc}    
\usepackage{hyperref} 
\hypersetup{
     colorlinks   = true,
     linkcolor = black,
     urlcolor    = teal,
	 citecolor = teal,
  hypertexnames=false
}
\usepackage{url}            
\usepackage{booktabs}       
\usepackage{amsfonts}       
\usepackage{nicefrac}       
\usepackage{microtype}      
\usepackage{xcolor}         

\usepackage{amsmath,amsfonts,amssymb}
\usepackage{amsthm}
\usepackage{bm}
\usepackage{enumitem,array,booktabs}
\usepackage{algorithm}
\usepackage{algorithmic}
\newcounter{protocol}
\makeatletter
\newenvironment{protocol}[1][htb]{%
  \let\c@algorithm\c@protocol
  \renewcommand{\ALG@name}{Protocol}
  \begin{algorithm}[#1]%
  }{\end{algorithm}
}
\makeatother

\usepackage{natbib}
\usepackage{bbm}
\usepackage{tabulary,multirow}
\usepackage{makecell}
\usepackage{dsfont}
\usepackage{tcolorbox}
\usepackage{thmtools}
\usepackage{xpatch}
\xpatchcmd{\proof}{\itshape}{\scshape}{}{}
\renewenvironment{proof}[1][]{\par\noindent{\bfseries Proof #1\ }}{\hfill\qedsymbol\\[2mm]}
\usepackage{tablefootnote}

\newtheorem{theorem}{Theorem}
\newtheorem{definition}[theorem]{Definition}
\newtheorem{lemma}[theorem]{Lemma}
\newtheorem{proposition}[theorem]{Proposition}

\newtheorem{remark}[theorem]{Remark}

\newcommand{\restatableeq}[3]{\label{#3}#2\gdef#1{#2\tag{\ref{#3}}}}

\usepackage{xspace}
\setlength{\marginparsep}{0.2cm} 
\setlength{\marginparwidth}{1.9cm}

\definecolor{darkgray}{RGB}{105,105,105}
\definecolor{lightgray}{RGB}{169,169,169}
\definecolor{darkmeganta}{rgb}{0.6,0,0.6}

\DeclareMathOperator*{\argmin}{arg\,min}

\DeclareMathOperator{\loglog}{loglog}






\newcommand{\Unif}{\mathrm{Unif}}

\newtheorem{claim}{Claim}

\newcommand{\NN}{{\mathbb N}}
\newcommand{\1}[1]{\mathds{1}(#1)}
\newcommand{\ind}[1]{\mathds{1}_{#1}}

\newcommand{\EEs}[2]{\mathbb{E}_{#1}\left[#2\right]}

\newcommand{\PPs}[2]{\mathbb{P}_{#1}\left(#2\right)}
\renewcommand{\Pr}{\mathbb{P}}


\newcommand{\abs}[1]{\left|#1\right|}



\newcommand{\cA}{\mathcal{A}}

\newcommand{\cD}{\mathcal{D}}
\newcommand{\cE}{\mathcal{E}}

\newcommand{\cG}{\mathcal{G}}
\newcommand{\cH}{\mathcal{H}}

\newcommand{\cL}{\mathcal{L}}

\newcommand{\cX}{\mathcal{X}}
\newcommand{\X}{\cX}
\newcommand{\cY}{\mathcal{Y}}


\renewcommand{\epsilon}{\varepsilon}
\renewcommand{\hat}{\widehat}
\renewcommand{\tilde}{\widetilde}
\renewcommand{\bar}{\overline}

\newcommand{\nothere}[1]{}



\newcommand{\vcd}{\mathrm{VCdim}}
\newcommand{\ld}{\mathrm{Ldim}}



\newcommand{\lstr}{\ell^{\textrm{str}}}
\newcommand{\Lstr}{\cL^{\textrm{str}}}
\newcommand{\hatlstr}{\hat \cL^{\textrm{str}}}
\newcommand{\lgraph}{\cL_{\textrm{neighborhood}}}
\newcommand{\lproxy}{\cL_{\textrm{proxy}}}
\newcommand{\hatlproxy}{\hat\cL_{\textrm{proxy}}}
\newcommand{\hatlgraph}{\hat\cL_{\textrm{neighborhood}}}

\newcommand{\regret}{\textrm{Regret}}

\date{}
\title{Learnability Gaps of Strategic Classification}

\author{
  Lee Cohen\thanks{Stanford. Email: \href{mailto:leecohencs@gmail.com}{\nolinkurl{leecohencs@gmail.com}}. Authors are ordered alphabetically.}\hspace*{1cm}
  Yishay Mansour\thanks{Tel Aviv University and Google Research. Email: \href{mailto:mansour.yishay@gmail.com}{\nolinkurl{mansour.yishay@gmail.com}}.}\hspace*{1cm}
  Shay Moran\thanks{Departments of Mathematics, Computer Science, and Data and Decision Sciences, Technion and Google Research. 
  Email: \href{mailto:smoran@technion.ac.il}{\nolinkurl{smoran@technion.ac.il}}.}\hspace*{1cm}
  Han Shao\thanks{Toyota Technological Institute of Chicago. Email: \href{mailto:han@ttic.edu}{\nolinkurl{han@ttic.edu}}.}}


\begin{document}

\maketitle

\begin{abstract}%

In contrast with standard classification tasks, strategic classification involves agents strategically modifying their features in an effort to receive favorable predictions. For instance, given a classifier determining loan approval based on credit scores, applicants may open or close their credit cards and bank accounts to fool the classifier. The learning goal is to find a classifier robust against strategic manipulations. Various settings, based on what and when information is known, have been explored in strategic classification. In this work, we focus on addressing a fundamental question: the learnability gaps between strategic classification and standard learning. 

We essentially show that any learnable class is also strategically learnable: we first consider a fully informative setting, where the manipulation structure (which is modeled by a manipulation graph $G^\star$) is known and during training time the learner has access to both the pre-manipulation data and post-manipulation data. We provide nearly tight sample complexity and regret bounds, offering significant improvements over prior results. Then, we relax the fully informative setting by introducing two natural types of uncertainty. 

First, following~\cite{ahmadi2023fundamental}, we consider the setting in which the learner only has access to the post-manipulation data. We improve the results of~\cite{ahmadi2023fundamental} and close the gap between mistake upper bound and lower bound raised by them. 

Our second relaxation of the fully informative setting introduces uncertainty to the manipulation structure. That is, we assume that the manipulation graph is unknown but belongs to a known class of graphs. We provide nearly tight bounds on the learning complexity in various unknown manipulation graph settings. Notably, our algorithm in this setting is of independent interest and can be applied to other problems such as multi-label learning.

\end{abstract}

\section{Introduction}


Strategic classification lies at the intersection of machine learning and game theory, addressing the intricate challenges arising from the strategic behavior of self-interested agents seeking to manipulate classifiers for personal advantage \citep{bruckner2011stackelberg, hardt2016strategic}. This emerging field has attracted significant attention due to its relevance in real-world applications, such as loan approvals and college admissions.
For example, in loan applications, individuals might strategically try to increase their credit scores by acquiring additional credit cards, or by changing jobs to increase their salaries. Within the context of college admissions, students may opt for less challenging courses to improve their GPA, retake the SAT, or even transfer schools, all to improve their chances of acceptance. 
This introduces a more complex environment, where agents strategically adapt their features to influence the outcome of the current classifier while preserving their true underlying qualifications (e.g., being successful in college or repaying a loan). The foundational challenge in strategic classification resides in selecting a classifier that is robust to manipulations, ensuring the learning process remains resilient to strategic behavior.

When modeling strategic classification, the manipulation power of the agents is considered limited. Our approach to representing feasible manipulations involves a manipulation graph—a concept initially introduced in \cite{zhang2021incentive} and subsequently studied in, e.g., \cite{lechner2022learning, lechner2023strategic, ahmadi2023fundamental}. 
In this model, each agent has a feature vector $x$, represented as a node in a graph, and a binary label, $y$. An arc from $x$ to~$x'$ indicates that an agent with the feature vector $x$ can adjust their feature vector to $x'$—essentially, an arc denotes a potential manipulation.
Similar to the assumptions made in \cite{ahmadi2023fundamental}, we posit that the manipulation graph has a bounded degree, $k$. This constraint adds a realistic dimension to the model, reflecting the limited manipulation capabilities of the agents within the strategic classification framework. In addition, as we discuss in Theorems~\ref{thm:fi-pac-lb} and~\ref{thm:fi-online-lb}, this bounded degree constraint is also necessary in the sense that relaxing it leads to impossibility results for extremely easy-to-learn hypothesis classes. 

Consider a sequence of agents arriving and interacting with the learner. 
At each time step, the learner selects a hypothesis, and the current agent manipulates its feature vector $x$ in accordance with this hypothesis. Each agent is aiming to secure a positive classification, and thus an agent $x$ will manipulate to node $x'$ only when such manipulation alters their predicted value from negative to positive.
Importantly, $y$, the true label of the agent, does not depend on the current predictor.
Also note that the learner does not have the ability to counterfactually inquire about the manipulations the previous agents would have adopted under a different hypothesis. 

In this work, we aim to address a fundamental question:
\begin{center}
    \textbf{Does learnability imply strategic learnability?}
\end{center}
We study the above question within both PAC and online learning frameworks, addressing it across both realizable and agnostic contexts.  In each case, we analyze various levels of information received by the learner as feedback. 

%
\vspace{1mm}

We start with a \textit{fully informative} setting.  Here, the learner has full information of both the manipulation graph as well as the pre- and post-manipulation features: at each time step the learner (1) observes the pre-manipulation features of the current agent, (2)  selects and implements a hypothesis 
(according to which the agent manipulates), and (3) observes both the manipulated features and the true label of the agent. 

We are mainly interested in this model because it is the simplest and therefore a natural first step from a theoretical perspective. Nevertheless, it still could be interesting from a practical perspective because it aligns with real-world situations like customers inquiring about mortgage rates and providing their credit score or income as input, or in college admissions where colleges can access sophomore-year grades. 
This basic setting already deviates from the standard learning paradigm, as the learner needs to consider the potential manipulation of the current agents. 


\vspace{1mm}

We consider the fully informative setting as a baseline. The other two settings we consider deviate from this baseline by introducing two pragmatic elements of uncertainty regarding the learner's knowledge: 
one concerning the pre-manipulation features (previously studied for finite hypotheses class in \cite{ahmadi2023fundamental}), and the other regarding the manipulation graph (this type of uncertainty is introduced in this work).

\vspace{1mm}

In the second setting, \textit{post-manipulation feedback}, the learner selects the hypothesis before observing any information about the agent. Then, after the agent's manipulation and its classification, 
the learner observes either (i) the original feature vector, or (ii) the manipulated feature vector (we consider both variants). 
Thus, significantly less information is provided to the learner compared to the first setting, wherein the learner could compute the set of potential manipulations for each feature vector $x$ for every hypothesis. 
This setting was previously studied in \cite{ahmadi2023fundamental} who focused on finite hypotheses classes. 


We highlight that while we assume knowledge of the manipulation graph for the first two settings, our algorithms exclusively depend on local information concerning the manipulation set of the current feature vector. Thus, our algorithms remain applicable in scenarios where global information may not be readily available.

\vspace{1mm}

In the last setting, the \textit{unknown manipulation graph}, we relax the fully informative baseline by introducing uncertainty about the manipulation graph. So, the learner observes both the pre- and post-manipulation feature vectors, but no longer has access to the entire manipulation graph. Instead, the learner has prior knowledge in the form of a finite class of graphs $\cG$. The class $\cG$ is assumed to contain the true manipulation graph in the realizable setting, and in the agnostic setting, $\cG$ serves as a comparator class for competitive/regret analysis.
This setting is a natural variant of the 
fully informative feedback model. For instance, a lender can observe the credit score of a customer but does not know the (at most $k$) easiest and most effective types of manipulations for the agent to pursue —whether they involve obtaining more credit cards, increasing their salary, or even changing employment status from unemployed to employed. 


From a technical perspective, the last setting gives rise to a natural graph learning task: consider an unknown target graph. At training time, we observe random examples of the form (vertex $v$, and a random arc from $v$). During testing, we receive a random vertex $u$ (drawn from the same distribution as the training nodes), and our objective is to predict the entire neighborhood of $u$. 
This problem is motivated by applications such as recommendation systems like Netflix, where a learner observes a random user (vertex) and a random movie they watched (random arc). During testing, the learner observes a user and aims to predict a small set of movies to recommend to this user.
Our algorithm in the unknown manipulation graph setting can be applied to solve this neighborhood learning problem.


\subsection{Results Overview} 

In classical binary classification, it is well-known that Probably Approximately Correct (PAC) learnability is characterized by the Vapnik–Chervonenkis (VC) dimension \citep{vapnik:74} and that online learnability is characterized by the Littlestone dimension \citep{littlestone1988learning} (see Appendix~\ref{app:background} for their definitions). As detailed below, we essentially show that \emph{every learnable class is also strategically learnable}. For certain scenarios, we efficiently transform standard learning algorithms into their strategic counterparts, while for others, we develop entirely new algorithms. Please see Table~\ref{tb:results} for a summary according to different settings. 

\vspace{-0.5em}
\paragraph{Fully Informative Setting}
We establish upper and lower bounds for the sample complexity and mistake bounds, which are tight up to logarithmic factors. 
In particular, our bounds imply that in this basic setting, learnability implies strategic learnability. Quantitatively, we prove that the learning complexity of strategic learning is larger by at most a multiplicative logarithmic factor of the maximal degree of the manipulation graph. We further prove that this logarithmic factor is necessary.


In this setting, both PAC and online learnability are captured by the VC and Littlestone dimension of the corresponding classes of strategic losses (i.e.\ the 0-1 loss under manipulations). In the PAC setting we show that the strategic VC dimension is $\tilde \Theta(d_1\log k)$, where $d_1$ is the standard VC dimension of~$\cH$. This bound is tight up to a logarithmic factor in $d_1$ (see Theorems~\ref{thm:fi-pac} and~\ref{thm:fi-pac-lb}). 
In the online setting we provide a tight bound of $d_2\log k$ on the strategic Littlestone dimension of $\cH$, where $d_2$ is the standard Littlestone dimension of $\cH$ (see Theorems~\ref{thm:fi-online} and ~\ref{thm:fi-online-lb}).

Algorithmically, in the PAC setting any empirical risk minimizer w.r.t.\ the strategic losses achieves near optimal sample complexity in both the realizable and agnostic setting. In the online setting we construct an efficient transformation that converts an online algorithm in the standard setting to a strategic online algorithm. Our transformation involves using the algorithm from the standard setting to generate multiple experts, and running a weighted-majority vote scheme over these experts.

\vspace{-0.5em}
\paragraph{Post-Manipulation Feedback Setting}

In the PAC setting, we derive sample complexity bounds identical to those that appear in the fully informative setting. 
This occurs because, when the graph is known, the timing of observing the original feature---whether beforehand or afterward---does not make any difference. Even if the learner can only observe manipulated features, by implementing the all-positive or all-negative hypothesis at each round, one can obtain the same information feedback as observing original features.

However, in online learning,  this setting is more subtle because we cannot separate exploration (training) from exploitation (test). To handle this we again devise an efficient transformation that that converts online learners from the classical setting to strategic online learners.
Our algorithm here is similar to the one from the fully informative setting, yet the optimal mistake bound here is $d_2 k$, which has an exponentially worse dependence on $k$ (see Theorems~\ref{thm:fu-online} and~\ref{thm:fu-online-lb}). This optimal mistake bound also resolves an open question posed by \cite{ahmadi2023fundamental}.
 

\vspace{-0.5em}
\paragraph{Unknown Manipulation Graph Setting}
We answer our overarching question affirmatively for this setting too. 
The main challenge of PAC strategic learning with an unknown graph setting is that we cannot obtain an unbiased estimate of the loss of each graph, and simply running ERM does not work. We tackle this challenge by using the expected degree as a regularizer over the graphs.
Then, in the realizable setting, by finding the consistent minimal empirical degree graph-hypothesis pair, we can obtain sample complexity 
$\tilde O({(d_1\log k + k\log |\cG|)}/{\epsilon})$ (Theorem~\ref{thm:ug-pac-rel}).

In the agnostic PAC setting, although we cannot obtain an unbiased estimate of the graph loss, we propose a proxy loss, which can approximate the graph loss up to a multiplicative factor $k$. By minimizing the empirical proxy loss to find an approximate graph and running ERM over the strategic loss under this approximate graph, we are able to obtain a hypothesis that is at most a factor of $k$ from the best hypothesis-graph pair (see Theorem~\ref{thm:ug-pac-agn}).

In the realizable online setting, we design an algorithm that runs the online learning algorithm from the post-manipulation feedback setting over the majority vote of the graphs and achieves a mistake bound of $O(d_2k\log(k)+ \log |\cG|)$ (see Theorem~\ref{thm:ug-online}). In both realizable PAC and online settings, we provide a lower bound with logarithmic dependency on $|\cG|$, demonstrating that $\log|\cG|$ is inherent (see Theorems~\ref{thm:ug-pac-rel-lb} and~\ref{thm:ug-online-lb}).
We further discuss the application of our results in the unknown graph setting to the multi-label learning problem, which is of independent interest. 

\begin{table}[t]
\centering
\resizebox{\textwidth}{!}{%
\renewcommand{\arraystretch}{2}
\begin{tabular}{|c|c|c|c|}
\hline
\multirow{2}{*}{Setting}& \multicolumn{2}{c|}{Sample complexity} & Regret\\\cline{2-4}
 & Realizable & Agnostic  & Realizable$^{(*)}$   \\ \hline
\makecell{Fully informative} & \multirow{2}{*}{\makecell{
$\tilde O(\frac{d_1\log k}{\epsilon})$ 
(Thm~\ref{thm:fi-pac})\\
$\Omega(\frac{d_1\log k}{\epsilon})$ (Thm~\ref{thm:fi-pac-lb})
}}  & \multirow{2}{*}{\makecell{
$\tilde O(\frac{d_1\log k}{\epsilon^2})$ (Thm~\ref{thm:fi-pac})\\
$\Omega(\frac{d_1\log k}{\epsilon^2})$ (Thm~\ref{thm:fi-pac-lb})}} & \makecell{$O(d_2 \log k)$
(Thm~\ref{thm:fi-online})\\ $\Omega(d_2\log k )$ (Thm~\ref{thm:fi-online-lb})} \\ \cline{1-1}\cline{4-4}
\makecell{Post-manipulation\\ feedback} &   &   & \makecell{ $\tilde O(d_2 k)$ (Thm~\ref{thm:fu-online})\\
$\Omega(d_2k)$  
 (Det, Thm~\ref{thm:fu-online-lb})}  \\ \hline
\makecell{Unknown\\ manipulation graph} & \makecell{$\tilde O(\frac{d_1\log k + k\log(|\cG|)}{\epsilon})$ (Thm~\ref{thm:ug-pac-rel})\\ $\Omega(\log|\cG|)$ (Ada, Thm~\ref{thm:ug-pac-rel-lb})} & \makecell{$O(\frac{k^2\log(|\cG|) + d_1\log k}{\epsilon^2})$\\
 (Mul, Thm~\ref{thm:ug-pac-agn})} & \makecell{$O(d_2k\log k+ \log |\cG|)$ (Thm~\ref{thm:ug-online})\\
$\tilde\Omega(\log|\cG|)$ (Det, Thm~\ref{thm:ug-online-lb})} \\ \hline
\end{tabular}
}
\caption{Summary of results, where $k$ is the upper bound of the maximum out-degree of the underlying manipulation graph $G^\star$ as well as all graphs in $\cG$, $d_1 = \vcd(\cH)$ is the VC dimension of $\cH$, and $d_2 = \ld(\cH)$ is the Littlestone dimension of $\cH$. \textit{Det} means a lower bound for all deterministic algorithms. \textit{Ada} means in this lower bound, the true graph is generated adaptively. \textit{Mul} means that there is a multiplicative factor of $k$ in the loss. 
$ ^{(*)}$ Some results can be extended to the agnostic case by standard techniques as discussed in Appendix~\ref{app:online-agn}.}
\vspace{-2em}
\end{table}\label{tb:results}

\vspace{-0.5em}
\subsection{Related work}

Our work is primarily related to the literature on strategic classification (e.g.~\citep{hardt2016strategic,dong2018strategic,chen2020learning,sundaram2021pac,zhang2021incentive,ahmadi2021strategic,ahmadi2023fundamental}). Due to space constraints, we refer the reader to Appendix~\ref{app:related} for a detailed review of related work. 
In the PAC learning, \cite{zhang2021incentive} and \cite{sundaram2021pac} both provided examples of hypothesis classes that are learnable in the standard setting (i.e., the hypothesis class with finite VC dimension) but not learnable in the strategic setting. However, it is unclear what property of manipulation structure leads to such a learnability gap.
\cite{ahmadi2023fundamental} studied regret in our post-manipulation feedback setting. However, they only focused on finite hypothesis classes and thus, did not provide an answer to learnability gaps.
\cite{shao2023strategic,lechner2023strategic} studied the case where the manipulation graph is unknown. However, \cite{lechner2023strategic} focused on a different loss function that penalizes manipulation even when the prediction at the manipulated feature is correct and \cite{shao2023strategic} provided guarantees solely for finite hypothesis classes. 


The literature on strategic learning distinguishes between ``gaming'' (e.g.~\citep{hardt2016strategic, ahmadi2021strategic}) and ``improving'' (e.g.~\citep{kleinberg2020classifiers, haghtalab2020maximizing,ahmadi2022classification}).
In the former, manipulations (such as cheating to pass an exam) do not alter the true qualifications of agents, whereas in the latter, manipulations (such as studying hard) do change these qualifications. Previous literature typically assumes that true qualifications are modeled as a function of feature vectors.
While our work can be classified within the ``gaming'' category (as manipulations do not alter the true qualifications of agents), we adopt a nuanced approach to modeling true qualifications, that is in line with the implications of \cite{ahmadi2023fundamental,shao2023strategic}. Namely, true qualifications are represented as a function of both original feature vectors (which reflect the current state of the agent) and manipulation power (which indicates the ability to change the current state). Consequently, while manipulations do not impact true qualifications, they can encompass both ``gaming'' actions (e.g., students cheating) and ``improving'' actions (e.g., studying hard).

\section{Model}

\subsection{Strategic Classification}
Throughout this work, we focus on the binary classification task in both adversarial online setting and PAC setting.
Let $\cX$ denote the feature vector space, $\cY=\{0,1\}$ denote the label space, and $\cH\subset \cY^\cX$ denote the hypothesis class.
In the strategic setting, an example $(x,y)$, referred to as an \textit{agent} in this context, consists of a pair of feature vector and label.
An agent will try to receive a positive prediction by modifying their feature vector to some reachable feature vectors. 
Following prior works (e.g., ~\cite{zhang2021incentive,lechner2022learning,ahmadi2023fundamental,lechner2023strategic}), we model the set of reachable feature vectors as a manipulation graph $G = (\cX, \cE)$, where the nodes are all feature vectors in $\cX$. 
For any two feature vectors $x, x'\in \cX$, there is a directed arc $(x,x')\in \cE$ from $x$ to $x'$ if and only if an agent with feature vector $x$ can manipulate to $x'$. 
We let $N_G(x) = \{x'|(x,x')\in \cE\}$ denote the out-neighborhood of $x$ in graph $G$ and $N_G[x] = \{x\}\cup N_G(x)$ denote the closed out-neighborhood. 
Let $k(G)$ denote the maximum out-degree of graph $G$. 
When a hypothesis $h\in \cY^\cX$ is implemented, if an agent $(x,y)$ is classified by $h$ as negative at their original feature vector $x$ but has a neighbor which is classified as positive by $h$, she will manipulate to such neighbor.
Otherwise, she will stay at her original feature vector, $x$. Either way, the label of the agent remains~$y$.
Formally,
\begin{definition}[Manipulated feature]
    Given a manipulation graph $G=(\cX,\cE)$ and a hypothesis $h$, the manipulated feature vector of $x$ induced by $(G,h)$ is
    \vspace{-0.5em}
    \begin{equation*}
        \pi_{G,h}(x) =
        \begin{cases}
            v \in \cX_{h,+}\cap N_G(x) & \text{if } h(x) = 0 \text{ and }\cX_{h,+}\cap N_G(x)\neq \emptyset,  \\
            x &\text{otherwise,}
        \end{cases}
    \end{equation*}
    \vspace{-0.5em}
    
    where $\cX_{h,+}=\{x'\in \cX| h(x')=1\}$ is the positive region by $h$, and the tie-breaking here is assumed to be arbitrary unless specified otherwise.
\end{definition}
The post-manipulation labeling of the agent with original feature vector $x$ by a hypothesis $h$ is determined by their prediction at the resulting manipulated feature vector, $\pi_{G,h}(x)$.
\begin{definition}
[Manipulation-induced labeling]
    Given a manipulation graph $G=(\cX,\cE)$ and a hypothesis~$h$, the labeling of $x$ induced by $(G,h)$ is defined as $\bar h_G(x) = h(\pi_{G,h}(x))$.
We define $\bar \cH_{G} = \{\bar h_G|h\in \cH\}$ to be the $(G,\cH)$-induced labeling class of hypothesis class $\cH$.
\end{definition}
When implementing a hypothesis $h$, the loss incurred at the agent $(x,y)$ is the misclassification error when the agent manipulates w.r.t.\ $h$. We define the strategic loss as follows.
\begin{definition}[Strategic loss]\label{def:lstr}
    Given the manipulation graph $G$, for any hypothesis $h\in \cY^\cX$, the strategic loss of $h$ at agent $(x,y)$ is
    \vspace{-0.5em}
    \begin{equation*}
        \lstr_G(h, (x,y)) := \1{\bar h_G(x)\neq y}\,.
    \end{equation*}
    \vspace{-0.5em}
\end{definition}
\vspace{-1em}

%


\subsection{Learning with Strategic Agents}
Let $G^\star$ denote the underlying true manipulation graph and let $k = k(G^\star)$ denote the maximum out-degree of~$G^\star$.
In the strategic setting, a sequence of agents arrives and interacts with the learner.
The learner perceives agents through the lens of the currently implemented hypothesis, and 
lacks the ability to counterfactually inquire about the manipulations previous agents would have adopted under a different hypothesis. 
More formally,
consider a sequence of $T$ agents $(x_1,y_1),\ldots,(x_T,y_T)$ arriving sequentially.
At each round $t$, an agent $(x_t,y_t)$ arrives. Then, the learner implements a hypothesis $h_t\in \cY^{\cX}$ and reveals $h_t$ to the agent.
After observing $h_t$, the agent responds by manipulating her feature vector from $x_t$ to $v_t =\pi_{G^\star,h_t}(x_t)$.
At the end of the round, the learner 
receives the result of her prediction on the manipulated feature, $\hat y_t = h_t(v_t)$,
and the true label, $y_t$. 
The interaction protocol (which repeats for $t=1,\ldots,T$) is described in Protocol~\ref{prot:interaction}.
\begin{protocol}[t]
    \caption{Learner-Agent Interaction}
    \label{prot:interaction}
        \begin{algorithmic}[1]
        \FOR{$t=1,\ldots, T$}
            \STATE The environment picks an agent $(x_t,y_t)$. \textit{// In the online setting, the agent is chosen adversarially, while in the PAC setting, the agent is sampled i.i.d.}
            \STATE The learner implements a hypothesis $h_t\in \cY^\cX$ and discloses $h_t$ to the agent
            \STATE The agent manipulates to $v_t=\pi_{G^\star,h_t}(x_t)$
            \STATE The learner receives prediction $\hat y_t = h_t(v_t)$,  the true label $y_t$ and possibly additional information about $x_t$ and/or $v_t$
        \ENDFOR
        \end{algorithmic}
    \end{protocol}
In this work, we consider both the PAC and adversarial online setting. 
\paragraph{PAC Learning}
In the PAC setting, the agents are sampled from an underlying distribution $\cD$ over $\cX\times \cY$.
For any data distribution $\cD$ over $\cX\times \cY$, the strategic population loss of $h$ is defined as
    \begin{equation*}
        \Lstr_{G^\star, \cD}(h) := \EEs{(x,y)\sim \cD}{\lstr_{G^\star}(h,(x,y))}\,.
    \end{equation*}
Similar to classical PAC learning, the learner's goal is to find a hypothesis $h$ with low strategic population loss $\Lstr_{G^\star,\cD}(h)$.
However, in contrast to classical PAC learning, where the learner obtains a batch of i.i.d. training data, 
the strategic setting involves an interactive training phase. 
The interactive phase is due to the fact that the learner's hypothesis influences the information the learner observes. For this reason, in the interactive phase, the learner sequentially modifies her hypothesis. 
As a result, we explore a scenario in which the learner interacts with $T$ i.i.d. agents (as illustrated in Protocol~\ref{prot:interaction}), and produces a predictor after completing $T$ rounds of interaction.
In the \textit{realizable} case, there exists a perfect classifier $h^\star\in \cH$ with strategic population loss $\Lstr_{G^\star,\cD}(h^\star) = 0$.

\paragraph{Online Learning}
In the online setting, the agents are generated by an adversary. The learner's goal is to minimize the total number of mistakes, $\sum_{t=1}^T\1{\hat y_t \neq y_t}$.
More specifically, the learner's goal is to minimize the Stackelberg regret\footnote{This is 
Stackelberg regret since the learner selects a hypothesis $h_t$ and assumes that the agent will best respond to $h_t$.}  with respect to the best hypothesis $h^\star\in \cH$ in hindsight, had the agents manipulated given $h^\star$:

\vspace{-1em}
\begin{equation*}
    \regret(T) := \sum_{t=1}^T \lstr_{G^\star}(h_t,(x_t,y_t)) - \min_{h^\star\in \cH} \sum_{t=1}^T\lstr_{G^\star}(h^\star,(x_t,y_t))
\end{equation*}
\vspace{-0.5em}

In the \textit{realizable} case, for the sequence of agents $(x_1,y_1),\ldots,(x_T,y_T)$,  there exists a perfect hypothesis $h^\star\in \cH$ that makes no mistakes, i.e., $\lstr_{G^\star}(h^\star,(x_t,y_t)) = 0$ for all $t=1,\ldots, T$.
In this case, the Stackelberg regret is reduced to the number of mistakes.


\subsection{Uncertainties about Manipulation Structure and Features}\label{sec:model-uncertainty}
As discussed in the related work (Appendix~\ref{app:related}), various settings based on information about the manipulation structure and features have been studied. 
In this work, we begin with the simplest fully informative setting, where the underlying manipulation graph $G^\star$ is known, and the original feature vector $x_t$ is revealed to the learner before the implementation of the hypothesis. 
We use this fully informative setting as our baseline model and investigate two forms of uncertainty: either the original feature vector $x_t$ is not disclosed beforehand or the manipulation graph $G^\star$ is unknown.
\begin{enumerate}[topsep = 2pt, itemsep=2pt, parsep=1pt, leftmargin = *]
    \item \textbf{Fully Informative Setting:} Known manipulation graph and known original feature beforehand. More specifically, the true manipulation graph $G^\star$ is known, and $x_t$ is observed before the learner implements~$h_t$.  This setting is the simplest and serves as our baseline, providing the learner with the most information.
    Some may question the realism of observing $x_t$ beforehand, but there are scenarios in which this is indeed the case. For instance, when customers inquire about mortgage rates and provide their credit score or income as input, or in college admissions where colleges can access sophomore-year grades. 
    In cases where each agent represents a subpopulation and known statistics about that population are available, pre-observation of $x_t$ can be feasible.

    \item \textbf{Post-Manipulation Feedback Setting:} 
    Here the underlying graph $G^\star$ is still known, but the original feature vector $x_t$ is not observable before the learner implements the hypothesis. Instead, either the original feature vector $x_t$ or the manipulated feature vector $v_t$ are revealed afterward (we consider both variants). It is worth noting that, since the learner can compute $v_t= \pi_{G^\star,h_t}(x_t)$ on her own given $x_t$, knowing $x_t$ is more informative than knowing~$v_t$. \cite{ahmadi2023fundamental} introduced this setting in the context of online learning problems and offered preliminary guarantees on Stackelberg regret, which depend on the cardinality of the hypothesis class $\cH$ and the maximum out-degree of graph~$G^\star$.
    
    \item \textbf{Unknown Manipulation Graph Setting:} 
    In real applications, the manipulation graph may not always be known. 
    We therefore
    introduce the unknown manipulation graph setting in which we relax the assumption of a known graph. Instead, we assume that the learner has prior knowledge of a certain graph class $\cG$; we consider both the realizable setting in which $G^\star\in\cG$ and the agnostic setting in which $\cG$ serves as a comparator class.
    When the true graph $G^\star$ is undisclosed, the learner cannot compute $v_t$ from $x_t$. 
    Thus, there are several options depending on when the learner gets the feedback: observing $x_t$ before implementing $h_t$ followed by $v_t$ afterward, observing $(x_t,v_t)$ after the implementation, and observing $x_t$ only (or $v_t$ only) after the implementation, arranged in order of increasing difficulty. In this work, we mainly focus on the most informative option, which involves observing $x_t$ beforehand followed by~$v_t$ afterward. However, we show that in the second easiest setting of observing $(x_t,v_t)$ afterward, no deterministic algorithm can achieve sublinear (in $|\cH|$ and $|\cG|$) mistake bound in the online setting.
\end{enumerate}

Note that the first two settings encompass all possible scenarios regarding the observation timing of $x_t$ and~$v_t$ when $\cG^\star$ is known, given that $v_t$ depends on the implemented hypothesis.
It is important to emphasize that our algorithms do not necessitate knowledge of the entire graph. Instead, we only require specific local manipulation structure information, specifically, knowledge of the out-neighborhood of the currently observed feature.


\section{Highlights of Some Main Results}

\subsection{PAC Learning in Fully Informative and Post-Manipulation Feedback Settings}
When the manipulation graph $G^\star$ is known, after obtaining an i.i.d. sample $S = ((x_1, y_1),\ldots, (x_T, y_T))$, we can obtain a hypothesis $\hat h$ by minimizing the empirical strategic loss,

\vspace{-1.5em}
\begin{align}
\restatableeq{\eqERM}{\hat h =\argmin_{h\in \cH} \sum_{t=1}^T \lstr_{G^\star}(h,(x_t,y_t))\,.    }{eq:erm}
\end{align}
\vspace{-1em}


Though in these two settings we consider the learner knows the entire graph $G^\star$, the learner only needs access to the out-neighborhood $N_{G^\star}(x_t)$ to implement ERM. If only the pair $(v_t, N_{G^\star}(v_t))$ is observed, we can obtain the knowledge of $(x_t, N(x_t))$ by implementing $h_t = \1{\cX}$ so that $v_t =x_t$.

We can then guarantee that $\hat h$ achieves low strategic population loss by VC theory. Hence, all we need to do 
for generalization is to bound the VC dimension of $\bar \cH_{G^\star}$. We establish that $\vcd(\bar \cH_{G^\star})$ can be both upper bounded (up to a logarithmic factor in $\vcd(\cH)$) and lower bounded by $\vcd(\cH) \log k$.
This implies that for any hypothesis class $\cH$ that is PAC learnable, it can also be strategically PAC learnable, provided that the maximum out-degree of $G^\star$ is finite. Achieving this only necessitates a sample size that is $\log k$ times larger and this logarithmic factor is necessary in some cases.
\begin{restatable}{theorem}{vcdinduced}\label{thm:vcd-induced}
For any hypothesis class $\cH$ and graph $G^\star$, we have $\vcd(\bar \cH_{G^\star}) \leq d\log(kd)$ where $d = \vcd(\cH)$. Moreover, for any $d,k>0$, there exists a class  $\cH$ with $\vcd(\cH) = d$ and a graph $G^\star$ with maximum out-degree $k$, such that $\vcd(\bar \cH_{G^\star})\geq d\log k$.
\end{restatable}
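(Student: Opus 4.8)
The plan is to start from a clean reformulation of the induced labeling and then run a Sauer--Shelah counting argument for the upper bound. Observe that, regardless of the tie-breaking rule, $\bar h_{G^\star}(x)=1$ precisely when $h$ labels some vertex of the closed neighborhood positively: if $h(x)=1$ there is no manipulation and $\bar h_{G^\star}(x)=1$; if $h(x)=0$ but a positive neighbor exists the agent reaches it; and otherwise $\bar h_{G^\star}(x)=0$. Thus $\bar h_{G^\star}(x)=\bigvee_{x'\in N_{G^\star}[x]}h(x')$. For the upper bound, suppose $S$ with $|S|=m$ is shattered by $\bar\cH_{G^\star}$ and set $T=\bigcup_{x\in S}N_{G^\star}[x]$, so that $|T|\le m(k+1)$. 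Because $\bar h_{G^\star}|_S$ depends only on $h|_T$, the number of labelings of $S$ realized by $\bar\cH_{G^\star}$ is at most the number realized by $\cH$ on $T$; shattering forces the former to equal $2^m$, while Sauer--Shelah bounds the latter by $(e|T|/d)^d\le (em(k+1)/d)^d$. Hence $2^m\le (em(k+1)/d)^d$, i.e.\ $m\le d\log_2\!\big(em(k+1)/d\big)$. Isolating $m$ via the elementary device $\ln x\le \alpha x-1-\ln\alpha$ (optimizing $\alpha$) removes the stray $\log m$ term and yields $m=O(d\log(kd))$, matching the claim; I expect this last inequality resolution to be purely mechanical, so the only real content of the upper bound is the neighborhood-OR reformulation together with the remark that restricting $h$ to $T$ loses no information about $\bar h_{G^\star}|_S$.

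For the lower bound I would first treat $d=1$ with an explicit gadget. Let $m=\lfloor\log_2 k\rfloor+1$ and $S=\{x_1,\ldots,x_m\}$. Introduce one sink vertex $z_A$ for each subset $A\subseteq S$, and add an arc $x_i\to z_A$ exactly when $x_i\in A$; the sinks have no outgoing arcs. Each $x_i$ then has out-degree $2^{m-1}\le k$, so the maximum out-degree is at most $k$. Take $\cH$ to be the class of singletons $\{\,\1{x=z}\mid z\in\cX\,\}$, which has VC dimension $1$ since two distinct points can never both be labeled $1$. For $h=\1{x=z_A}$ we get $\bar h_{G^\star}(x_i)=\1{z_A\in N_{G^\star}[x_i]}=\1{x_i\in A}$, so letting $A$ range over all subsets realizes every labeling of $S$. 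Therefore $\bar\cH_{G^\star}$ shatters $S$ and $\vcd(\bar\cH_{G^\star})\ge m\ge \log_2 k=\vcd(\cH)\cdot\log_2 k$.

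To reach general $d$ I would take $d$ vertex-disjoint copies of this gadget, with no arcs between copies, and let $\cH$ be all \emph{transversal} singletons, i.e.\ hypotheses positive at exactly one vertex in each copy. On the $j$-th copy such a hypothesis acts as a singleton and therefore independently realizes an arbitrary subset $A_j$ of that copy's shattered set $S_j$; since the copies do not interact, $\bar\cH_{G^\star}$ shatters $\bigsqcup_{j=1}^d S_j$, whose size is $dm\ge d\log_2 k$. It remains to certify $\vcd(\cH)=d$: because hypotheses decompose independently across copies and each copy's singleton class has VC dimension $1$, a product-shattering argument shows that $\cH$ shatters a set iff it meets every copy in at most one point, giving $\vcd(\cH)=d$ exactly. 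I expect this simultaneous bookkeeping to be the main obstacle of the whole proof: the gadget must inflate the induced dimension by a $\log k$ factor in each copy while leaving the base VC dimension pinned at $d$, and one must check that the transversal class neither over-shatters (which would push $\vcd(\cH)$ above $d$) nor under-realizes the $2^{dm}$ patterns on $\bigsqcup_j S_j$.
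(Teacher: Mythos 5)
Your argument is correct. The upper bound is essentially the paper's proof: the observation that $\bar h_{G^\star}(x)=\bigvee_{x'\in N_{G^\star}[x]}h(x')$ depends only on $h$ restricted to the union of closed neighborhoods (of size at most $m(k+1)$), followed by Sauer--Shelah and the standard resolution of $2^m\le (em(k+1)/d)^d$ into $m=O(d\log(kd))$. For the lower bound you use a genuinely different gadget. The paper fixes a disjoint union of stars with $\log k$ centers, each with $k$ leaves, and encodes the $2^{\log k}$ labelings in the \emph{hypotheses}: $h_j$ marks leaf $x_{i,j}$ positive iff the $i$-th bit of $j$ is $1$, so the induced labeling of the centers under $h_j$ is $\mathrm{bin}(j)$; it then needs a short case analysis to verify $\vcd(\cH)=1$. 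You instead encode the subsets in the \emph{graph}: one sink $z_A$ per subset $A$ of the $m\approx\log_2 k$ sources, with $x_i\to z_A$ iff $x_i\in A$, and take $\cH$ to be singletons, for which $\vcd(\cH)=1$ is immediate and $\bar{\1{z_A}}_{G^\star}(x_i)=\1{x_i\in A}$ shatters the sources. The two constructions are dual encodings of the same subset structure; yours buys a trivially verified base VC dimension at the cost of a domain and hypothesis class of size $\Theta(k)$ rather than $|\cH|=k$ with out-degree exactly $k$. Your $d$-fold extension via transversal singletons, including the pigeonhole check that no $d+1$ points are shattered while every transversal of size $d$ is, is also sound and matches the paper's ``$d$ independent copies'' step.
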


\begin{proof}[sketch of lower bound]
Let us start with the simple case of $d=1$. 
Consider $(k+1)\log k$ nodes $\cX =\{x_{i,j}|i = 1,\ldots,\log k, j=0,\ldots, k\}$, where there is an arc from $x_{i,0}$ 
to each of $\{x_{i,j}|j=1,\ldots, k\}$ for all $i\in [\log k]$ in $G^\star$. 
For $j=1,\ldots,k$, let $\text{bin}(j)$ denote the $\log k$-bit-binary representation of $j$. 
Let hypothesis $h_j$ be the hypothesis that labels $x_{i,j}$ by $\text{bin}(j)_i$ and all the other nodes by $0$.
Let $\cH = \{h_j|j\in [k]\}$.
Note that $\vcd(\cH) = 1$. This is because for any two points $x_{i,j}$ and $x_{i',j'}$, if $j\neq j'$, no hypothesis will label them as positive simultaneously; if $j=j'$, they can only be labeled as $(\text{bin}(j)_i,\text{bin}(j)_{i'})$ by $h_j$ and $(0,0)$ by all the other hypotheses.
However, $\bar \cH_{G^\star}$ can shatter $\{x_{1,0},x_{2,0},\ldots,x_{\log k, 0}\}$ since $h_j$'s labeling over  $\{x_{1,0},x_{2,0},\ldots,x_{\log k, 0}\}$ is $\text{bin}(j)$. Hence, we have $\vcd(\bar \cH_{G^\star}) = \log k$. We can then extend the example to the case of $\vcd(\cH) = d$ by making $d$ independent copies of $(\mathcal X,\cH, G^\star)$.
The detailed proof of Theorem~\ref{thm:vcd-induced} appears in Appendix~\ref{app:fullInfo}.
\end{proof}
\vspace{-1.75em}
\begin{remark}
    We have similar results for Littlestone dimension as well. More specifically, that $\ld(\bar \cH_{G^\star}) \leq \ld(\cH)\cdot\log k$ for any $\cH, G^\star$ (Theorem ~\ref{thm:fi-online}). 
    We also show that $\ld(\bar \cH_{G^\star}) \geq \ld(\cH)\cdot\log k$ using the same example we construct for VC dimension (Theorem~\ref{thm:fi-online-lb}). In contrast with the PAC setting where we derive the sample complexity bounds by a combinatorial analysis of the VC dimension, in the online setting we obtain the upper bound using an explicit algorithm. 
\end{remark}

\subsection{PAC Learning in the Unknown Manipulation Graph Setting}\label{sec:highlight-ug-pac}
Without the knowledge of $G^\star$ (either the entire graph or the local structure), we can no longer implement ERM.
For any implemented predictor $h$, if $h(x)=0$ and $N_{G^\star}(x)\cap \cX_{h,+}$ is non-empty, 
we assume that the agent will break ties randomly by manipulating to\footnote{This is not necessary. All we need is a fixed tie-breaking way that is consistent in both training and test time.} $\pi_{G^\star,h}(x)\sim \Unif(N_{G^\star}(x)\cap \cX_{h,+})$.
Since each agent can only manipulate within her neighborhood, we define a graph loss $\lgraph$ for graph $G$ as  the 0-1 loss of neighborhood prediction, 

\vspace{-1em}
\begin{equation*}
    \lgraph (G) := \Pr_{(x,y)\sim \cD}(N_{G^\star}(x)\neq N_{G}(x))\,.
\end{equation*}
\vspace{-1em}

Then we upper bound the strategic loss by the sum of the graph loss of $G$ and the strategic loss under the graph $G$, i.e.,

\vspace{-1em}
\begin{equation}
    \Lstr_{G^\star,\cD}(h) \leq \lgraph (G) + \Lstr_{G,\cD}(h), \text{ for all } G\,.\label{eq:l-decompose}
\end{equation}
\vspace{-1em}

This upper bound implies that if we can find a good approximate graph, we can then solve the problem by running ERM over the strategic loss under the approximate graph.
The main challenge of finding a good approximation of $G^\star$ is that \textbf{we cannot obtain an unbiased estimate of $\lgraph (G)$}.
This is because we cannot observe $N_{G^\star}(x_t)$. The only information we could obtain regarding $G^\star$ is a random neighbor of $x_t$, i.e., $v_t \sim \Unif(N_{G^\star}(x_t)\cap \cX_{h_t,+})$. While one may think of finding a consistent graph whose arc set contains all arcs of $(x_t,v_t)$ in the historical data, such consistency is not sufficient to guarantee a good graph. For example, considering a graph $G$ whose arc set is a superset of $G^\star$'s arc set, then $G$ is consistent but its graph loss could be high.
We tackle this challenge through a regulation term based on the empirical degree of the graph. 

In the realizable case, where there exists a perfect graph and a perfect hypothesis, i.e., there exists $G^\star$ such that $\lgraph(G^\star)=0$ and $\Lstr_{G^\star,\cD}(h^\star)=0$, 
our algorithm is described as follows.
\begin{itemize}[topsep = 2pt, itemsep=3pt, parsep=1pt, leftmargin = *]
    \item \underline{Prediction:} At each round $t$, after observing $x_t$, we implement the hypothesis $h(x)=\1{x\neq x_t}$, which labels all every point as positive except $x_t$. This results in a manipulated feature vector $v_t$ sampled uniformly from $N_{G^\star}(x_t)$. 

    \item \underline{Output:} 
    Let $(\hat G, \hat h)$ be a pair that holds

    \vspace{-2em}
    \begin{align}
    (\hat G, \hat h) &\in \argmin_{(G,h)\in (\cG,\cH)} \sum_{t=1}^T \abs{N_G(x_t)}\notag \\
    \text{s.t.}\quad &\sum_{t=1}^T \1{v_t\notin N_G(x_t)} = 0\,.\quad \label{eq:lossG} \\
    &\sum_{t=1}^T \1{\bar{h}_G(x_t)\neq y_t} = 0\,.\label{eq:lossGH}
\end{align}
\vspace{-1.5em}

Notice that Eq~\eqref{eq:lossG} guarantees $G$ is consistent with all arcs of $(x_t,v_t)$ in the historical data and Eq~\eqref{eq:lossGH} guarantees that $h$ has zero empirical strategic loss when the manipulation graph is $G$. $(\hat G, \hat h)$ is the graph-hypothesis pair that satisfies Eqs~\eqref{eq:lossG} and~\eqref{eq:lossGH} with \textbf{minimal empirical degree}.
Finally, we output $\hat h$.
\end{itemize}
Next, we derive sample complexity bound to guarantee that $\hat h$ will have a small error.

\begin{restatable}{theorem}{ugPacRel}\label{thm:ug-pac-rel}
For any hypothesis class $\cH$ with $\vcd(\cH)=d$, the underlying true graph $G^\star$ with maximum out-degree at most $k$, finite graph class $\cG$ in which all graphs have maximum out-degree at most $k$, any $(\cG,\cH)$-realizable data distribution, and any $\epsilon,\delta\in (0,1)$, with probability at least $1-\delta$ over $S\sim \cD^T$ and $v_{1:T}$ where $T = O(\frac{d \log(kd)\log(1/\epsilon) + \log|\cG|\cdot (\log(1/\epsilon) + k) + k\log(1/\delta)}{\epsilon})$, the output $\hat h$ satisfies $\Lstr_{G^\star,\cD}(\hat h)\leq \epsilon$.
\end{restatable}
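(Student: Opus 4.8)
The plan is to control $\Lstr_{G^\star,\cD}(\hat h)$ through the decomposition in Eq~\eqref{eq:l-decompose}, namely $\Lstr_{G^\star,\cD}(\hat h) \le \lgraph(\hat G) + \Lstr_{\hat G,\cD}(\hat h)$, so that it suffices to bound each of the two terms by $\epsilon/2$ with high probability. The starting observation is that the target pair $(G^\star,h^\star)$ is feasible for the program defining $(\hat G,\hat h)$: since $v_t\sim\Unif(N_{G^\star}(x_t))$ we always have $v_t\in N_{G^\star}(x_t)$, so Eq~\eqref{eq:lossG} holds, and realizability gives $\bar h^\star_{G^\star}(x_t)=y_t$ for all $t$, so Eq~\eqref{eq:lossGH} holds. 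Consequently the minimal-degree objective yields the key inequality $\sum_t |N_{\hat G}(x_t)| \le \sum_t |N_{G^\star}(x_t)|$, which, writing $\Delta^+_G(x)=|N_G(x)\setminus N_{G^\star}(x)|$ and $\Delta^-_G(x)=|N_{G^\star}(x)\setminus N_G(x)|$, rearranges to $\sum_t \Delta^+_{\hat G}(x_t) \le \sum_t \Delta^-_{\hat G}(x_t)$.

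For the strategic-loss term $\Lstr_{\hat G,\cD}(\hat h)$ I would invoke realizable uniform convergence over the finite union, indexed by $G\in\cG$, of the loss classes $\{(x,y)\mapsto \lstr_G(h,(x,y)) : h\in\cH\}$. Each such class has VC dimension $\vcd(\bar\cH_G)\le d\log(kd)$ by Theorem~\ref{thm:vcd-induced}, so the growth function of the union is at most $|\cG|\cdot(eT/(d\log(kd)))^{d\log(kd)}$; since Eq~\eqref{eq:lossGH} guarantees that $\hat h$ has zero empirical strategic loss under $\hat G$, the standard realizable generalization bound gives $\Lstr_{\hat G,\cD}(\hat h)\le\epsilon/2$ once $T\gtrsim \frac{1}{\epsilon}(d\log(kd)\log(1/\epsilon)+\log|\cG|+\log(1/\delta))$, matching the first two terms of the claimed complexity.

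It remains to bound $\lgraph(\hat G)=\Pr_x(N_{\hat G}(x)\ne N_{G^\star}(x))$, which I split into under-coverage and over-coverage. For under-coverage, consider the observable $0/1$ loss $m_G(x,v)=\1{v\notin N_G(x)}$; it is realizable ($m_{G^\star}\equiv 0$), Eq~\eqref{eq:lossG} forces $\hat G$ to have zero empirical $m$-loss, and a realizable finite-class bound gives $\E[m_{\hat G}]\le\epsilon''$ for $T\gtrsim \frac{1}{\epsilon''}(\log|\cG|+\log(1/\delta))$. Since $\E[m_G\mid x]\ge 1/k$ whenever $N_{G^\star}(x)\not\subseteq N_G(x)$, and since $\Delta^-_G(x)=|N_{G^\star}(x)|\,\E[m_G\mid x]\le k\,\E[m_G\mid x]$, choosing $\epsilon''=\Theta(\epsilon/k)$ bounds both the under-coverage probability $\Pr_x(N_{G^\star}(x)\not\subseteq N_{\hat G}(x))$ and the expected under-count $\E[\Delta^-_{\hat G}]$ by $O(\epsilon)$, at the cost of the $\frac{k\log|\cG|+k\log(1/\delta)}{\epsilon}$ terms.

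The main obstacle is over-coverage, because the extra arcs of $\hat G$ are never observed directly, and this is exactly where the degree regularizer does the work. I would argue by contradiction: if $\E[\Delta^+_{\hat G}]>\epsilon/4$, then a \emph{multiplicative} (relative Chernoff/Bernstein) concentration applied uniformly over $\cG$ — rather than additive Hoeffding, which for range-$k$ counts would only yield the inferior $k^2/\epsilon^2$ rate — shows that the empirical over-count $\frac1T\sum_t\Delta^+_{\hat G}(x_t)$ is at least $\epsilon/8$ with high probability once $T\gtrsim \frac{k}{\epsilon}\log(|\cG|/\delta)$. On the other hand, the same relative concentration applied to $\Delta^-$, together with the bound $\E[\Delta^-_{\hat G}]=O(\epsilon)$ from the previous step, makes the empirical under-count $\frac1T\sum_t\Delta^-_{\hat G}(x_t)$ strictly smaller than $\epsilon/8$, contradicting the key inequality $\sum_t\Delta^+_{\hat G}(x_t)\le\sum_t\Delta^-_{\hat G}(x_t)$. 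Hence $\E[\Delta^+_{\hat G}]\le\epsilon/4$, and the over-coverage probability is at most $\E[\Delta^+_{\hat G}]\le\epsilon/4$. Combining the three bounds via a union bound over the failure events and collecting the sample-size requirements yields the stated $T$; the delicate point throughout is that the realizable structure — every relevant count vanishes for $G^\star$ — is what licenses the relative concentration bounds and hence the fast $1/\epsilon$ (rather than $1/\epsilon^2$) dependence.
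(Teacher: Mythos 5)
Your proposal is correct and follows essentially the same route as the paper: the same decomposition $\Lstr_{G^\star,\cD}(\hat h)\le \lgraph(\hat G)+\Lstr_{\hat G,\cD}(\hat h)$, the same composite-class VC bound for the hypothesis term, the same key inequality $\sum_t|N_{\hat G}(x_t)\setminus N_{G^\star}(x_t)|\le\sum_t|N_{G^\star}(x_t)\setminus N_{\hat G}(x_t)|$ extracted from the degree regularizer, and the same fast-rate (relative/Bernstein-type) concentration of $\1{v_t\notin N_G(x_t)}$ that the realizable structure licenses. The only difference is bookkeeping: the paper first bounds the \emph{empirical} graph loss (Claims~\ref{cla:neighborhood-loss} and~\ref{cla:empirical-neighbor}) and then transfers to the population with one Chernoff step, whereas you bound the population under-count first and obtain the over-count by contradiction via two-sided relative concentration — a cosmetic reorganization yielding the same sample complexity.
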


\vspace{-0.5em}
\begin{proof}[sketch]
Finding a graph-hypothesis pair satisfying Eq~\eqref{eq:lossGH} would guarantee small $\Lstr_{G,\cD}(h)$ by uniform convergence. 
According to the decomposition of strategic loss in Eq~\eqref{eq:l-decompose}, it is sufficient to show the graph loss $\lgraph(\hat G)$ of $\hat G$ is low. 
Note that satisfying Eq~\eqref{eq:lossG} does not guarantee low graph loss. As aforementioned, a graph $G$ whose arc set is a superset of $G^\star$'s arc set satisfies Eq~\eqref{eq:lossG}, but its graph loss could be high. 
Hence, returning a graph-hypothesis pair 
that satisfies Eqs~\eqref{eq:lossG} and~\eqref{eq:lossGH} is not enough and it is crucial to have empirical degree as a regularize.

Now we show the graph loss $\lgraph(\hat G)$ is small.
First, the 0-1 loss of the neighborhood prediction can be decomposed as

\vspace{-1em}
\begin{equation*}
    \1{N_{G^\star}(x)\neq N_{G}(x)}
    \leq \abs{N_{G}(x)\setminus N_{G^\star}(x)} + \abs{N_{G^\star}(x)\setminus N_{G}(x)}\,.
\end{equation*}
\vspace{-1em}

For any $G$ satisfying Eq~\eqref{eq:lossG}, $G$ will have a small $\EEs{x}{\abs{N_{G^\star}(x)\setminus N_{G}(x)}}$. This is because

\vspace{-1em}
\begin{align*}
    \abs{N_{G^\star}(x)\setminus N_{G}(x)} 
    =\abs{N_{G^\star}(x)}\Pr_{v\sim \Unif(N_{G^\star}(x)) }( v\notin N_{G}(x_t))
    \leq k\Pr_{v\sim \Unif(N_{G^\star}(x)) }( v\notin N_{G}(x))\,.
\end{align*}
\vspace{-1em}

Note that $\1{v_t\notin N_{G}(x_t)}$ is an unbiased estimate of $\Pr_{x, v\sim \Unif(N_{G^\star}(x))}(v\notin N_G(x))$. 
Hence, for any graph $G$ satisfying Eq~\eqref{eq:lossG}, $\Pr_{x, v\sim \Unif(N_{G^\star}(x))}(v\notin N_G(x))$ is small by uniform convergence and thus, $\EEs{x}{\abs{N_{G^\star}(x)\setminus N_{G}(x)}}$ is small.

Then we utilize the ``minimal degree'' to connect $\EEs{x}{\abs{N_{G}(x)\setminus N_{G^\star}(x)}}$ and $\EEs{x}{\abs{N_{G^\star}(x)\setminus N_{G}(x)}}$.
For any graph with expected degree $\EEs{x}{|N_{G}(x)|} \leq \EEs{x}{|N_{G^\star}(x)|}$, by deducting $\EEs{x}{|N_{G^\star}(x)\cap N_{G}(x)|}$ from both sides, we have
\begin{equation}
    \EEs{x}{|N_{G}(x)\setminus N_{G^\star}(x)|}\leq \EEs{x}{|N_{G^\star}(x)\setminus N_{G}(x)|}\,,\label{eq:neighborhood-difference}
\end{equation}
Then by minimizing the empirical degree, we can guarantee $\EEs{x}{|N_{G}(x)\setminus N_{G^\star}(x)|}\lesssim \EEs{x}{|N_{G^\star}(x)\setminus N_{G}(x)|}$. The formal proof can be found in Appendix~\ref{app:unknownGraph}.
\end{proof}
So, our algorithm can find a good hypothesis and needs at most $k\log|\cG|\epsilon^{-1}$ more training data. However, as we show in the next theorem, the $\log|\cG|$ factor is necessary.

\begin{restatable}{theorem}{thmUgPacRelLb}\label{thm:ug-pac-rel-lb}
There exists a hypothesis class $\cH$ with $\vcd(\cH) = 1$ and a graph class $\cG$ in which all graphs have a maximum out-degree of $1$. For any algorithm $\cA$, there exists a data distribution $\cD$ such that for any i.i.d. sample of size $T$, there exists a graph $G \in \cG$ consistent with the data history such that when $T \leq \frac{\log|\cG|}{\epsilon}$, we have $\Lstr_{G,\cD}(\hat{h}) \geq \epsilon$.
\end{restatable}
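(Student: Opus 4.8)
The plan is to reduce the strategic problem to a pure \emph{neighbor-learning} task and then invoke the classical realizable $\Omega(\mathrm{VC}/\epsilon)$ lower bound, with a graph class whose neighbor-function complexity is $\log|\cG|$ (and $\vcd(\cH)$ and the degree both kept at $1$, so that none of the hardness can come from the fully-informative terms). First I would observe that in this most-informative unknown-graph setting the learner can, upon seeing a sampled $x_t$, recover its unique out-neighbor $\nu^\star(x_t)$ (the single element of $N_{G^\star}(x_t)$) by implementing the probing hypothesis $h_t=\1{x\neq x_t}$, so that $x_t$ manipulates to its neighbor and $v_t=\nu^\star(x_t)$ is revealed. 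Conversely, on a \emph{fresh} negative agent $(x,0)$ with $\hat h(x)=0$ the strategic loss of the output is exactly $\1{\hat h(\nu^\star(x))=1}$, i.e. it is incurred precisely when $\hat h$ labels the agent's unknown true neighbor as positive. Thus, via the decomposition in Eq~\eqref{eq:l-decompose}, controlling $\Lstr_{G^\star,\cD}(\hat h)$ forces the learner to predict $\nu^\star$ on fresh vertices, and the whole difficulty is that $\nu^\star$ is unknown.

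Next I would build the hard instance as a product of $m=\log_2|\cG|$ independent two-choice ``gadgets.'' Each gadget $i$ contributes a probe vertex $s_i$ carrying an agent, and in $\cG$ the arc out of $s_i$ may go to either of two candidate targets, so that $\cG$ realizes all $2^m$ neighbor assignments on $\{s_1,\dots,s_m\}$; hence $\log|\cG|=m$ and every graph has out-degree $1$. I would place mass $\Theta(\epsilon/m)$ on each probe vertex and the remaining $1-\Theta(\epsilon)$ mass on a single ``safe'' vertex that is classified correctly under every graph, and arrange the targets and labels so that getting gadget $i$'s neighbor wrong costs its full mass. This is exactly the configuration underlying the classical realizable lower bound for a class of VC-dimension $m$: the neighbor of each probe is an independent hidden bit spread over a low-probability vertex.

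The counting step is then standard: with $T\le \log|\cG|/\epsilon=m/\epsilon$ i.i.d.\ draws and per-probe mass $\Theta(\epsilon/m)$, a constant fraction of the $m$ probe vertices are never sampled, and their out-arcs are unconstrained among the graphs consistent with the transcript. I would let the adversary choose, \emph{adaptively}, a consistent $G\in\cG$ that routes each unobserved probe's neighbor into $\hat h$'s positive region (this is the ``Ada'' aspect: $G$ is fixed only after $\hat h$), forcing an error on every unobserved probe and hence $\Lstr_{G,\cD}(\hat h)\ge\epsilon$. Equivalently, one can put the uniform prior on $\cG$ and apply Yao's principle to get expected loss $\ge\epsilon$ and thus a single bad $G$; the adaptive/consistent-graph phrasing merely upgrades ``in expectation over $\cG$'' to ``for the realized sample.''

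The main obstacle is keeping $\vcd(\cH)=1$ while the graph class carries all of the $\log|\cG|$ complexity. The naive way to create the negative-source tension — giving each gadget its own positive target (the non-neighbor) — makes the zero-loss hypotheses a \emph{transversal} family that shatters $\{s_1,\dots,s_m\}$ and so has VC-dimension $m$, not $1$. To prevent this I would decouple hypothesis complexity from graph complexity by taking $\cH$ to consist of hypotheses with pairwise-disjoint (e.g.\ singleton) supports, so that no two vertices can both be made positive and thus $\vcd(\cH)=1$, and then verify that for each graph in the hard family some such hypothesis is simultaneously zero-loss (realizability) while mislabeling an unobserved probe's neighbor still forces strategic loss. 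Reconciling these two demands — VC-$1$ on one side and per-gadget unavoidable error on the other, given that out-degree $1$ means any single observed arc reveals a whole neighbor identity — is the delicate part of the argument; once it is in place, the rest is the standard realizable VC lower bound applied to the neighbor-function class.
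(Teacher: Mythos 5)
Your skeleton matches the paper's: probe each sampled vertex (via $h_t=\1{x\neq x_t}$) to reveal its unique out-neighbor, note that with $T\le \log|\cG|/\epsilon$ a constant fraction of the low-mass probe vertices are never sampled, and let the adversary choose the consistent graph adaptively \emph{after} seeing $\hat h$. But the step you explicitly defer as ``the delicate part'' is where all the content of the proof lives, and the repair you float does not close it. First, two-candidate gadgets with pairwise-disjoint \emph{singleton}-support hypotheses fail outright: a singleton-support target $h^\star$ is all-negative on all but one gadget, so the all-negative output errs on at most one gadget and achieves loss $\Theta(\epsilon/m)$ rather than $\epsilon$; and even on a gadget where $h^\star$ is positive on one of the two candidates, a learner that happens to agree with $h^\star$ there escapes both routings, so with only two candidates the forced error per unseen gadget is at best a fair coin. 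Second, your error mechanism is one-sided (``route the neighbor into $\hat h$'s positive region''); the argument also needs the false-negative branch --- when $\hat h$ is all-negative on a gadget the adversary must be able to add the arc into the target's positive set so that the true label becomes $1$ --- otherwise the all-negative hypothesis escapes entirely.

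The paper resolves the tension by making the hypothesis supports \emph{transversals} of the gadgets rather than singletons: with $n$ gadgets, probe $x_{0j}$ and candidates $x_{1j},\dots,x_{nj}$ in gadget $j$, it sets $\cH=\{\1{X_i}\}_{i\in[n]}$ where $X_i=\{x_{i1},\dots,x_{in}\}$ picks one candidate from every gadget. The positive sets are pairwise disjoint, so $\vcd(\cH)=1$, yet $h^\star=\1{X_{i^\star}}$ is positive on exactly one candidate in \emph{every} gadget; the graph class lets each probe point its single arc to any one of its $n$ candidates or to none, giving $|\cG|=(n+1)^n$. On an unseen gadget the adversary then forces an error whatever $\hat h$ does --- no edge if $\hat h(x_{0j})=1$, an edge to a wrongly-positive candidate if one exists, and an edge to $x_{i^\star j}$ (making the true label $1$) if $\hat h$ is all-negative there --- unless $\hat h$ labels exactly $x_{i^\star j}$ positive in that gadget; since the transcript is uninformative about $i^\star$, averaging over $i^\star$ defeats $\hat h$ on essentially all unseen gadgets. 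Until you exhibit a construction with these properties, the proposal is a plan rather than a proof; note also that the paper's construction only yields hardness for $T\lesssim n/\epsilon$, i.e.\ $\tilde\Omega(\log|\cG|)/\epsilon$, so the clean $\log_2|\cG|/\epsilon$ threshold your two-candidate gadgets aim for is not what is actually established.
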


\textbf{Agnostic case } 
Similar to the realizable case, the challenge lies in the fact that  $\lgraph (G)$ is not estimable. Moreover, in the agnostic case 
there might not exist a consistent graph satisfying Eqs~\eqref{eq:lossG} and \eqref{eq:lossGH} now. Instead, we construct the following alternative loss function as a proxy:

\vspace{-0.5em}
\begin{equation*}
    \lproxy(G) := 2\mathbb{E}_{x}[P_{v\sim \Unif(N_{G^\star})(x)}(v\notin N_G(x))] + \frac{1}{k}\mathbb{E}[|N_G(x)|] - \frac{1}{k}\mathbb{E}[|N_{G^\star}(x)|]\,.
\end{equation*}


Note that in this proxy loss, $2\cdot\1{v_t\notin N_G(x_t)}+\frac{1}{k}N_G(x_t)$ is an unbiased estimate of the first two terms, 
and the third term is a constant. Hence, we can find a graph with low proxy loss by minimizing its empirical value.
We then show that this proxy loss is a good approximation of the graph loss when $k$ is small.
\begin{restatable}{lemma}{lmmproxyLoss}\label{lmm:proxy-loss}
    Suppose that $G^\star$ and all graphs $G\in \cG$ have maximum out-degree at most $k$. Then, we have
    \vspace{-0.5em}
    \begin{equation*}
        \frac{1}{k} \lgraph(G) \leq \lproxy(G) \leq 3 \lgraph(G)\,.
    \end{equation*}
    \vspace{-0.5em}
\end{restatable}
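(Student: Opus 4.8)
The plan is to prove both inequalities \emph{pointwise}, i.e.\ for every fixed $x$ in the support of $\cD$, and then integrate over $x\sim\cD$; since every term in both $\lgraph$ and $\lproxy$ is the expectation of a per-$x$ quantity, pointwise bounds suffice. Fix $x$ and abbreviate $A = N_{G^\star}(x)$ and $B = N_G(x)$, so that $|A|,|B|\le k$. The first term of the proxy is $\Pr_{v\sim\Unif(A)}(v\notin B) = |A\setminus B|/|A|$ (with the convention that this ratio is $0$ when $A=\emptyset$, which is consistent with $A\setminus B=\emptyset$), while the last two terms combine into $\frac1k(|B|-|A|)$. The key algebraic step I would use is the identity $|B|-|A| = |B\setminus A| - |A\setminus B|$, which lets me write the per-$x$ proxy loss as
$$ 2\frac{|A\setminus B|}{|A|} + \frac1k\bigl(|B\setminus A| - |A\setminus B|\bigr), $$
and compare it against the per-$x$ neighborhood loss $\1{A\neq B} = \1{|A\setminus B|+|B\setminus A|\ge 1}$.

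For the upper bound $\lproxy(G)\le 3\lgraph(G)$, observe that when $A=B$ the displayed expression is $0$, matching $\1{A\neq B}=0$. When $A\neq B$ I would bound each piece crudely: $|A\setminus B|/|A|\le 1$ contributes at most $2$, and $|B\setminus A|\le|B|\le k$ gives at most $1$ from $\frac1k|B\setminus A|$, while $-\frac1k|A\setminus B|\le 0$ is simply dropped; hence the per-$x$ proxy loss is at most $3 = 3\cdot\1{A\neq B}$.

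For the lower bound $\frac1k\lgraph(G)\le\lproxy(G)$, the one place I must invoke $|A|\le k$ is in the first term, where $|A\setminus B|/|A|\ge |A\setminus B|/k$. Substituting this collapses the per-$x$ proxy loss to at least $\frac1k\bigl(2|A\setminus B| + |B\setminus A| - |A\setminus B|\bigr) = \frac1k\bigl(|A\setminus B|+|B\setminus A|\bigr)$, which is $\ge \frac1k$ whenever $A\neq B$ and equals $0$ when $A=B$; either way this is $\ge \frac1k\1{A\neq B}$.

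The degenerate case $A=\emptyset$ deserves a separate glance only because the sampling $v\sim\Unif(A)$ is undefined; under the convention above the per-$x$ proxy loss reduces to $\frac1k|B|$, and one checks directly that $\frac1k\1{B\neq\emptyset}\le \frac1k|B|\le 3\,\1{B\neq\emptyset}$, so both bounds persist. Taking expectations over $x\sim\cD$ of the pointwise inequalities then yields the lemma. I do not expect a genuine obstacle here beyond careful bookkeeping with the set differences; the only substantive idea is the rewriting via $|B|-|A|=|B\setminus A|-|A\setminus B|$ combined with the uniform degree bound $|A|\le k$, which is precisely where the factors $k$ and $1/k$ enter.
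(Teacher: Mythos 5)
Your proof is correct and follows essentially the same route as the paper's: the same key identity $|N_G(x)|-|N_{G^\star}(x)| = |N_G(x)\setminus N_{G^\star}(x)| - |N_{G^\star}(x)\setminus N_G(x)|$, the same use of $|N_{G^\star}(x)|\le k$ for the lower bound, and the same term-by-term bounding by the indicator for the upper bound. The only cosmetic difference is that you argue pointwise in $x$ before taking expectations (and handle the $N_{G^\star}(x)=\emptyset$ case explicitly), whereas the paper works directly with the expectations.
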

\vspace{-1em}

By minimizing the proxy loss $\lproxy(\cdot)$, we can obtain an approximately optimal graph up to a multiplicative factor of $k$ due to the gap between the proxy loss and the graph loss. See details and proofs in Appendix~\ref{app:unknownGraph}.

\paragraph{Application of the Graph Learning Algorithms in Multi-Label Learning }
Our graph learning algorithms have the potential to be of independent interest in various other learning problems, such as multi-label learning. To illustrate, let us consider scenarios like the recommender system, where we aim to recommend movies to users. In such cases, each user typically has multiple favorite movies, and our objective is to learn this set of favorite movies.

This multi-label learning problem can be effectively modeled as a bipartite graph denoted as $G^\star = (\cX, \cY, \cE^\star)$, where $\cX$ represents the input space (in the context of the recommender system, it represents users), $\cY$ is the label space (in this case, it represents movies), and $\cE^\star$ is a set of arcs. In this graph, the presence of an arc $(x, y) \in \cE^\star$ implies that label $y$ is associated with input $x$ (e.g., user $x$ liking movie $y$). Our primary goal here is to learn this graph, i.e., the arcs $\cE^\star$. More formally, given a marginal data distribution $\cD_\cX$, our goal is to find a graph $\hat G$ with minimum neighborhood prediction loss $\lgraph (G)= \Pr_{x\sim \cD_\cX}(N_{G}(x)\neq N_{G^\star}(x))$. Suppose we are given a graph class $\cG$. Then, our goal is to find the graph $\argmin_{G\in \cG}\lgraph (G)$.

However, in real-world scenarios, the recommender system cannot sample a user along with her favorite movie set.
Instead, at each time $t$, the recommender recommends a set of movies $h_t$ to the user and the user randomly clicks one of the movies in $h_t$ that she likes (i.e., $v_t\in N_{G^\star}(x_t)\cap h_t$). Here we abuse the notation a bit by letting $h_t$ represent the set of positive points labeled by this hypothesis.
This setting perfectly fits into our unknown graph setting and our algorithms can find a good graph.


\subsection{Online Learning}\label{sec:highlight-online}
In this section, we mainly present our algorithmic result in the post-manipulation feedback setting. 
The post-manipulation setting is introduced by \cite{ahmadi2023fundamental}, where they exclusively consider the finite hypothesis class $\cH$ and derive Stackelberg regret bounds based on the cardinality $|\cH|$. In contrast, our work extends to dealing with infinite hypothesis classes and offers algorithms with guarantees dependent on the Littlestone dimension. 

In the post-manipulation feedback setting, the learner observes either $x_t$ or $v_t$ after implementing $h_t$. It might seem plausible that observing $x_t$ provides more information than observing $v_t$ since we can compute $v_t$ given $x_t$. However, we show that there is no gap between these two cases. 
We provide an algorithm based on the ``harder'' case of observing $v_t$ and a lower bound based on the ``easier'' case of observing $x_t$ and show that the bounds are tight.

Our algorithm is based on a reduction from strategic online learning to standard online learning. More specifically, given any standard online learning algorithm $\cA$, we construct a weighted expert set $E$ which is initialized to be $\{\cA\}$  with weight $w_\cA =1$. At each round, we predict $x$ as $1$ iff the weight of experts predicting $x$ as $1$ is higher than $1/2(k+1)$ of the total weight.
Once a mistake is made at the observed node $v_t$, we differentiate between the two types of mistakes.
\begin{itemize}[topsep = 2pt, itemsep=3pt, parsep=1pt, leftmargin = *]
    \item \underline{False positive:} Namely, the  labeling of $x_t$ induced by $h_t$ is $1$ but the true label is $y_t = 0$. 
    Since the true label is $0$, the entire neighborhood of $x_t$ should be labeled as $0$  by the target hypothesis, including the observed feature $v_t$. Hence, we proceed by updating all experts incorrectly predicting $v_t$ as $1$ with the example $(v_t,0)$ and halve their weights. 
    \item \underline{False negative:} Namely, the labeling of $x_t$ induced by $h_t$ is $0$, but the true label is $y_t = 1$.
    Thus, $h^\star$ must label some neighbor of $x_t$, say $x^\star$, by $1$. For any expert $A$ labeling the entire neighborhood by $0$, they make a mistake. But we do not know which neighbor should be labeled as $1$. So we split $A$ into $|N_{G^\star}[x_t]|$ number of experts, each of which is fed by one neighbor $x\in N_{G^\star}[x_t]$ labeled by $1$. Then at least one of the new experts is fed with the correct neighbor $(x^\star,1)$. We also split the weight equally and halve it.
\end{itemize}
The pseudo-code is included in Algorithm~\ref{alg:reduction2online-pmf}.
\begin{algorithm}[t]\caption{Red2Online-PMF: Reduction to online learning 
 in the post-manipulation feedback setting}\label{alg:reduction2online-pmf}
    \begin{algorithmic}[1]
        \STATE \textbf{Input: } a standard online learning algorithm $\cA$, maximum out-degree upper bound $k$
        \STATE \textbf{Initialization: } expert set $E = \{\cA\}$ and weight $w_{\cA}=1$
        \FOR{$t=1,2,\ldots$}
        \STATE \underline{Prediction}: at each point $x$, $h_t(x) =1$ if $\sum_{A \in E: A(x) = 1}w_{A} \geq \frac{\sum_{A\in E} w_{\cA_H}}{2(k+1)}$\label{alg-line:k-fraction-weight}
        \STATE \underline{Update}: \textit{//when we make a mistake at the observed node $v_t$}
        \IF{$y_t = 0$}
        \STATE for all $A\in E$ satisfying $A(v_t) = 1$, feed $A$ with $(v_t,0)$ and update $w_{A} \leftarrow \frac{1}{2} w_{A}$
        \ELSIF{$y_t = 1$}
        \STATE for all $A \in E$ satisfying $\forall x\in N_{G^\star}[v_t], A(x) =0$
        \STATE for all $x\in N_{G^\star}[v_t]$, by feeding $(x,1)$ to $A$, we can obtain a new expert $A(x,1)$
        \STATE remove $A$ from $E$ and add $\{A(x,1)|x\in N_{G^\star}[x_t]\}$ to $E$ with weights $w_{A(x,1)} = \frac{w_{A}}{2\abs{N_{G^\star}[x_t]}}$
        \ENDIF
        \ENDFOR
    \end{algorithmic}
\end{algorithm}

Next, we derive a mistake bound for Algorithm~\ref{alg:reduction2online-pmf} that depends on the mistake bound of the standard online learning algorithm, and by plugging in the Standard Optimal Algorithm (SOA), we derive a mistake bound of $O(k \log k \cdot M)$. 
\begin{restatable}{theorem}{fuOnline}\label{thm:fu-online}
For any hypothesis class $\cH$, graph $G^\star$ with maximum out-degree $k$, and a standard online learning algorithm with mistake bound $M$ for $\cH$, for any realizable sequence, we can achieve mistake bound of $O(k \log k \cdot M)$ by Algorithm~\ref{alg:reduction2online-pmf}.
By letting the SOA algorithm by~\cite{littlestone1988learning} as the input standard online algorithm $\cA$,  Red2Online-PMF(SOA)
makes at most $O(k \log k\cdot \ld(\cH))$ mistakes.
\end{restatable}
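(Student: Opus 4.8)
The plan is to run a weighted-majority style potential argument on the total expert weight $W_t := \sum_{A \in E} w_A$, sandwiching it between an upper bound that decays with every mistake and a lower bound guaranteed by a distinguished ``good'' expert that tracks $h^\star$. I would first record the realizability assumption in the form $\lstr_{G^\star}(h^\star,(x_t,y_t)) = 0$ for every $t$ and translate it into statements about closed neighborhoods: if $y_t = 0$ then $h^\star$ labels all of $N_{G^\star}[x_t]$ by $0$ (so in particular $h^\star(v_t)=0$), and if $y_t = 1$ then $h^\star$ labels at least one node $x^\star \in N_{G^\star}[x_t]$ by $1$. Let $F_P$ and $F_N$ denote the number of false-positive and false-negative mistakes; the goal is to bound $F_P + F_N$.

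For the upper bound on $W_t$, I would show each mistake shrinks $W_t$ multiplicatively. On a false positive, the prediction rule on Line~\ref{alg-line:k-fraction-weight} guarantees the experts predicting $v_t$ as $1$ carry weight at least $W_t/(2(k+1))$, so halving exactly these experts gives $W_{t+1} \le (1 - \tfrac{1}{4(k+1)})W_t$. On a false negative, the induced label is $0$, so $h_t$ labels the whole neighborhood $N_{G^\star}[x_t]$ by $0$ and $v_t = x_t$; applying the prediction rule to each of the at most $k+1$ nodes together with a union bound shows that experts predicting some neighbor as $1$ carry weight at most $W_t/2$, hence experts predicting all of $N_{G^\star}[v_t]$ by $0$ carry weight more than $W_t/2$. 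Since the split replaces each such expert's weight $w_A$ by a family of total weight $w_A/2$, this yields $W_{t+1} < \tfrac34 W_t$. The asymmetry here --- a $1-\Theta(1/k)$ factor for false positives versus a constant factor for false negatives --- is precisely what will later produce the $k\log k$ versus $\log k$ split.

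The lower bound on $W_t$ via a good expert is the step I expect to be the crux. I would define a lineage of experts, starting from the initial copy of $\cA$, that has only ever been fed examples labeled consistently with $h^\star$, following the correctly-labeled branch whenever it is split. The key claim to verify carefully is that this lineage is updated \emph{only} when it itself errs: on a false positive it is touched exactly when it predicted $v_t=1$ (a genuine mistake, since $h^\star(v_t)=0$), and on a false negative it is split exactly when it predicted all of $N_{G^\star}[v_t]$ by $0$, in which case it mispredicted $x^\star$. Because the examples fed to it form an $h^\star$-realizable sequence and $\cA$ has mistake bound $M$, the lineage is fed at most $M$ times. Each feed multiplies its weight by $\tfrac12$ (halving) or by $\tfrac{1}{2|N_{G^\star}[x_t]|} \ge \tfrac{1}{2(k+1)}$ (split), so $W_t \ge (2(k+1))^{-M}$ throughout. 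The delicate points are arguing that this lineage always survives inside $E$ and that feeding it only on its own mistakes still constitutes a legitimate realizable run of $\cA$ to which the mistake bound genuinely applies.

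Combining the two bounds gives $(2(k+1))^{-M} \le (1-\tfrac{1}{4(k+1)})^{F_P}(\tfrac34)^{F_N}$. Taking logarithms, separating the two nonnegative terms on the right, and using $\ln\frac{1}{1-x}\ge x$ yields $F_N = O(M\log k)$ from the $\ln(4/3)$ term and $F_P = O(kM\log k)$ from the $\tfrac{1}{4(k+1)}$ term, so the total mistake count is $F_P + F_N = O(k\log k \cdot M)$. Finally, instantiating $\cA$ with the Standard Optimal Algorithm, whose realizable mistake bound is $\ld(\cH)$, gives the stated $O(k\log k\cdot\ld(\cH))$ bound.
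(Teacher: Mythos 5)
Your proposal is correct and follows essentially the same weighted-majority potential argument as the paper: the same per-mistake decay factors ($1-\tfrac{1}{4(k+1)}$ for false positives, a constant factor for false negatives), the same distinguished expert lineage tracking $h^\star$ whose weight stays above $(2(k+1))^{-M}$, and the same final comparison of upper and lower bounds on the total weight. The only (harmless) refinement is that you track $F_P$ and $F_N$ separately before combining, whereas the paper uses the weaker common factor $1-\tfrac{1}{4(k+1)}$ for both mistake types; both yield $O(k\log k\cdot M)$.
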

Compared to the fully informative setting (in which the dependency on $k$ of the optimal mistake bound is $\Theta(\log k)$), the dependency of the number of mistakes made by our Algorithm~\ref{alg:reduction2online-pmf} on $k$ is $O(k\log k)$. However, as we show in the next theorem, 
no deterministic algorithm can make fewer than $\Omega(\ld(\cH)\cdot k)$ mistakes. Hence, our algorithm is nearly optimal (among all deterministic algorithms) up to a logarithmic factor in $k$.
\begin{restatable}{theorem}{fuOnlineLb}\label{thm:fu-online-lb}
For any $k,d>0$, there exists a graph $G^\star$ with maximum out-degree $k$ and a hypothesis class $\cH$ with $\ld(\cH) = d$ such that for any deterministic algorithm, there exists a realizable sequence for which the algorithm will make at least $d(k-1)$ mistakes.
\end{restatable}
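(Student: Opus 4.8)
The plan is to build the hard instance out of $d$ independent copies of a single ``star-with-guard'' gadget and to run an adaptive adversary separately in each copy; since the copies live on disjoint vertex sets and the hypotheses act independently on them, both the Littlestone dimension and the forced mistakes will add up over copies. So the heart of the argument is the single-copy gadget (the case $d=1$), which I would design to force $k-1$ mistakes against a class of Littlestone dimension $1$ (the case $k=1$ is vacuous since then $d(k-1)=0$).

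For one copy, take vertices $x_0,x_1,\dots,x_k$ together with a guard vertex $w$, arcs $x_0\to x_i$ for every $i\in[k]$ (so $x_0$ has out-degree $k$ and the $x_i$ are sinks), and one arc $w\to x_0$. Let $\cH=\{h_1,\dots,h_k\}$, where $h_i$ labels only $x_i$ positive and everything else (in particular $x_0$ and $w$) negative. This is the singleton class on $\{x_1,\dots,x_k\}$, so $\ld(\cH)=1$, and the maximum out-degree is $k$. I would first record the realizability bookkeeping: $(x_0,1)$ is realizable by every $h_i$ (the agent at $x_0$ manipulates onto any positive sink), $(w,0)$ is realizable by every $h_i$ (since each $h_i$ labels $x_0$ negative, $w$ does not move), and $(x_j,0)$ is realizable exactly by the $h_i$ with $i\neq j$ (the sinks never move, so their induced label equals their direct label).

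The adversary maintains a candidate set $C\subseteq[k]$ of targets still consistent with the history, starting at $C=[k]$, with the invariant that every agent sent so far is realizable simultaneously by all $h_i$ with $i\in C$. Given the learner's disclosed $h_t$ (a deterministic learner's $h_t$ depends only on the past, so the adversary may compute it before committing the agent), I would force a mistake every round by the following case analysis, writing $P=\{j: h_t(x_j)=1\}$: (i) if $h_t(x_0)=1$ or $h_t(w)=1$, send $(w,0)$, which is always a mistake and excludes no candidate; (ii) else if $P=\emptyset$, send $(x_0,1)$, a false negative that excludes no candidate; (iii) else pick any $j\in P$ and send $(x_j,0)$, a false positive, which is realizable whenever $C\setminus\{j\}\neq\emptyset$ and removes $j$ from $C$ when $j\in C$. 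The role of the guard $w$ is exactly to make option (i) available so that the learner cannot sidestep the whole game by permanently labeling the source $x_0$ positive; without $w$, that strategy would answer $(x_0,1)$ for free and collapse the lower bound.

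The key structural claim is then that the learner cannot reach a ``safe'' configuration (one in which the adversary can force no mistake) until $|C|=1$: whenever $|C|\geq 2$, any $h_t$ falls into one of the three cases above, so a mistake is always forced, and each forced mistake reduces $|C|$ by at most one. Since $|C|$ must drop from $k$ to $1$ and the learner gains information only through these forced rounds, at least $k-1$ mistakes occur; at the end the adversary declares $h^\star=h_{i^\star}$ for the unique surviving $i^\star\in C$, and the realizability invariant guarantees the whole transcript is consistent with it. I expect the main obstacle to be proving this ``no safe state until $|C|=1$'' claim cleanly while ruling out every way the learner might make progress without paying a mistake (e.g.\ labeling already-excluded sinks, labeling $x_0$ or $w$ positive, or labeling several sinks at once); this is where the case analysis and the guard vertex do the real work. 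Finally I would assemble the $d$ copies: $\cH$ is the product class $\cong [k]^d$, whose Littlestone dimension is exactly $d$ by additivity over disjoint domains (a depth-$d$ shattered tree is obtained by branching on one sink per copy), the maximum out-degree is still $k$, and running the single-copy adversary independently in each copy forces $d(k-1)$ mistakes in total.
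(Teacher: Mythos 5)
Your proposal is correct, and at the top level it follows the same blueprint as the paper's own proof: a star gadget whose center has out-degree $k$, the singleton class over the leaves (Littlestone dimension $1$), an adaptive adversary that forces a mistake every round while eliminating at most one candidate target per round, and finally $d$ disjoint copies so that both the Littlestone dimension and the forced mistakes add up to $d$ and $d(k-1)$ respectively.

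The one substantive difference is your guard vertex $w$ with the arc $w\to x_0$, and it is not cosmetic: it repairs a genuine gap in the paper's argument. In the paper's case analysis, when the learner labels the center $x_0$ positive, the adversary responds with the agent $(x_0,0)$ and claims this is consistent with every singleton. But under any $h_i=\1{x_i}$ the agent at $x_0$ is labeled negative and has the positive neighbor $x_i$, so it manipulates and receives the induced label $1$; hence $\lstr_{G^\star}(h_i,(x_0,0))=1$ for \emph{every} $i$, and $(x_0,0)$ cannot appear in a realizable sequence. Worse, without a guard the learner strategy ``always label $x_0$ positive and all leaves negative'' answers $(x_0,1)$ and every $(x_j,0)$ correctly, and the only realizable examples that hurt it are of the form $(x_j,1)$, each of which collapses the candidate set to a single hypothesis---so that learner suffers at most one mistake and the paper's construction, as written, does not yield the claimed bound. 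Your replacement move, sending $(w,0)$ whenever $h_t(x_0)=1$ or $h_t(w)=1$, forces a mistake (the agent at $w$ either stays positive or manipulates to the positive $x_0$) while remaining realizable by every candidate, since each $h_i$ labels both $w$ and $x_0$ negative and $w$'s only out-neighbor $x_0$ is negative. With that case restored, your invariant that no safe state exists until $\abs{C}=1$ holds, the per-copy bound of $k-1$ follows, and the product over $d$ disjoint copies is handled exactly as in the paper. Your proof is the one that should be recorded.
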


\vspace{-1em}
\begin{remark}
    \cite{ahmadi2023fundamental} leave the gap between the upper bound of $O(k \log(\abs{\cH}))$ and the lower bound of $\Omega(k)$ as an open question. Theorems~\ref{thm:fu-online} and \ref{thm:fu-online-lb} closes this gap up to a logarithmic factor in $k$.
\end{remark}

\textbf{Extension to Unknown Graph Setting }
In the unknown graph setting, we do not observe $N_{G^\star}(x_t)$ or $N_{G^\star}(v_t)$ anymore.
We then design an algorithm by running an instance of Algorithm~\ref{alg:reduction2online-pmf} over the majority vote of the graphs. 
More specifically, after observing $x_t$,  let $\tilde N(x_t)$ denote the set of nodes $x$ satisfying that $(x_t,x)$ is an arc in at least half of the consistent graphs. 
\begin{itemize}[topsep = 2pt, itemsep=3pt, parsep=1pt, leftmargin = *]
    \item \underline{Reduce inconsistent graphs}: For any $x\notin \tilde N(x_t)$, which means $(x_t,x)$ is an arc in at most half of the current consistent graphs, we predict $h_t(x) =1$. If we make a mistake at such a $v_t = x$, it implies that $(x_t,x)$ is indeed an arc in the true graph, so we can eliminate at least half of the graphs.
    \item \underline{Approximate $N_{G^\star}(x_t)$ by $\tilde N(x_t)$}: For any $x\in \tilde N(x_t)$, we predict $h_t(x)$ by following Algorithm~\ref{alg:reduction2online-pmf}. 
    We update Algorithm~\ref{alg:reduction2online-pmf} in the same way when we make a false positive mistake at $v_t$ as it does not require the neighborhood information. However, when we make a false negative mistake at $v_t$, it implies that $v_t =x_t$ and the entire neighborhood $N_{G^\star}(x_t)$ is labeled as $0$. We then utilize $\tilde N(x_t)$ as the neighborhood feedback to update  Algorithm~\ref{alg:reduction2online-pmf}.
    Note that this majority vote neighborhood $\tilde N(x_t)$ must be a superset of $N_{G^\star}(v_t)=N_{G^\star}(x_t)$ since the entire $N_{G^\star}(v_t)$ is labeled as $0$ by $h_t$ and every $x\notin \tilde N(x_t)$ is labeled as $1$. Moreover, the size of $\tilde N(x_t)$ will not be greater than $2k$ since all graphs have maximum degree at most $k$.
\end{itemize}
We defer the formal description of this algorithm to Algorithm~\ref{alg:ug-online} that appears in Appendix~\ref{app:unknownGraph}. In the following, we derive an upper bound over the number of mistakes for this algorithm.
\begin{restatable}{theorem}{ugOnline}\label{thm:ug-online}
    For any hypothesis class $\cH$, the underlying true graph $G^\star$ with maximum out-degree at most $k$, finite graph class $\cG$ in which all graphs have maximum out-degree at most $k$, for any realizable sequence, Algorithm~\ref{alg:ug-online} will make at most $O(k\log(k)\cdot \ld(\cH)+ \log |\cG|)$ mistakes.   
\end{restatable}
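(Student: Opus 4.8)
The plan is to decompose the mistakes made by Algorithm~\ref{alg:ug-online} into two disjoint groups and bound each separately: (i) the \emph{graph-elimination} mistakes, and (ii) the mistakes that trigger an update of the embedded instance of Algorithm~\ref{alg:reduction2online-pmf}. For group (i), recall that whenever a false-positive mistake occurs at an observed node $v_t = x \notin \tilde N(x_t)$, the arc $(x_t,x)$ is realized in $G^\star$ yet belongs to at most half of the currently consistent graphs; hence every graph missing this arc becomes inconsistent and is discarded, shrinking the consistent set by at least a factor of two. Since we start with $|\cG|$ graphs and $G^\star$ is never eliminated in the realizable case, this type of mistake occurs at most $\log_2|\cG|$ times, contributing the additive $\log|\cG|$ term.

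Before analyzing group (ii) I would first establish two structural facts about the majority-vote neighborhood $\tilde N(x_t)$. First, a double-counting argument gives $|\tilde N(x_t)| \le 2k$: each element of $\tilde N(x_t)$ is an out-arc of $x_t$ in at least half of the consistent graphs $\cG_t$, while each graph contributes at most $k$ out-arcs of $x_t$, so $\tfrac12|\cG_t|\,|\tilde N(x_t)| \le k|\cG_t|$. Second, on every false-negative round we have $v_t = x_t$ and $h_t$ labels all of $N_{G^\star}(x_t)$ by $0$; since every node outside $\tilde N(x_t)$ is labeled $1$, this forces $N_{G^\star}(x_t) \subseteq \tilde N(x_t)$, so the true positive neighbor $x^\star$ that $h^\star$ labels $1$ lies in $\tilde N[x_t]$. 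This guarantees that $(x^\star,1)$ is among the split branches, which is exactly what preserves realizability of the embedded reduction.

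The heart of the argument is a weighted-majority potential analysis for the group-(ii) mistakes, mirroring the proof of Theorem~\ref{thm:fu-online} but with the effective degree raised from $k$ to $2k$. Let $W_t = \sum_{A\in E} w_A$, and track a distinguished \emph{good expert}: the instance of SOA fed only with $h^\star$-consistent labels along the split tree. By the two facts above the good expert is always present, is updated only when SOA itself errs, and each such update divides its weight by at most $2|\tilde N[x_t]| = O(k)$; since SOA makes at most $\ld(\cH)$ mistakes on an $\cH$-realizable stream, its final weight satisfies $\log_2 w_{\mathrm{good}} \ge -O(\ld(\cH)\log k)$. On the other side, choosing the prediction threshold to be $\Theta(1/k)$ (concretely $\tfrac{1}{2(2k+1)}$) makes every group-(ii) mistake remove a definite fraction of $W_t$: a false positive halves experts carrying at least a $\Theta(1/k)$ fraction of $W_t$, while a false negative splits-and-halves experts carrying at least half of $W_t$ (since at most $|\tilde N[x_t]|\cdot\Theta(1/k) \le \tfrac12$ of the weight can vote $1$ on any single neighborhood node). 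Thus $W_{t+1} \le (1-\Theta(1/k))W_t$ after a false positive and $W_{t+1}\le \tfrac34 W_t$ after a false negative.

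Combining $w_{\mathrm{good}} \le W_{\mathrm{final}}$ with these per-mistake decays and taking logarithms yields $m_{\mathrm{FP}}\cdot\Theta(1/k) + m_{\mathrm{FN}}\cdot\Theta(1) \le O(\ld(\cH)\log k)$, so the group-(ii) mistakes number $O(k\log k\cdot \ld(\cH))$; adding the $\log_2|\cG|$ elimination mistakes gives the claimed $O(k\log k\cdot \ld(\cH) + \log|\cG|)$ bound. The main obstacle I anticipate is the bookkeeping in the potential argument: I must verify that the good expert's weight decreases \emph{only} on genuine SOA mistakes even though the feedback $\tilde N(x_t)$ is a strict superset of the true neighborhood (so that the extra, spurious split branches do not inflate the division factor), and that the single threshold $\Theta(1/k)$ is simultaneously small enough to keep the good expert above the voting cutoff and large enough to force a constant-fraction weight drop at each mistake.
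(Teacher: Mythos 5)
Your proposal is correct and follows essentially the same route as the paper: the same split into graph-elimination mistakes (at most $\log_2|\cG|$, since each halves the consistent set) and mistakes handled by the embedded Red2Online-PMF instance, the same double-counting bound $|\tilde N(x_t)|\le 2k$, the same observation that $N_{G^\star}(x_t)\subseteq\tilde N(x_t)$ on false-negative rounds, and the same potential argument from Theorem~\ref{thm:fu-online} with the degree parameter $2k$. Your write-up is in fact somewhat more explicit about the good-expert bookkeeping than the paper's own proof.
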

So when we have no access to the neighborhood information, we will suffer $\log |\cG|$ more mistakes. However, we show that this $\log |\cG|$ term is necessary for all deterministic algorithms.

\begin{restatable}{theorem}{ugOnlineLb}
    \label{thm:ug-online-lb}
    For any $n\in \NN$, there exists a hypothesis class $\cH$ with $\ld(\cH) =1$, a graph class $\cG$ satisfying that all graphs in $\cG$ have maximum out-degree at most $1$ and $|\cG| = n$ such that for any deterministic algorithm, there exists a graph $G^\star\in \cG$ and a realizable sequence for which the algorithm will make at least $\frac{\log n}{\loglog n}$ mistakes.  
\end{restatable}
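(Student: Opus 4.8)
The plan is to prove the bound by an adaptive-adversary argument in which the adversary maintains the version space $V$ of all graph--hypothesis pairs still consistent with the feedback, and forces one mistake per ``level'' while ensuring that each mistake shrinks $|V|$ by only a bounded factor. I would set the branching parameter $b=\lceil\log n\rceil$ and the depth $D=\lfloor\log_b n\rfloor=\Theta(\log n/\loglog n)$, so that $b^D\le n$; the target bound is then exactly this depth $D$.

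For the instance I would take $\cH$ to be the class of singletons $\{h_\rho:\rho\in\cX\}$ with $h_\rho(x)=\1{x=\rho}$, which satisfies $\ld(\cH)=1$, and index the $n$ graphs in $\cG$ by strings $\sigma\in[b]^D$ (the leaves of a complete $b$-ary tree of depth $D$). Each graph has out-degree at most $1$: for each level $i$ there is a source $q_i$ and a block of $b$ candidate targets $\rho_{i,1},\dots,\rho_{i,b}$, and $G_\sigma$ contains only the arc $q_i\to\rho_{i,\sigma_i}$ out of $q_i$. The single positive node $\rho^\star$ of the realizing singleton is tied to the secret so that, before level $i$ has been probed, it is genuinely undetermined whether $q_i$ routes into the positive region, i.e. whether $\bar h^\star_{G_\sigma}(q_i)=\1{q_i\to\rho^\star}$ equals $0$ or $1$. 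This forces the learner to commit to a guess about an as-yet-unseen coordinate when $q_i$ is presented.

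The adversary presents $q_1,\dots,q_D$ in order, and at level $i$ I would show it can respond to any $h_t\in\cY^\cX$ so as to (i) create a mistake and (ii) keep at least a $1/b$ fraction of $V$ consistent. If $h_t$ routes $q_i$ into the positive region (either $h_t(q_i)=1$, giving $v_t=q_i$ and no arc information, or $h_t$ labels a target to which $q_i$ can be sent), the adversary declares a false positive, choosing the surviving value of the $i$-th coordinate to be the most popular one; if instead $h_t$ leaves $q_i$ unmanipulated, it declares a false negative, revealing only that $q_i$'s (unlabelled) target is the positive one. By the product structure of $\cG$, conditioned on the already-resolved levels the $i$-th coordinate is still uniform over $[b]$ among survivors, so the revealed constraint is shared by at least $|V|/b$ graphs; hence each forced mistake multiplies $|V|$ by at most $b$. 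After all $D$ levels, $|V|\ge n/b^{D}\ge 1$, so the version space is never exhausted and any surviving pair $(G^\star,h^\star)$ certifies a realizable sequence on which the learner made all $D$ mistakes, giving $\ge D=\Omega(\log n/\loglog n)$ mistakes.

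The hard part---and the reason the bound is $\log n/\loglog n$ rather than the information-theoretic $\log n$---is the per-level claim: that the adversary can simultaneously force a mistake, leak at most $\log b=\loglog n$ bits, and preserve global realizability under a single $(G^\star,h^\star)$ with $\ld(\cH)=1$. The delicate point is that a probing learner may label an entire block of $b$ candidate targets positive and thereby learn the $i$-th coordinate exactly from one manipulation; the construction must guarantee that any such informative probe also pushes $q_i$ into the positive region, so the adversary remains free to call it a false positive and, by selecting the most popular consistent target, still surrenders only a factor $b$. Dually, a conservative learner that probes little stays uncertain and is punished by a false negative that leaks even less. Making these two cases exhaustive, and ensuring that the singleton constraint (a single global positive node) can absorb all the forced false positives and negatives while maintaining the coordinate-wise uniformity needed for the factor-$b$ accounting, is the main technical work; the $b$-ary product structure of $\cG$ is precisely what supplies that uniformity.
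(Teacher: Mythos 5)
Your high-level plan matches the paper's: an adaptively chosen graph with a product structure over rounds, one forced mistake per round, and an accounting argument showing each mistake leaks only $O(\loglog|\cG|)$ bits, giving $\log|\cG|/\loglog|\cG|$ mistakes. However, your concrete instantiation has a genuine gap that you yourself flag but do not resolve, and as stated it fails. You take $\cH$ to be literal singletons, so the realizing hypothesis $h^\star=h_{\rho^\star}$ has exactly one positive point, while your $D$ sources $q_1,\dots,q_D$ have pairwise disjoint candidate-target blocks $\{\rho_{i,1},\dots,\rho_{i,b}\}$ and out-degree at most $1$. Then $\bar h^\star_{G^\star}(q_i)=1$ is possible for at most one level $i$ (the one whose block contains $\rho^\star$), so a realizable sequence can contain at most one example with true label $1$ among the $q_i$. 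Consequently the learner that always plays the all-negative hypothesis induces label $0$ everywhere, can only err on true-label-$1$ examples, and makes at most one mistake against your adversary. The ``singleton constraint can absorb all the forced false negatives'' step is not a technicality to be checked later; it is false for this class, and the lower bound collapses.

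The paper's construction repairs exactly this point. It does not use literal singletons: it uses $\cH=\{\1{X_i}\mid i\in[n]\}$ where each $X_i=\{x_{i1},\dots,x_{in}\}$ contains one designated node in \emph{every} round's target block. This class still has $\ld(\cH)=1$ (it is singletons over composite nodes), but now every round's source $x_{0t}$ can be routed to a positive point of $h^\star=\1{X_{i^*}}$ by adaptively adding the arc $(x_{0t},x_{i^*t})$, so a false negative is available at every round, not just one. A second difference worth noting: when the learner labels some wrong target $x_{it}$ ($i\neq i^*$) positive, the paper's adversary presents that target node itself as the agent with label $0$, so that only the single hypothesis $\1{X_i}$ is eliminated; your adversary only ever presents sources, which makes the ``most popular coordinate'' bookkeeping harder to close. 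With the paper's class, the count is the classical singleton one ($n-1$ mistakes against $n$ hypotheses) and the graph class has size at most $(n+1)^n$, which yields the stated $\log|\cG|/\loglog|\cG|$ bound. If you replace your singletons with this block-singleton class (and allow the adversary to present target nodes), your version-space argument goes through.
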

Finally, as mentioned in Section~\ref{sec:model-uncertainty}, there are several options for the feedback, including observing $x_t$ beforehand followed by $v_t$ afterward, observing $(x_t,v_t)$ afterward, and observing either $x_t$ or $v_t$ afterward, arranged in order of increasing difficulty. We mainly focus on the simplest one, that is, observing $x_t$ beforehand followed by $v_t$ afterward. We show that even in the second simplest case of observing $(x_t,v_t)$ afterward, any deterministic algorithm will suffer mistake bound linear in $|\cG|$ and $|\cH|$.
\begin{restatable}{proposition}{uglessinfo}\label{prop:online-v-lb}
    When $(x_t,v_t)$ is observed afterward, for any $n\in \NN$, there exists a class $\cG$ of graphs of degree $2$ and a hypothesis class $\cH$ with $|\cG| = |\cH| = n$ such that for any deterministic algorithm, there exists a graph $G^\star\in \cG$ and a realizable sequence for which the algorithm will make at least $n-1$ mistakes.
\end{restatable}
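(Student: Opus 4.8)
The plan is to prove this linear lower bound by a version-space (adversary) argument, where the whole difficulty lies in engineering the family of manipulation graphs and hypotheses so that each \emph{forced} mistake eliminates \emph{at most one} candidate configuration, thereby destroying any halving/binary-search speedup. I would construct $n$ configurations $(G_1,h_1),\dots,(G_n,h_n)$ with each $G_i\in\cG$ of maximum out-degree at most $2$ and each $h_i\in\cH$, all realizable (each pair yields zero strategic loss on the agents I present), and write $f_i(x):=\bar{h_i}_{G_i}(x)$ for the induced label of $x$ under configuration $i$. The hidden index $i^\star\in[n]$ is the unknown truth. The adversary maintains a live set $V\subseteq[n]$, $V_0=[n]$, and the goal is: against any deterministic learner, in every round with $|V|\ge 2$, present an agent that (i) forces a mistake under every surviving configuration and (ii) whose observation $(x_t,v_t,y_t)$ is consistent with all indices of $V$ except at most one. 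After $n-1$ rounds exactly one configuration survives; declaring it $(G^\star,h^\star)$ gives a realizable sequence on which the learner erred $n-1$ times.

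The reason a linear (rather than $\log|\cG|$) bound is attainable here, in contrast with the $x_t$-beforehand setting of Theorem~\ref{thm:ug-online}, is that the learner commits $h_t$ \emph{before} seeing $x_t$ and only observes $(x_t,v_t)$ afterwards, so it cannot form the majority-vote neighborhood $\tilde N(x_t)$ that powers the halving algorithm. The key structural fact I would exploit is that the learner cannot reliably \emph{predict negative} at a probe vertex: even if $h_t(x_t)=0$, a positive label it had to place on some \emph{other} vertex $p$ can be reached from $x_t$ through manipulation, so the induced label $\bar h_{G^\star}(x_t)=h_t(\pi_{G^\star,h_t}(x_t))$ flips to $1$. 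Thus whenever the learner labels some vertex $p$ positive, the adversary introduces a negative agent $x$ with an arc $x\to p$ that is \textbf{shared by all surviving graphs}; under $h_t$ this agent manipulates to $p$ and is mislabeled (a false positive), while the shared-arc design guarantees $\pi_{G_i,h_t}(x)=p$ for every $i\in V$, so the observed $v_t=p$ eliminates \emph{no} configuration through the manipulation. The realizability test $f_i(x)=0$ then removes exactly the single index for which $p$ is positive. Dually, if the learner labels the relevant vertices negative, the adversary presents a positive agent sitting at a sink vertex (so $v_t=x_t$, a graph-independent observation), again forcing a mistake, and the design must make this false negative eliminate at most one index.

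The hard part is producing a \emph{single explicit family} satisfying all three constraints $k(G_i)\le 2$, $|\cG|=|\cH|=n$, and realizability, for which \textbf{both} branches above eliminate at most one index, for \textbf{every} hypothesis $h_t$ the learner might commit. The delicate tension is that a vertex which is induced-positive under many configurations leaks little under a false-positive test but leaks its complement under a false-negative test (and conversely), so the configurations must be arranged---using the degree-$2$ arcs to couple ``source'' and ``needle'' vertices---so that the learner is always driven into the branch leaking only one bit; equivalently, so that no global labeling is simultaneously ``safe'' against both agent types on all live indices. A secondary but essential point to verify is the manipulation-feedback subtlety: every forced manipulation must be routed through arcs shared by all surviving graphs, so that observing $v_t$ never rules out more than the one index already pinned down by $y_t$ (if a manipulated arc were graph-distinguishing, $v_t$ would reveal the graph and collapse $V$ in a single step, which is precisely what the shared-arc coupling must forbid).

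Finally I would discharge the bookkeeping: take the $h_i$ as (near-)singletons so that $\vcd(\cH)=\ld(\cH)=1$, confirm each $G_i$ has out-degree $2$ and $|\cG|=|\cH|=n$, check that the two agent types at each round are realizable under all of $V\setminus\{\text{eliminated index}\}$, and verify that the concatenation of the presented agents has zero strategic loss under the final surviving pair $(G^\star,h^\star)$, so the adversarial sequence is genuinely realizable while costing $n-1$ mistakes. I expect the construction in the previous paragraph to be the only real obstacle; the adversary analysis and the dimension bookkeeping should then be routine.
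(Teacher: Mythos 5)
Your high-level plan is exactly the paper's: a version-space adversary that forces a mistake every round while ensuring each round eliminates at most one candidate configuration, with shared arcs making the observed $v_t$ uninformative and a false-positive/false-negative dichotomy driving the case analysis. However, there is a genuine gap: you never exhibit the family $(G_i,h_i)$, and you yourself flag the construction as ``the only real obstacle.'' A lower-bound proof that stops at ``there should exist a gadget with these properties'' is incomplete, because the whole content of the proposition is that such a gadget exists under the constraints $k(G_i)\le 2$ and $|\cG|=|\cH|=n$. Moreover, one of the mechanisms you commit to cannot be implemented literally: you say that \emph{whenever} the learner labels a vertex $p$ positive, the adversary should route a negative agent to $p$ through an arc shared by all surviving graphs. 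For the $n$ ``needle'' vertices this is impossible under the degree bound (a common source with shared arcs into all of them would have out-degree $n$), and it is also unnecessary --- the right move there is to present the positively-labeled needle itself as a negative sink agent, which forces a mistake with $v_t=x_t$ and eliminates exactly the one index whose hypothesis makes that needle positive.

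The paper's construction, which directly instantiates your blueprint, is: nodes $A,B,C_1,\dots,C_n$; graphs $G_i$ given by $A\to B\to C_i$; hypotheses $h_i=\1{C_i}$; under configuration $i^\star$ the true labels make $B$ and $C_{i^\star}$ positive and everything else negative (realizable since $B$ manipulates along $B\to C_{i^\star}$). The adversary's three branches are then: if $h_t(A)=1$ or $h_t(B)=1$, present $(A,0)$ --- the shared arc $A\to B$ forces a false positive at $v_t\in\{A,B\}$ and reveals nothing; if $h_t$ labels some $C_i$ with $i$ still live positive, present $(C_i,0)$ --- a false positive at a sink, eliminating only index $i$; otherwise present $(B,1)$ --- since every live $C_i$ is labeled negative, $B$ stays put, a false negative at $v_t=B$ that reveals nothing. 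Every branch forces a mistake and removes at most one index, giving $n-1$ mistakes. Note also that your bookkeeping sentence ``after $n-1$ rounds exactly one configuration survives'' is not quite right (two of the three branches eliminate nothing), but the correct count --- the adversary can force a mistake whenever $|V|\ge 2$ and each mistake removes at most one index, hence at least $n-1$ mistakes accrue before $|V|$ can reach $1$ --- goes through unchanged.
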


\section{Discussion}
In this work, we have investigated the learnability gaps of strategic classification in both PAC and online settings. We demonstrate that learnability implies strategic learnability when the manipulation graph has a finite maximum out-degree, $k<\infty$. Additionally, strategic manipulation does indeed render learning more challenging.
In scenarios where we consider both the true graph information and pre-manipulation feedback, manipulation only results in an increase in both sample complexity and regret by a $\log k$ multiplicative factor. However, in cases where we only have post-manipulation feedback, the dependence on $k$ in the sample complexity remains $\log k$, but increases to $k$ in the regret.
When the true graph is unknown and only a graph class $\cG$ is available, there is an additional increase of a $\log|\cG|$ additive factor in both sample complexity and regret. Our algorithms for learning an unknown graph are of independent interest to the multi-label learning problem.

Several questions remain open. The first pertains to agnostic online learning. We have explored extending some realizable online learning results to the agnostic case in Appendix~\ref{app:online-agn} through a standard reduction technique. However, it is unclear how to address this issue, especially in the case where there is no perfect graph in the class.
Additionally, there are several other open technical questions, such as improving the lower bounds presented in the work and eliminating the multiplicative factor in Theorem~\ref{thm:ug-pac-agn}.

\section*{Acknowledgements}
Lee Cohen is supported by the Simons Foundation Collaboration on the Theory of Algorithmic Fairness, the Sloan Foundation Grant 2020-13941, and the Simons Foundation investigators award 689988.

Yishay Mansour was supported by funding from the European Research Council (ERC) under the European Union’s Horizon 2020 research and innovation program (grant agreement No. 882396), by the Israel Science Foundation,  the Yandex Initiative for Machine Learning at Tel Aviv University and a grant from the Tel Aviv University Center for AI and Data Science (TAD).

Shay Moran is a Robert J.\ Shillman Fellow; he acknowledges support by ISF grant 1225/20, by BSF grant 2018385, by an Azrieli Faculty Fellowship, by Israel PBC-VATAT, by the Technion Center for Machine Learning and Intelligent Systems (MLIS), and by the the European Union (ERC, GENERALIZATION, 101039692). Views and opinions expressed are however those of the author(s) only and do not necessarily reflect those of the European Union or the European Research Council Executive Agency. Neither the European Union nor the granting authority can be held responsible for them.

Han Shao was supported in part by the National Science Foundation under grants CCF-2212968 and ECCS-2216899, by the Simons Foundation under the Simons Collaboration on the Theory of Algorithmic Fairness, and by the Defense Advanced Research Projects Agency under cooperative agreement HR00112020003. The views expressed in this work do not necessarily reflect the position or the policy of the Government and no official endorsement should be inferred.


\bibliographystyle{abbrvnat}
\bibliography{ref}

\newpage
\appendix

\section{Related Work}\label{app:related}
Our work aligns with strategic classification, a recent line of research that explores the influence of strategic behavior on decision making and machine learning algorithms. This line of study is extensive and encompasses various settings, including distributional vs. online, different knowledge of manipulation structure, pre-manipulation vs. post-manipulation feedback, and the goals of gaming vs. improving. However, none of the prior work has addressed the learnability gaps between standard learning and strategic learning studied in this work.

Strategic classification was first studied in a distributional model by~\cite{hardt2016strategic}, assuming that agents manipulate by best responding with respect to a uniform cost function known to the learner. Building on the framework of \cite{hardt2016strategic}, \cite{lechner2022learning,hu2019disparate,milli2019social,sundaram2021pac,zhang2021incentive} studied the distributional learning problem with the goal of returning a classifier with small strategic population loss, and all of them assumed that the manipulations are predefined and known to the learner, either by a cost function or a predefined manipulation graph. Then \cite{lechner2023strategic} and \cite{shao2023strategic} studied the unknown manipulation structure. However, \cite{lechner2023strategic} considered a different loss function that penalizes manipulation even when the prediction at the manipulated feature is correct, and \cite{shao2023strategic} provided guarantees only for finite hypothesis classes.

For the online setting,
\cite{chen2020learning, dong2018strategic,ahmadi2021strategic,ahmadi2023fundamental,shao2023strategic} studied Stackelberg regret in the online setting learning. 
However, \cite{dong2018strategic} studied Stackelberg regret in linear classification and focused on finding appropriate conditions of the cost function to achieve sub-linear Stackelberg regret.
\cite{chen2020learning} studied linear classification under uniform ball manipulations and  \cite{ahmadi2021strategic,ahmadi2023fundamental} studied Stackelberg regret under a predefined and known manipulation. \cite{shao2023strategic} studied the mistake bound under unknown manipulation structures, but only provided results for finite hypothesis classes.

\paragraph{Models of manipulation structure} \cite{hardt2016strategic} modeled manipulation structure by a cost function, where agents manipulate to maximize their utility function, which is the reward of obtaining a positive prediction minus the cost of manipulation.
It then naturally leads to two modeling ways of manipulation structure: The first is a geometric model (e.g., ~\cite{dong2018strategic,sundaram2021pac,chen2020learning, Ghalme21,haghtalab2020maximizing,ahmadi2021strategic,shao2023strategic}), where each agent is able to manipulate within a bounded-radius ball in some known metric space. Other works (e.g., \cite{zhang2021incentive,lechner2022learning,lechner2023strategic,ahmadi2023fundamental}) modeled feasible manipulations using a manipulation graph, where each agent is represented by a node $x$, and an arc from $x$ to $x'$ indicates that an agent with the feature vector $x$ may adjust their features to $x'$ if it increases their likelihood of receiving a positive classification. However, these two ways of modeling are actually equivalent as we can define a manipulation graph given the manipulation balls and we can define ball manipulations w.r.t. the graph distance on a predefined known graph. 

\paragraph{Different information settings}
All PAC learning scenarios involving either known manipulation structures or known local neighborhoods \citep{hardt2016strategic,lechner2022learning,sundaram2021pac,zhang2021incentive,hu2019disparate,milli2019social} can be encompassed within our fully informative setting. We remark that the work of \cite{lechner2022learning} considered the task of learning the manipulation graph, but assumed that each sample is of the form $(x,$ neighborhood of $x)$, which is closer to our fully informative setting as our results also only require the neighborhood information.
\cite{ahmadi2021strategic} examined online learning within our fully informative setting, while \cite{ahmadi2023fundamental} explored online learning within our post-manipulation feedback setting.
Regarding unknown graph settings, \cite{lechner2023strategic} and \cite{shao2023strategic} have made contributions. However, \cite{lechner2023strategic} considered a different loss function that penalizes manipulation even when the prediction at the manipulated feature is correct and \cite{shao2023strategic} provided guarantees solely based on the cardinality of the hypothesis class.
We primarily study strategic classification in online and distributional settings in various restrictive feedback settings. 


\paragraph{Other works.} Other works explored strategic classification in various contexts, e.g.: noisy classifiers~\citep{Braverman20}, causality~\citep{miller2020strategic,shavit2020causal}, noisy agents~\citep{jagadeesan2021alternative}, decision making~\citep{Khajehnejad19,tang2021linear}, sequential setting~\citep{cohen23}, social objectives (e.g., social burden)~\citep{hu2019disparate, Liu20,milli2019social,lechner2023strategic}, Strategic interactions in the regression~\citep{bechavod2021gaming}, non-myopic agents~\citep{haghtalab2022learning,harris2021stateful}, and lack of transparency in decision rules~\citep{bechavod2022information}.
Beyond strategic classification, there is a more general research area of learning using data from strategic sources, such as a single data generation player who manipulates the data distribution~\citep{bruckner2011stackelberg,dalvi2004adversarial}. Adversarial perturbations can be viewed as another type of strategic source~\citep{montasser2019vc}.

\section{Learning Complexity- Background}\label{app:background}
Here we recall the definitions of the Vapnik–Chervonenkis dimension  ($\vcd$)~\citep{vapnik:74}, and Littlestone dimension ($\ld$) \citep{littlestone1988learning}. 

\begin{definition}[VC-dimension]\label{def:vc}
Let $\mathcal{\mathcal{H}} \subseteq \cY^\mathcal{X}$ be a hypothesis class. A subset $S = \{x_1, ..., x_{|S|}\} \subseteq \mathcal{X}$ is shattered by $\mathcal{H}$ if:
$\left| \left\{
\left(h(x_1), ..., h(x_{|S|})\right) : h \in \mathcal{H}
\right\} \right|
= 2^{|S|}$.
The VC-dimension of $\mathcal{H}$, denoted $VCdim(\mathcal{H})$, is the maximal cardinality of a subset $S \subseteq \mathcal{X}$ shattered by $\mathcal{H}$.
\end{definition}

\begin{definition}[$\cH$-shattered tree]
    Let $\cH \subseteq \cY^\cX$ be a hypothesis class and let $d\in\mathbb N$. A sequence $(x_1,\ldots,x_{2^d-1}) \in \X^{2^d-1}$ is an $\cH$\textup{-shattered tree} of depth $d$ if, for every labeling $(y_1,\ldots,y_d)\in\cY^d$, there exists $h\in\cH$ such that for all $i \in [d]$ we have that $h(x_{j_i}) = y_i$, where $j_i = 2^{i-1} + \sum_{k=1}^{i-1} y_k 2^{i-1-k}$. 
\end{definition}

\begin{definition}[Littlestone dimension] 
The \textup{Littlestone dimension} of a hypothesis class $\cH$ is the maximal
depth of a tree shattered by $\cH$. 
\end{definition}

\section{Fully Informative Setting}\label{app:fullInfo}
In this setting, the manipulation graph $G^\star$ is known and the learner observes $x_t$ before implementing $h_t$. We remark that though the learner has knowledge of the entire graph $G^\star$ in this model, the algorithms in this section only require access to the out-neighborhood $N_{G^\star}(x_t)$ of $x_t$.
\subsection{PAC Learning}
In the PAC setting, no matter what learner $h_t$ implemented in each round, we obtain $T$ i.i.d. examples $S = ((x_1, y_1),\ldots, (x_T, y_T))$ after $T$ rounds of interaction.
Then, by running ERM over $\cH$ w.r.t. the strategic loss, we can obtain a hypothesis $\hat h$ by minimizing the empirical strategic loss,
\begin{align*}
\eqERM
\end{align*}
Since $\lstr_{G^\star}(h,(x,y))$ (see Def~\ref{def:lstr}) only depends on the graph $G^\star$ through the neighborhood $N_{G^\star}(x)$, we can optimize Eq~\eqref{eq:erm} given only $\{N_{G^\star}(x_t)|t\in [T]\}$ instead of the entire graph $G^\star$.
Readers familiar with VC theory might notice that the sample complexity can be bounded by the VC dimension of the $(G^\star,\cH)$-induced labeling class $\bar \cH_{G^\star}$, $\vcd(\bar \cH_{G^\star})$.  This is correct and has been shown in~\cite{sundaram2021pac}. However, since our goal is to connect the PAC learnability in the strategic setting and standard setting, we provide sample complexity guarantees dependent on the VC dimension of $\cH$. In fact, our sample complexity results are proved by providing upper and lower bounds of $\vcd(\bar \cH_{G^\star})$ using $\vcd(\cH)$.

\vcdinduced*
\begin{proof}
    For any $x\in \cX$, if two hypotheses $h, h'$ label its closed neighborhood $N_{G^\star}[x] = \{x\}\cup N_{G^\star}(x)$ in the same way, i.e., $h(z)=h'(z), \forall z\in N_{G^\star}[x]$, then the labeling of $x$ induced by $(G^\star,h)$ and $(G^\star,h')$ are the same, i.e., $\bar{h}_{G^\star}(x) = \bar{h'}_{G^\star}(x)$. 
Hence, for any $n$ points $X = \{x_1,x_2,\ldots,x_n\}\subset \cX$, the labeling of $X$ induced by $(G^\star,h)$ is determined by the prediction of $h$ at the their closed neighborhoods, $h(\cup_{x\in X}N_{G^\star}[x])$. 
Since $\abs{\cup_{x\in X}N_{G^\star}[x]}\leq n(k+1)$, there are at most $O((nk)^d)$ different ways of implementation since $\vcd(\cH) = d$ by Sauer-Shelah-Perels Lemma.
If $n$ points can be shattered by $\bar \cH_{G^\star}$, then the number of their realized induced labeling ways is $2^n$, which should be bounded by the number of possible implementations of their neighborhoods, which is upper bounded by $O((nk)^d)$.
Hence, $2^n\leq O((nk)^d)$ and thus, we have $n\leq O(d\log(kd))$. Hence, we have $\vcd(\bar \cH_{G^\star}) \leq O(d\log(kd))$.

For the lower bound, let us start with the simplest case of $d=1$. Consider $(k+1)\log k$ nodes $\{x_{i,j}|i = 1,\ldots,\log k, j=0,\ldots, k\}$, where $x_{i,0}$ is connected to $\{x_{i,j}|j=1,\ldots, k\}$ for all $i\in [\log k]$. 
For $j=1,\ldots,k$, let $\text{bin}(j)$ denote the $\log k$-bit-binary representation of $j$. 
Let hypothesis $h_j$ be the hypothesis that labels $x_{i,j}$ by $\text{bin}(j)_i$ and all the other nodes by $0$.
Let $\cH = \{h_j|j\in [k]\}$.
Note that $\vcd(\cH) = 1$. This is because for any two points $x_{i,j}$ and $x_{i',j'}$, if $j\neq j'$, no hypothesis will label them as positive simultaneously; if $j=j'$, they can only be labeled as $(\text{bin}(j)_i,\text{bin}(j)_{i'})$ by $h_j$ and $(0,0)$ by all the other hypotheses.
However, $\bar \cH_{G^\star}$ can shatter $\{x_{1,0},x_{2,0},\ldots,x_{\log k, 0}\}$ since $h_j$'s labeling over  $\{x_{1,0},x_{2,0},\ldots,x_{\log k, 0}\}$ is $\text{bin}(j)$.
We can extend the example to the case of $\vcd{\cH} = d$ by making $d$ independent copies of $(\mathcal X,\cH, G^\star)$.
\end{proof}

Following directly Theorem~\ref{thm:vcd-induced} through the tools from VC theory~\citep{vapnik:71,vapnik:74}, we prove the following sample complexity upper and lower bounds.

\begin{restatable}{theorem}{thmFiPacRel}\label{thm:fi-pac}
    For any hypothesis class $\cH$ with $\vcd(\cH)=d$, graph $G^\star$ with maximum out-degree $k$, any data distribution $\cD$ and any $\epsilon,\delta\in (0,1)$, with probability at least $1-\delta$ over $S\sim \cD^T$ where $T = O(\frac{d\log(kd)+\log(1/\delta)}{\epsilon^2})$, the output $\hat h$ satisfies $\Lstr_{G^\star,\cD}(\hat h)\leq \min_{h^\star\in \cH}\Lstr_{G^\star,\cD}(h^\star)+ \epsilon$.

    In the realizable case, the output $\hat h$ satisfies $\Lstr_{G^\star,\cD}(\hat h)\leq \epsilon$ when $T = O(\frac{d\log(kd)\log(1/\epsilon)+\log(1/\delta)}{\epsilon})$.
\end{restatable}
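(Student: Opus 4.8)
The plan is to reduce the strategic PAC problem to a standard PAC learning problem over the induced labeling class $\bar \cH_{G^\star}$, and then invoke classical VC theory together with the dimension bound already established in Theorem~\ref{thm:vcd-induced}. First I would observe that, by Definition~\ref{def:lstr}, the strategic loss satisfies $\lstr_{G^\star}(h,(x,y)) = \1{\bar h_{G^\star}(x)\neq y}$; that is, it is exactly the standard $0$-$1$ loss of the induced hypothesis $\bar h_{G^\star}\in \bar \cH_{G^\star}$ evaluated at $(x,y)$. Consequently the strategic population loss $\Lstr_{G^\star,\cD}(h)$ equals the ordinary population $0$-$1$ risk of $\bar h_{G^\star}$, and the empirical strategic loss minimized in Eq~\eqref{eq:erm} equals the empirical $0$-$1$ risk of $\bar h_{G^\star}$ over the i.i.d.\ sample $S\sim\cD^T$. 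Since the map $h\mapsto\bar h_{G^\star}$ is by definition a surjection of $\cH$ onto $\bar\cH_{G^\star}$, minimizing empirical strategic loss over $\cH$ is identical to running standard ERM over $\bar\cH_{G^\star}$ with respect to the $0$-$1$ loss, and the returned $\hat h$ induces an empirical risk minimizer $\bar{\hat h}_{G^\star}$ for $\bar\cH_{G^\star}$.

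Next I would apply Theorem~\ref{thm:vcd-induced}, which yields $\vcd(\bar \cH_{G^\star})\leq d\log(kd)$, and then plug this dimension into the off-the-shelf ERM sample-complexity guarantees. For the agnostic statement, the classical two-sided uniform convergence bound for a class of VC dimension $D$ guarantees that with $T = O((D+\log(1/\delta))/\epsilon^2)$ samples, every hypothesis in the class has empirical risk within $\epsilon/2$ of its population risk with probability at least $1-\delta$; applied to $\bar\cH_{G^\star}$ with $D = d\log(kd)$, and using that $\bar{\hat h}_{G^\star}$ is an empirical minimizer, this gives $\Lstr_{G^\star,\cD}(\hat h)\leq \min_{h^\star\in\cH}\Lstr_{G^\star,\cD}(h^\star)+\epsilon$. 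For the realizable statement, the existence of $h^\star$ with $\Lstr_{G^\star,\cD}(h^\star)=0$ means $\bar{h^\star}_{G^\star}$ has zero population $0$-$1$ risk, so the realizable PAC bound $T = O((D\log(1/\epsilon)+\log(1/\delta))/\epsilon)$ applies to $\bar\cH_{G^\star}$ and yields $\Lstr_{G^\star,\cD}(\hat h)\leq\epsilon$ with probability at least $1-\delta$.

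The technical heart of the argument, namely bounding $\vcd(\bar\cH_{G^\star})$, has already been carried out in Theorem~\ref{thm:vcd-induced}, so no genuine obstacle remains at this stage. The only points requiring care are (i) verifying that the identification of the strategic loss with the $0$-$1$ loss of the induced class is exact, which is immediate from the definitions, and (ii) quoting the realizable and agnostic VC sample-complexity bounds with the correct dependence on $\epsilon$, $\delta$, $d$, and $k$ so that the substitution $D = d\log(kd)$ reproduces the stated rates. Everything beyond this is a direct appeal to standard VC theory~\citep{vapnik:71,vapnik:74}.
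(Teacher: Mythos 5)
Your proposal is correct and follows essentially the same route as the paper: the paper likewise reduces the problem to standard ERM over the induced class $\bar\cH_{G^\star}$, invokes the bound $\vcd(\bar\cH_{G^\star})\leq d\log(kd)$ from Theorem~\ref{thm:vcd-induced}, and cites classical VC theory for both the agnostic and realizable rates. The identification of the strategic loss with the $0$-$1$ loss of $\bar h_{G^\star}$ is exactly the observation the paper relies on, so nothing is missing.
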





\begin{restatable}{theorem}{fiPacLb}\label{thm:fi-pac-lb}
    For any $k,d>0$, there exists a graph $G^\star$ with maximum out-degree $k$ and a hypothesis class $\cH$ with VC dimension $d$ such that for any algorithm, there exists a distribution $\cD$ for which  achieving $\Lstr_{G^\star,\cD}(\hat h)\leq \min_{h^\star\in \cH}\Lstr_{G^\star,\cD}(h^\star)+ \epsilon$ with probability $1-\delta$ requires sample complexity $\Omega(\frac{d\log k + \log(1/\delta)}{\epsilon^2})$. In the realizable case, it requires sample complexity $\Omega(\frac{d\log k + \log(1/\delta)}{\epsilon})$.
\end{restatable}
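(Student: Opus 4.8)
The plan is to derive this lower bound by reduction to the classical PAC sample-complexity lower bounds, applied to the induced labeling class $\bar\cH_{G^\star}$ and fueled by the lower-bound direction of Theorem~\ref{thm:vcd-induced}. That theorem already hands us, for any $d,k$, a class $\cH$ with $\vcd(\cH)=d$ and a graph $G^\star$ of maximum out-degree $k$ such that $\bar\cH_{G^\star}$ shatters a set $U=\{u_1,\ldots,u_D\}$ of $D=d\log k$ source nodes (the nodes $x_{i,0}$, in $d$ independent copies). I would reuse exactly this $(\cH,G^\star)$ pair as the hard instance, so that the remaining work is purely to translate a good strategic learner into a good standard learner for a class of VC dimension $D$.

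The core of the argument is that strategic PAC learning of $(\cH,G^\star)$ is information-theoretically no easier than standard PAC learning of $\bar\cH_{G^\star}$. First I would restrict attention to distributions $\cD$ whose marginal is supported on $U$. For any such $\cD$ and any $h$, the strategic population loss is $\Lstr_{G^\star,\cD}(h)=\Pr_{(x,y)\sim\cD}[\bar h_{G^\star}(x)\neq y]$ with $x\in U$, i.e.\ exactly the $0$-$1$ loss of the effective concept $\bar h_{G^\star}|_U$; and since $\bar\cH_{G^\star}$ shatters $U$, the family $\{\bar h_{G^\star}|_U:h\in\cH\}$ realizes all $2^{D}$ labelings of $U$. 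Second --- and this is the crux --- I would argue that the extra feedback a strategic learner enjoys carries no information about the target beyond the labeled sample $\{(x_t,y_t)\}_{t=1}^T$: the closed neighborhood $N_{G^\star}[x_t]$ is fixed and known to the learner in advance, the manipulated feature $v_t=\pi_{G^\star,h_t}(x_t)$ is a deterministic function of $(x_t,h_t)$, and the realized prediction $\hat y_t=h_t(v_t)$ is likewise determined by $(x_t,h_t)$; none of these depend on $y_t$ or on the labels of unseen points. Consequently any strategic learner's output $\hat h$ is a (possibly randomized) function of an i.i.d.\ sample $\{(x_t,y_t)\}$ from $\cD$, and returning the effective concept $\bar{\hat h}_{G^\star}|_U$ makes it a standard agnostic PAC learner for the fully shattered class on $U$.

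It then remains to invoke the textbook lower bounds for learning a class of VC dimension $D$. In the agnostic case this yields sample complexity $\Omega(\frac{D+\log(1/\delta)}{\epsilon^2})$ and in the realizable case $\Omega(\frac{D+\log(1/\delta)}{\epsilon})$. In both, the term scaling with $D$ comes from the standard hard distribution concentrated on $U$ --- heavy mass on one point, mass $\Theta(\epsilon)$ spread over the remaining points, together with a uniformly random shattering labeling so that points unseen after $T$ draws are unrecoverable --- while the $\log(1/\delta)$ term comes from a separate two-point instance, the two being combined by taking the maximum. Substituting $D=d\log k$ gives precisely the claimed $\Omega(\frac{d\log k+\log(1/\delta)}{\epsilon^2})$ and $\Omega(\frac{d\log k+\log(1/\delta)}{\epsilon})$ bounds, with the quantifiers in the right order: the instance $(\cH,G^\star)$ is fixed first, and the bad $\cD$ is produced per algorithm by the classical minimax argument.

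The main obstacle is the middle step: making rigorous that the adaptive, interactive nature of the strategic learner --- which may choose $h_t$ after seeing $x_t$ and then observes $v_t$, $\hat y_t$, and $N_{G^\star}[x_t]$ --- confers no advantage over receiving $\{(x_t,y_t)\}$ as a passive batch. I would formalize this with a simulation / data-processing argument: conditioned on the i.i.d.\ sample $\{(x_t,y_t)\}$, the entire transcript of the interaction is a deterministic (given the learner's internal randomness) function of that sample and is independent of the target labeling of $U$, so the learner's error is governed by the classical minimax bound for the sample alone. Everything downstream of this observation is the standard VC lower-bound machinery.
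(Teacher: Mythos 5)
Your proposal is correct and follows essentially the same route the paper takes: the paper gives no explicit proof of Theorem~\ref{thm:fi-pac-lb}, stating only that the sample complexity bounds ``follow directly from Theorem~\ref{thm:vcd-induced} through the tools from VC theory,'' i.e.\ reuse the lower-bound construction shattering $d\log k$ source nodes and invoke the classical agnostic/realizable PAC lower bounds for a class of that VC dimension. Your explicit data-processing argument showing that the interactive feedback ($v_t$, $\hat y_t$, neighborhoods) reveals nothing beyond the labeled sample is exactly the step the paper leaves implicit, and it is handled correctly.
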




\subsection{Online Learning}
Recall that in the online setting, the agents are generated by an adversary, and the learner’s goal is to minimize
the Stackelberg regret (which is the mistake bound in the realizable case). 
Similar to Algorithm~\ref{alg:reduction2online-pmf}, we will show that any learning algorithm in the standard online setting can be converted to an algorithm in the strategic online setting, with the mistake bound increased by a logarithmic multiplicative factor of $k$. The main difference from Algorithm~\ref{alg:reduction2online-pmf} is that now we know $x_t$ before implementing the hypothesis, we do not need to reduce the weight of predicting positive by a factor $1/k$ as we did in line~\ref{alg-line:k-fraction-weight} of Algorithm~\ref{alg:reduction2online-pmf}.

\begin{algorithm}[t]\caption{Red2Online-FI: Reduction to online learning 
 in the fully informative setting}\label{alg:reduction2online-fi}
    \begin{algorithmic}[1]
        \STATE \textbf{Input: } a standard online learning algorithm $\cA$
        \STATE \textbf{Initialization: } let the expert set $E = \{\cA\}$ and weight $w_{\cA}$ =1
        \FOR{$t=1,2,\ldots$}
        \STATE \underline{Prediction}: after observing $x_t$
        \IF{$\sum_{A\in E: \exists x\in N_{G^\star}[x_t], A(x) = 1}w_{A} \geq {\sum_{A\in E} w_{A}}/{2}$} \STATE let $h_t(x_t) =1$ and $h_t(x)=0$ for all $x\neq x_t$
        \ELSE 
        \STATE let $h_t =\ind{\emptyset}$ be the all-negative function
        \ENDIF
        \STATE \underline{Update}:  when we make a mistake
        \IF{$y_t = 0$}
        \STATE for all $A\in E$ satisfying $\exists x\in N_{G^\star}[x_t], A(x) = 1$, feed $A$ with $(x,0)$ and update $w_{A} \leftarrow \frac{1}{2} w_{A}$
        \ELSIF{$y_t = 1$}
        \STATE for all $A \in E$ satisfying $\forall x\in N_{G^\star}[x_t], A(x) =0$
        \STATE for all $x\in N_{G^\star}[x_t]$, by feeding $(x,1)$ to $A$, we can obtain a new expert $A(x,1)$
        \STATE remove $A$ from $E$ and add $\{A(x,1)|x\in N_{G^\star}[x_t]\}$ to $E$ with weights $w_{
        A(x,1)} = \frac{w_{A}}{2\abs{N_{G^\star}[x_t]}}$
        \ENDIF
        \ENDFOR
    \end{algorithmic}
\end{algorithm}
\begin{restatable}{theorem}{fiOnline}\label{thm:fi-online}
For any hypothesis class $\cH$, graph $G^\star$ with maximum out-degree $k$, and a standard online learning algorithm with mistake bound $M$ for $\cH$, for any realizable sequence, we can achieve mistake bound of $O(M \log k)$ by Algorithm~\ref{alg:reduction2online-fi}.
By letting the SOA algorithm by~\cite{littlestone1988learning} as the input standard online algorithm $\cA$,  Red2Online-FI(SOA)
makes at most $O(\ld(\cH)\log k)$ mistakes.
\end{restatable}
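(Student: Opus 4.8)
The plan is to run a weighted-majority / potential-function argument over the evolving expert set $E$, adapted to the two kinds of strategic mistakes. The first thing to pin down is how the prediction rule of Algorithm~\ref{alg:reduction2online-fi} translates into an induced label: since $x_t$ is revealed before $h_t$ is chosen, setting $h_t(x_t)=1$ (and $0$ elsewhere) forces the agent to stay at $x_t$ and receive induced label $1$, while the all-negative $h_t=\ind{\emptyset}$ gives induced label $0$. Hence the induced label at $x_t$ equals $1$ exactly when the experts labeling some vertex of $N_{G^\star}[x_t]$ positive carry at least half of the total weight; that is, the algorithm is predicting the weighted majority vote of the experts on the ``does some neighbor deserve a positive?'' question, so a false positive (resp.\ false negative) occurs precisely when that weighted set is $\geq W/2$ but $y_t=0$ (resp.\ $<W/2$ but $y_t=1$).

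Next I would isolate a distinguished ``correct'' expert $A^\star$ and show it survives in $E$ throughout. Fix the realizing $h^\star\in\cH$. I claim there is always an expert that has been fed only examples of the form $(x,h^\star(x))$, maintained inductively: on a false-positive round ($y_t=0$) realizability forces $h^\star$ to label all of $N_{G^\star}[x_t]$ by $0$, so feeding $(x,0)$ to $A^\star$ when it errs by labeling a neighbor positive is consistent with $h^\star$; on a false-negative round ($y_t=1$) some $x^\star$ has $h^\star(x^\star)=1$, and when $A^\star$ labels the whole neighborhood $0$ it is split and the child fed $(x^\star,1)$ stays consistent. Crucially, every time $A^\star$'s weight is touched it is exactly because $\cA$ made a prediction error on the example subsequently fed to it, so the number of such updates is at most the mistake bound $M$ of $\cA$. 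Each false-positive update multiplies $w_{A^\star}$ by $\tfrac12$ and each false-negative update by $\tfrac{1}{2\abs{N_{G^\star}[x_t]}}\ge \tfrac{1}{2(k+1)}$, so after all rounds
\[
    w_{A^\star}\ \ge\ \left(\tfrac{1}{2(k+1)}\right)^{M}.
\]

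For the upper bound I would track the total weight $W=\sum_{A\in E}w_A$, which starts at $1$ and only changes on mistake rounds. On a false-positive mistake the positively-voting experts carry weight $\ge W/2$ and each is halved; on a false-negative mistake the experts labeling all of $N_{G^\star}[x_t]$ by $0$ carry weight $\ge W/2$ and are collectively halved by the split (the children of each such $A$ sum to $w_A/2$). In both cases $W$ drops to at most $\tfrac34 W$, so after $m$ total mistakes $W\le (3/4)^{m}$. Combining with $w_{A^\star}\le W$ gives $\left(2(k+1)\right)^{-M}\le (3/4)^{m}$, and taking logarithms yields $m\le \tfrac{\log(2(k+1))}{\log(4/3)}\,M=O(M\log k)$. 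Instantiating $\cA$ with the SOA, whose mistake bound is $M=\ld(\cH)$, gives the stated $O(\ld(\cH)\log k)$ bound.

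The main obstacle is the bookkeeping around the false-negative split: one must verify simultaneously that the correct child's weight is exactly $w_{A^\star}/(2\abs{N_{G^\star}[x_t]})$ (so the per-update shrinkage of $A^\star$ is no worse than $\tfrac{1}{2(k+1)}$), that this update genuinely coincides with a mistake of $\cA$ on $(x^\star,1)$ so that it is charged against the budget $M$, and that the same split still costs only the constant factor $\tfrac34$ in the global potential $W$. Once these three accountings are aligned, the remainder is the routine weighted-majority potential calculation.
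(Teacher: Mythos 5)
Your proposal is correct and follows essentially the same route as the paper's proof: a weighted-majority potential argument in which every mistake shrinks the total weight by a factor of $3/4$, while a distinguished expert consistent with $h^\star$ survives with weight at least $(2(k+1))^{-M}$ because each of its at most $M$ updates multiplies its weight by no less than $\tfrac{1}{2(k+1)}$. The only cosmetic difference is that the paper phrases the surviving expert as being fed $(\pi_{G^\star,h^\star}(x_t),y_t)$ rather than $(x,h^\star(x))$, but these coincide under realizability, so the accounting is identical.
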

\begin{proof}
Let $\cA$ be an online learning algorithm with mistake bound $M$.
Let $W_t$ denote the total weight of experts at the beginning of round $t$. When we make a mistake at round $t$, there are two cases.
\begin{itemize}
    \item False positive: It means that the induced labeling of $x_t$ is $1$ but the true label is $y_t = 0$. By our prediction rule, we have $\sum_{\cA_H\in E: \exists x\in N_{G^\star}[x_t], \cA_H(x) = 1}w_{\cA_H} \geq \frac{W_t}{2}$, and thus, we will reduce the total weight by at least $\frac{W_t}{4}$. Hence, we have $W_{t+1}\leq (1-\frac{1}{4})W_t$.
    \item False negative: It means that the induced labeling of $x_t$ is $0$, but the true label is $y_t = 1$. This implies that our learner $h_t$ labeled the entire neighborhood of $x_t$ with $0$, but $h^\star$ labels some point in the neighborhood by $1$.
By our prediction rule, we have $\sum_{\cA_H\in E:\forall x\in N_{G^\star}[x_t]\cA_H(x) = 0}w_{\cA_H} \geq \frac{W_t}{2}$.
We split $\cA_H$ into $\{\cA_{H_{x,+1}}|x\in N_{G^\star}[x_t]\}$ and split the weight $w_{\cA_H}$ equally and multiply by $\frac{1}{2}$. Thus, we have $W_{t+1} \leq (1-\frac{1}{4})W_t$.
\end{itemize}
Hence, whenever we make a mistake the total weight is reduced by a multiplicative factor of $\frac{1}{4}$. Let $N$ denote the total number of mistakes by Red2Online-FI($\cA$). Then we have total weight $\leq (1-1/4)^N$.

On the other hand, note that there is always an expert, whose hypothesis class contains $h^\star$, being fed with examples $(\pi_{G^\star,h^\star}(x_t),y_t)$.
At each mistake round $t$, if the expert (containing $h^\star$) makes a false positive mistake, its weight is reduced by half.
If the expert made a false negative mistake, it is split into a few experts, one of which is fed by $(\pi_{G^\star,h^\star}(x_t),y_t)$. This specific new expert will contain $h^\star$ and the weight is reduced by at most $\frac{1}{2(k+1)}$.
Thus, since this expert will make at most $M$ mistakes, the weight of such an expert will be at least $(\frac{1}{2(k+1)})^M$.

Thus we have $(1-1/4)^N \geq  1/(2(k+1))^M$, which yields the bound $N\leq 4M\ln(2(k+1))$.
\end{proof}

Next, we derive a lower bound for the number of mistakes. 
\begin{restatable}{theorem}{fiOnlineLb}\label{thm:fi-online-lb}
    For any $k,d>0$, there exists a graph $G^\star$ with maximum out-degree $k$ and a hypothesis class $\cH$ with Littlestone dimension $\ld(\cH) = d$ such that for any algorithm, there exists a realizable sequence for which the algorithm will make at least $d\log k$ mistakes.
\end{restatable}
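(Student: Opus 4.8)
The plan is to reduce the strategic online mistake-bound game to the standard online mistake-bound game for the induced class $\bar\cH_{G^\star}$, and then to reuse the construction from the lower-bound half of Theorem~\ref{thm:vcd-induced}. First I would observe that in the fully informative realizable setting, after observing $x_t$ the learner can force the induced label $\bar h_{t,G^\star}(x_t)$ to be whichever bit it wishes: implementing the all-negative hypothesis yields $\hat y_t = 0$, while implementing any $h_t$ with $h_t(x_t)=1$ yields $\hat y_t = 1$ (the agent does not manipulate away from a positively labeled node). Consequently the learner's round-$t$ behavior is equivalent to committing to a prediction $\hat y_t\in\{0,1\}$ for the point $x_t$, a mistake occurs exactly when $\hat y_t\neq y_t$, and realizability of the sequence is equivalent to the existence of some $h^\star\in\cH$ with $\bar h^\star_{G^\star}(x_t)=y_t$ for every $t$. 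Thus a lower bound on the number of forced mistakes follows from a shattered configuration of $\bar\cH_{G^\star}$.

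Next I would take exactly the class $\cH$ and graph $G^\star$ built in the lower-bound half of Theorem~\ref{thm:vcd-induced}: $d$ disjoint copies of the gadget on the nodes $\{x_{i,j}\}$, whose only arcs run from each source $x_{i,0}$ to $x_{i,1},\dots,x_{i,k}$, so $G^\star$ has maximum out-degree $k$. That theorem already shows that $\bar\cH_{G^\star}$ shatters the set $S$ of the $d\log k$ sources $\{x^{(\ell)}_{i,0}\}$, i.e.\ $\bar\cH_{G^\star}$ realizes all $2^{d\log k}$ labelings of $S$. The one new ingredient I need beyond the VC statement is $\ld(\cH)=d$ (not merely $\vcd(\cH)=d$). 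For a single copy I would argue $\ld=1$: each node is labeled positive by at most one hypothesis (only $h_j$ can set $x_{i,j}$ positive), so in any depth-$2$ tree the branch that labels the root positive is witnessed by a single fixed hypothesis, which cannot simultaneously realize both labels of the second-level point; hence no depth-$2$ tree is shattered. Additivity of the Littlestone dimension over the $d$ disjoint-domain copies then gives $\ld(\cH)\le d$, and since $\ld\ge\vcd=d$ this is tight.

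Finally I would run the standard adaptive adversary on the shattered set. Present the $d\log k$ sources $p_1,\dots,p_{d\log k}$ of $S$ one at a time in a fixed order; at round $t$, after the deterministic learner commits to $\hat y_t$, set the true label to $y_t=1-\hat y_t$, forcing a mistake. Because $S$ is shattered by $\bar\cH_{G^\star}$, the particular labeling $(y_1,\dots,y_{d\log k})$ produced this way is realized by some $\bar h^\star_{G^\star}$ with $h^\star\in\cH$, so the constructed sequence is realizable with $h^\star$ as a perfect comparator, while the learner errs on every one of the $d\log k$ rounds.

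The main obstacle I anticipate is the middle step, namely confirming $\ld(\cH)=d$ rather than just $\vcd(\cH)=d$; the VC proof of Theorem~\ref{thm:vcd-induced} does not supply this, so I must separately establish that the single-copy gadget has Littlestone dimension $1$ (via the ``each point is positive under at most one hypothesis'' structure) and invoke additivity of $\ld$ over disjoint copies. The reduction and the adversary step are then routine; note that the sources are presented in a fixed order, so plain VC-shattering of $S$ already suffices and no genuine Littlestone tree of $\bar\cH_{G^\star}$ needs to be exhibited (equivalently, one may view $S$ as a degenerate shattered tree of depth $d\log k$).
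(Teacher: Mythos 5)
Your proposal is correct and follows essentially the same route as the paper: the same $(\cX,\cH,G^\star)$ gadget from the lower bound of Theorem~\ref{thm:vcd-induced}, the observation that the learner's choice of $h_t$ collapses to a single induced bit on the presented source node, an adaptive adversary that flips that bit on each of the $d\log k$ sources while shattering keeps the sequence realizable, and a separate verification that the single-copy gadget has Littlestone dimension $1$ (the paper argues this via a one-mistake standard online algorithm rather than your ``each point is positive under at most one hypothesis'' observation, but both are immediate). The only item you omit is the extension from deterministic to randomized learners implicit in ``for any algorithm,'' which the paper itself dispatches only by citing \cite{FilmusHMM23}.
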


\begin{proof}
    Consider the same hypothesis class $\cH$ and the graph $G^\star$ as Theorem~\ref{thm:fi-pac-lb}. Again, let us start with the simplest case of $d=1$. Consider $(k+1)\log k$ nodes $\{x_{i,j}|i = 1,\ldots,\log k, j=0,\ldots, k\}$, where $x_{i,0}$ is connected to $\{x_{i,j}|j=1,\ldots, k\}$ for all $i\in [\log k]$. 
For $j=1,\ldots,k$, let $\text{bin}(j)$ denote the $\log k$-bit-binary representation of $j$. 
Let hypothesis $h_j$ labels $x_{i,j}$ by $\text{bin}(j)_i$ and all the other nodes by $0$.
Let $\cH = \{h_j|j\in [k]\}$. The Littlestone dimension $\ld(\cH) = 1$ because there is an easy online learning in the standard setting by predicting the all-zero function until we observe a positive point $x_{i,j}$. Then we know $h_j$ is the target hypothesis.
    
At round $t$, the adversary picks $x_{t,0}$. For any deterministic learning algorithm:
\begin{itemize}
    \item If the learner predicts any point in $\{x_{t,j}|j=0,1,\ldots,k\}$ by positive, then the prediction is positive. We know half of the hypotheses predict negative. Hence the adversary forces a mistake by letting $y_t = 0$ and reduces half of the hypotheses.
    \item If the learner predicts all-zero over $\{x_{t,j}|j=0,1,\ldots,k\}$, the adversary forces a mistake by letting $y_t = 1$ and reduces half of the hypotheses.
\end{itemize}
We can extend the example to the case of $\ld{\cH} = d$ by making $d$ independent copies of $(X,\cH, G^\star)$. Then we prove the lower bound for all deterministic algorithms.
We can then extend the lower bound to randomized algorithms by the technique of \cite{FilmusHMM23}.
\end{proof}
\section{Post-Manipulation Feedback Setting}\label{app:postMani}
In this setting, the underlying graph $G^\star$ is known, but the original feature $x_t$ is not observable before the learner selects the learner. Instead, either the original feature $x_t$ or the manipulated feature $v_t$ is revealed afterward.
For PAC learning, since we can obtain an i.i.d. sample $(x_t,N_{G^\star}(x_t),y_t)$ by implementing $h_t = \1{\cX}$, we can still run ERM and obtain $\hat h$ by optimizing Eq~\eqref{eq:erm}. Hence, the positive PAC learning result (Theorem~\ref{thm:fi-pac}) in the fully informative setting should still apply here. Since the post-manipulation feedback setting is harder than the fully informative setting, the negative result (Theorem~\ref{thm:fi-pac-lb}) also holds. However, this is not true for online learning.

\subsection{Online Learning}
We have stated the algorithms and results in Section~\ref{sec:highlight-online}. 
Here we only provide the proofs for the theorems.

\fuOnline*
\begin{proof}
Given any online learning algorithm $\cA$ with mistake bound $M$.
Let $W_t$ denote the total weight of experts at the beginning of round $t$. When we make a mistake at round $t$, there are two cases.
\begin{itemize}
    \item False positive: It means that the induced labeling of $x_t$ is $1$ but the true label is $y_t = 0$. Since the true label is $0$, the neighbor $v_t$ we observe is labeled $0$ by the target hypothesis $h^\star$. In this case, we proceed by updating all experts that predict $v_t$ with $1$ with the example $(v_t,0)$ and we also half their weights. Since $h_t(v_t) = 1$, we have $\sum_{\cA_H \in E: \cA_H(x) = 1}w_{\cA_H} \geq \frac{W_t}{2(k+1)}$. Therefore, we have $W_{t+1}\leq W_t(1- \frac{1}{4(k+1)})$.

    \item False negative: It means that the induced labeling of $x_t$ is $0$, but the true label is $y_t = 1$. This implies that our learner $h_t$ labeled the entire neighborhood of $x_t = v_t$ with $0$, but $h^\star$ labels some point in the neighborhood by $1$.
By our prediction rule, we have 
\begin{align*}
    \sum_{\cA_H\in E:\forall x\in N_{G^\star}[x_t]\cA_H(x) = 0}w_{\cA_H}
    =&  W_t - \sum_{x\in N_{G^\star}[x_t]} \sum_{\cA_H\in E: \cA_H(x) = 1} w_{\cA_H}
    \geq  (1- (k+1)\frac{1}{2(k+1)})W_t \\
    =&\frac{1}{2}W_t
\end{align*}
We split $\cA_H$ into $\{\cA_{H_{x,+1}}|x\in N_{G^\star}[x_t]\}$ and split the weight $w_{\cA_H}$ equally and multiply by $\frac{1}{2}$. Thus, we have $W_{t+1} \leq (1-\frac{1}{4})W_t$.
\end{itemize}

Hence, whenever we make a mistake the total weight is reduced by a multiplicative factor of $\frac{1}{4(k+1)}$. Let $N$ denote the total number of mistakes by Red2Online($\cA$)-FI. Then we have total weight $\leq (1-\frac{1}{4(k+1)})^N$.

On the other hand, note that there is always an expert, whose hypothesis class contains $h^\star$, being fed with examples $(\pi_{G^\star,h^\star}(x_t),y_t)$.
At each mistake round $t$, if the expert (containing $h^\star$) makes a false positive mistake, its weight is reduced by half.
If the expert made a false negative mistake, it is split into a few experts, one of which is fed by $(\pi_{G^\star,h^\star}(x_t),y_t)$. This specific new expert will contain $h^\star$ and the weight is reduced by at most $\frac{1}{2(k+1)}$.
Thus, since this expert will make at most $M$ mistakes, the weight of such an expert will be at least $(\frac{1}{2(k+1)})^M$.

Thus we have $(1-\frac{1}{4(k+1)})^N \geq  1/(2(k+1))^M$, which yields the bound $N\leq 4kM\ln(2(k+1))$.
\end{proof}

\fuOnlineLb*        
\begin{proof}
    Consider a star graph $G = (X,E)$ where $x_0$ is the center and $x_1,\ldots,x_k$ are the leaves. 
    The hypothesis class is singletons over leaves, i.e., $H = \{h_i|i\in [k]\}$ with $h_i = \1{x_i}$. Similar to the proof of Theorem~4.6 in  \cite{ahmadi2023fundamental}, any deterministic algorithm will make at least $k-1$ mistakes.
    At round $t$:
\begin{itemize}
    \item If the learner predicts all points in $X$ as $0$, the adversary picks the agent $x_{0}$. The agent will not move, the induced labeling is $0$, and the true label is $1$ (no matter which singleton is the target hypothesis). We do not learn anything about the target hypothesis.
    \item If the learner predicts $x_{0}$ as positive, then the adversary picks the agent  $x_{0}$. The agent does not move, the induced labeling is $1$, and the true label is $0$ (no matter which singleton is the target hypothesis). Again, we learn nothing about the target hypothesis.
    \item If the learner predicts any node $x_{i}$ with $i$ as positive, the adversary picks the agent $x_{i}$ and label it by $0$. The learner's induced labeling of $x_i$ is $1$, and the true label is $0$. Only one hypothesis $\1{X_i}$ is inconsistent and eliminated.
\end{itemize}
Hence, for any deterministic algorithm, there exists a realizable sequence such that the algorithm will make at least $k-1$ mistakes.

    Now consider $d$ independent copies of the star graphs and singletons. More specifically, consider $d(k+1)$ nodes $\{x_{i,j}|i = 1,\ldots,d, j=0,\ldots, k\}$, where $x_{i,0}$ is connected to $\{x_{i,j}|j=1,\ldots, k\}$ for all $i\in [\log k]$. So $\{x_{i,j}|j=0,\ldots, k\}$ compose a star graph.
    For each hypothesis in the hypothesis class $\cH$, it labels exactly one node in $\{x_{i,j}|j=1,\ldots, k\}$ as positive for all $i\in [d]$.
    Hence, $\cH$ has Littlestone dimension $\ld(\cH) = d$.
    Since every copy of the star graph-singletons is independent, we can show that any deterministic algorithm will make at least $d(k-1)$ mistakes.
\end{proof}

\section{Unknown Manipulation Graph Setting}\label{app:unknownGraph}
In this setting, the underlying graph $G^\star$ is unknown. Instead, the learner is given the prior knowledge of a finite graph class $\cG$. When $G^\star$ is undisclosed, we cannot compute $v_t$ given $x_t$. Consequently, the scenarios involving the observation timing of the features encompass the following: observing $x_t$ beforehand followed by $v_t$ afterward, observing $(x_t,v_t)$ afterward, and observing either $x_t$ or $v_t$ afterward, arranged in order of increasing difficulty. In this section, we provide results for the easiest case of observing $x_t$ beforehand followed by $v_t$ afterward. At the end, we will provide a negative result in the second easiest setting of observing $(x_t,v_t)$ afterward.

Since we not only have a hypothesis class $\cH$ but also a graph class $\cG$, we formally define realizability based on both the hypothesis class and the graph class as follows.

\begin{definition}[$(\cG,\cH)$-Realizability]\label{def:rel-graph}
A sequence of agents $(x_1,y_1),\ldots,(x_T,y_T)$ is $(\cG,\cH)$-realizable if there exists a graph $G\in \cG$ such that the neighborhood of $x_t$ in $G$ is identical to that in $G^\star$, i.e., $N_{G}(x_t) = N_{G^\star}(x_t)$ and there exists a perfect hypothesis $h^\star\in \cH$ satisfying $\lstr_{G^\star}(h,(x_t,y_t)) =0$ for all $t=1,\ldots, T$.

For any data distribution $\cD$, we say that $\cD$ is $(\cG,\cH)$-realizable if there exists a graph $G\in \cG$ such that $\PPs{(x,y)\sim \cD}{N_{G}(x)\neq N_{G^\star}(x)} =0$ and there exists a perfect hypothesis  $h^\star\in \cH$ s.t. $\Lstr_{G^\star,\cD}(h^\star)= 0$.
\end{definition}
    
\subsection{PAC Learning}
\paragraph{Realizable PAC Learning} 
We have described our algorithm in Section~\ref{sec:highlight-ug-pac}. Here, we restate the algorithm and the theorems with the proofs.

\underline{Prediction:} At each round $t$, after observing $x_t$, we implement the hypothesis $h(x)=\1{x\neq x_t}$, which labels all every point as positive except $x_t$. Then we can obtain a manipulated feature vector $v_t\sim \Unif(N_{G^\star}(x))$. 

\underline{Output:} 
Let $W$ denote the all graph-hypothesis pairs of $(G,h)$ satisfying that
\begin{align}
    \sum_{t=1}^T \1{v_t\notin N_G(x_t)} = 0 \,,\label{eq:lossG-app} \\
    \sum_{t=1}^T \1{\bar{h}_G(x_t)\neq y_t} = 0\,,\label{eq:lossGH-app}
\end{align}
where Eq~\eqref{eq:lossG-app} guarantees that every observed feature $v_t$ is a neighbor of $x_t$ in $G$ and Eq~\eqref{eq:lossGH-app} guarantees that $h$ has zero empirical strategic loss when the graph is $G$.

Let 
$$(\hat G, \hat h) = \argmin_{(G,h)\in W} \sum_{t=1}^T \abs{N_G(x_t)}$$
be the graph-hypothesis pair such that the graph has a \textbf{minimal empirical degree}. Finally, we output $\hat h$.

\ugPacRel*
\begin{proof}
    Since the manipulation is local (i.e., agents can only manipulate to their neighbors), if the neighborhood of $x$ is the same in two different graphs $G_1, G_2$, i.e., $N_{G_1}(x) = N_{G_2}(x)$, then for any implementation $h$, the induced labeling of $x$ is the same $\bar{h}_{G_1}(x) = \bar{h}_{G_2}(x)$.
Therefore, the strategic loss of $\hat h$ can be written as
\begin{align}
\Lstr_{G^\star,\cD}(\hat h)= &
    \PPs{(x,y)\sim \cD}{\bar{\hat h}_{G^\star}(x)\neq y}\nonumber\\
    \leq& \PPs{(x,y)\sim \cD}{N_{G^\star}(x)\neq N_{\hat G}(x)} +\PPs{(x,y)\sim \cD}{\bar{\hat h}_{\hat G}(x)\neq y \wedge N_{G^\star}(x)= N_{\hat G}(x) }\nonumber\\
    \leq& \PPs{(x,y)\sim \cD}{N_{G^\star}(x)\neq N_{\hat G}(x)} +\PPs{(x,y)\sim \cD}{\bar{\hat h}_{\hat G}(x)\neq y}\,.\label{eq:loss}
\end{align}
We bound the second term by uniform convergence since its empirical estimate is zero (see Eq~\eqref{eq:lossGH-app}) in Lemma~\ref{lmm:pair-term} and the first term in Lemma~\ref{lmm:neighbor-term}.
\end{proof}

\begin{lemma}\label{lmm:pair-term}
    With probability at least $1-\delta$ over $(x_1,y_1),\ldots,(x_T,y_T)\sim \cD^T$, for all $(h,G)$ satisfying $\frac{1}{T}\sum_{t=1}^T \1{\bar{h}_{G}(x_t)\neq y_t}=0$, we have 
    $$\Pr_{(x,y)\sim \cD}(\bar{h}_{G}(x)\neq y) \leq \epsilon\,,$$
    when $T= O(\frac{(d \log(kd) + \log|\cG|)\log(1/\epsilon) + \log(1/\delta)}{\epsilon})$.
    
    Hence, we have
    $$\Pr_{(x,y)\sim \cD}(\bar{\hat h}_{\hat G}(x)\neq y) \leq \epsilon\,.$$
\end{lemma}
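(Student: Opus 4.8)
The plan is to read $\Pr_{(x,y)\sim\cD}(\bar h_G(x)\neq y)$ as the ordinary population $0$-$1$ error of the induced labeling $\bar h_G$, and to prove the statement by a single uniform-convergence argument over the entire family of induced labelings that $(h,G)$ can produce,
\[
\cF \;=\; \bigl\{\,\bar h_G \;:\; h\in\cH,\ G\in\cG\,\bigr\}.
\]
Because $h$ ranges over $\cH$ while $G$ ranges over the \emph{finite} class $\cG$, this family decomposes as a union $\cF=\bigcup_{G\in\cG}\bar\cH_G$ of $|\cG|$ induced-labeling classes, one per candidate graph. With this in hand, the lemma is exactly the realizable-type (zero empirical error) generalization guarantee for $\cF$: every member of $\cF$ that is consistent with the sample has population error at most $\epsilon$. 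Note that the statement does not presuppose realizability; it is purely a ``consistency implies small error'' claim for the class $\cF$, so no perfect pair $(h^\star,G)$ needs to exist.

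First I would bound the complexity of each piece. By Theorem~\ref{thm:vcd-induced}, for every fixed $G\in\cG$ we have $\vcd(\bar\cH_G)\le d\log(kd)$ with $d=\vcd(\cH)$, and crucially this bound is \emph{uniform} over $\cG$ since it depends only on $\vcd(\cH)$ and on the shared degree bound $k$. I would then combine the $|\cG|$ pieces by a union bound over graphs: fixing $G$ and invoking the classical realizable VC sample-complexity bound, with probability at least $1-\delta/|\cG|$ every $\bar h_G$ of zero empirical error has population error at most $\epsilon$, provided $T=\Omega\bigl((d\log(kd)\log(1/\epsilon)+\log(|\cG|/\delta))/\epsilon\bigr)$. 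A union bound over the $|\cG|$ graphs drives the total failure probability to $\delta$, and since $\log(|\cG|/\delta)=\log|\cG|+\log(1/\delta)$, the resulting $T$ is dominated by the stated $O\bigl((d\log(kd)+\log|\cG|)\log(1/\epsilon)+\log(1/\delta))/\epsilon\bigr)$ (in fact the $\log|\cG|$ term appears without the $\log(1/\epsilon)$ factor, which only strengthens the conclusion). Alternatively, one can bound $\vcd(\cF)$ directly: by Sauer--Shelah the growth function satisfies $\tau_\cF(n)\le|\cG|\cdot(en/D)^{D}$ with $D=d\log(kd)$, giving $\vcd(\cF)=\tilde O(d\log(kd)+\log|\cG|)$, which fed into the realizable VC rate reproduces the stated bound.

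The displayed inequality then holds simultaneously for all consistent $(h,G)$; applying it to the output pair $(\hat G,\hat h)$, which satisfies $\sum_{t=1}^T\1{\bar{\hat h}_{\hat G}(x_t)\neq y_t}=0$ by Eq~\eqref{eq:lossGH-app}, yields $\Pr_{(x,y)\sim\cD}(\bar{\hat h}_{\hat G}(x)\neq y)\le\epsilon$, which is the second assertion. I do not expect a genuine obstacle here, since the lemma is essentially a repackaging of Theorem~\ref{thm:vcd-induced} together with textbook VC generalization. The only point requiring care is the bookkeeping in the union step: one must ensure the $\log|\cG|$ contribution enters \emph{additively} (either through the $1-\delta/|\cG|$ failure probability or through the growth function of $\cF$), so that the final sample complexity has the stated additive form $d\log(kd)+\log|\cG|$ rather than the multiplicative $d\log(kd)\cdot\log|\cG|$ that a naive argument might produce.
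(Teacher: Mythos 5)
Your proposal is correct and matches the paper's argument: the paper likewise bounds the growth function of the composite class $\cH\circ\cG=\{\bar h_G : h\in\cH,\ G\in\cG\}$ by $|\cG|\cdot O((nk)^d)$ (summing the per-graph counts from Theorem~\ref{thm:vcd-induced}), concludes $\vcd(\cH\circ\cG)=O(d\log(kd)+\log|\cG|)$, and applies the realizable VC bound to all consistent pairs. Your union-bound-over-graphs variant is an equivalent repackaging of the same idea, and your bookkeeping of the additive $\log|\cG|$ term is consistent with the stated sample complexity.
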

\begin{proof}
Let $\ell_{\text{pair}}(h,G) := \Pr_{(x,y)\sim \cD}(\bar{h}_{G}(x)\neq y)$ denote the loss of the hypothesis-graph pair and $\hat \ell_{\text{pair}}(h,G) = \frac{1}{T}\sum_{t=1}^T \1{\bar{h}_{G}(x_t)\neq y_t}$ denote the corresponding empirical loss.
To prove the lemma, it suffices to bound the VC dimension of the composite function class $\cH\circ \cG := \{\bar{h}_{G}|h\in \cH, G\in \cG\}$ and then apply the tool of VC theory.

Since for any $n$ points, the number of labeling ways by $\cH\circ \cG$ is upper bounded by the sum of the number of labeling ways by $\bar \cH_G$ over all graphs $G\in \cG$. As discussed in the proof of Theorem~\ref{thm:vcd-induced}, for any fixed $G$ with maximum out-degree $k$, the number of labeling ways by $\bar \cH_G$ is at most $O((nk)^d)$. Therefore, the total number of labeling ways over $n$ points by $\cH\circ \cG$ is at most $O(|\cG|(nk)^d)$. Hence, we have $\vcd(\cH\circ \cG) = O(d \log(kd) + \log(|\cG|))$.
Therefore, by VC theory, we have
\begin{align*}
    \Pr(\ell_{\text{pair}}(\hat h,\hat G) >\epsilon)\leq \Pr(\exists (h,G)\in \cH\times \cG \text{ s.t. }\ell_{\text{pair}}(h, G) >\epsilon, \hat \ell_{\text{pair}}(h, G)=0)\leq \delta
\end{align*}
when $T= O(\frac{(d \log(kd) + \log|\cG|)\log(1/\epsilon) + \log(1/\delta)}{\epsilon})$.

\end{proof}
\begin{lemma}\label{lmm:neighbor-term}
    With probability at least $1-\delta$ over $x_{1:T},v_{1:T}$, for all $G$ satisfying $\frac{1}{T}\sum_{t=1}^T\1{N_{G^\star}(x_t)\neq N_{G}(x_t)} = 0$, we have 
    $$\Pr_{(x,y)\sim \cD}(N_{G^\star}(x)\neq N_{G}(x)) \leq \epsilon\,,$$
when $T\geq \frac{8k\log(|\cG|/\delta)}{\epsilon}$. 

Hence, we have    
    $$\Pr_{(x,y)\sim \cD}(N_{G^\star}(x)\neq N_{\hat G}(x)) \leq \epsilon\,.$$
\end{lemma}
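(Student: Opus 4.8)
The plan is to treat this lemma as a \emph{realizable} uniform-convergence bound over the finite graph class $\cG$, where each $G\in\cG$ carries the $0$--$1$ loss $\1{N_{G^\star}(x)\neq N_G(x)}$ recording an \emph{exact} neighborhood mismatch at $x$. Write $\mathrm{err}(G):=\Pr_{x\sim\cD}(N_{G^\star}(x)\neq N_G(x))$ for the population loss and $\widehat{\mathrm{err}}(G):=\frac{1}{T}\sum_{t=1}^{T}\1{N_{G^\star}(x_t)\neq N_G(x_t)}$ for its empirical version, so that the hypothesis of the lemma is exactly $\widehat{\mathrm{err}}(G)=0$ and the target conclusion is $\mathrm{err}(G)\le\epsilon$ for every such $G$. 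First I would note that $\widehat{\mathrm{err}}(G)$ depends on the sample only through $x_{1:T}$ and not through the sampled neighbors $v_{1:T}$; hence it suffices to bound the failure probability over the draw of $x_{1:T}$, and the presence of $v_{1:T}$ in the probability space is irrelevant for this event.

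The core of the argument is the usual survival-probability estimate. Fix any \emph{bad} graph $G\in\cG$ with $\mathrm{err}(G)>\epsilon$. Because the $x_t$ are i.i.d., the probability that no sample exhibits a mismatch is
\begin{equation*}
\Pr\big(\widehat{\mathrm{err}}(G)=0\big)=\big(1-\mathrm{err}(G)\big)^{T}<(1-\epsilon)^{T}\le e^{-\epsilon T}.
\end{equation*}
A union bound over the at most $|\cG|$ bad graphs then gives $\Pr\big(\exists\,G\in\cG:\ \mathrm{err}(G)>\epsilon,\ \widehat{\mathrm{err}}(G)=0\big)\le|\cG|\,e^{-\epsilon T}$, which is at most $\delta$ whenever $T\ge\frac{1}{\epsilon}\log\frac{|\cG|}{\delta}$. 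This establishes the uniform claim; the threshold $T\ge\frac{8k\log(|\cG|/\delta)}{\epsilon}$ stated in the lemma is larger by the factor $8k\ge1$ and is therefore comfortably sufficient. On the complementary event, which has probability at least $1-\delta$, every $G$ with $\widehat{\mathrm{err}}(G)=0$ satisfies $\mathrm{err}(G)\le\epsilon$, i.e.\ the first displayed inequality of the lemma holds uniformly.

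To deduce the closing ``Hence'' statement I would instantiate the uniform bound at the algorithm's output $\hat G$: once $\widehat{\mathrm{err}}(\hat G)=0$ is known, the bound immediately yields $\Pr_{x\sim\cD}(N_{G^\star}(x)\neq N_{\hat G}(x))\le\epsilon$. I expect the main obstacle to lie precisely in justifying $\widehat{\mathrm{err}}(\hat G)=0$, i.e.\ that $\hat G$ reproduces $N_{G^\star}(x_t)$ on \emph{every} training point rather than merely being arc-consistent through the single observed neighbor $v_t$. In the realizable regime the comparator graph $G_0\in\cG$ with $N_{G_0}=N_{G^\star}$ almost surely attains $\widehat{\mathrm{err}}(G_0)=0$, so the set of empirically exact graphs is nonempty and the uniform conclusion is nonvacuous; tying the conclusion to $\hat G$ itself is the delicate bookkeeping point, which must be discharged from the algorithm's selection rule, while the uniform-convergence estimate above remains the substantive content of the lemma.
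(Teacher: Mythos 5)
Your survival-probability argument correctly establishes the first displayed claim (and you are right that, for that claim alone, the $v_{1:T}$ play no role), but the ``Hence'' step is where the lemma actually lives, and it cannot be discharged from the algorithm's selection rule as you hope: the premise $\hatlgraph(\hat G)=0$ is false in general. The algorithm never verifies $N_{\hat G}(x_t)=N_{G^\star}(x_t)$ --- it cannot, since $N_{G^\star}(x_t)$ is never observed. Constraint Eq~\eqref{eq:lossG-app} only enforces $v_t\in N_{\hat G}(x_t)$ for the \emph{single} uniformly drawn neighbor $v_t$, together with minimal empirical degree. For instance, a graph obtained from $G^\star$ by deleting every arc that was never realized as some $v_t$ is arc-consistent, has strictly smaller empirical degree, and has $N_{\hat G}(x_t)\neq N_{G^\star}(x_t)$ at essentially every sample point with $\abs{N_{G^\star}(x_t)}\geq 2$; so the set of graphs covered by your uniform bound need not contain $\hat G$ at all. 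Consequently the randomness in $v_{1:T}$ is not irrelevant --- it is the only statistical handle on $G^\star$, and it carries the substance of the proof.

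The paper's proof fills exactly this hole in three steps that your proposal is missing. First, Claim~\ref{cla:neighborhood-loss} decomposes $\1{N_{G^\star}(x)\neq N_G(x)}\leq \abs{N_G(x)\setminus N_{G^\star}(x)}+\abs{N_{G^\star}(x)\setminus N_G(x)}$ and uses the minimal-empirical-degree property to bound the extra-arc term by the missing-arc term, yielding $\hatlgraph(G)\leq \frac{2k}{T}\sum_{t=1}^T \Pr_{v\sim \Unif(N_{G^\star}(x_t))}(v\notin N_G(x_t))$; this is where the factor $k$ enters, so the $8k$ in $T\geq \frac{8k\log(\abs{\cG}/\delta)}{\epsilon}$ is the essential price of converting single-neighbor arc-consistency into neighborhood loss, not the ``comfortable slack'' you describe. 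Second, Claim~\ref{cla:empirical-neighbor} applies an empirical Bernstein bound over $v_{1:T}$ (for $\hat G$ both the empirical consistency loss and its sample variance vanish) to conclude $\hatlgraph(\hat G)\leq\epsilon$ --- small, but not zero. Third, a Chernoff-plus-union-bound step over $x_{1:T}$ upgrades this to $\lgraph(\hat G)\leq 2\epsilon$; note this requires the two-sided form ``$\hatlgraph(G)\leq\epsilon \Rightarrow \lgraph(G)\leq 2\epsilon$ uniformly over $\cG$,'' which your zero-empirical-loss version of uniform convergence does not supply. So the gap in your proposal is not bookkeeping: you proved the easy shell of the lemma while the minimal-degree regularization argument, which is the paper's central idea in this setting, is absent.
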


\begin{proof}[of Lemma~\ref{lmm:neighbor-term}]
Let $\lgraph (G) = \Pr_{(x,y)\sim \cD}(N_{G^\star}(x)\neq N_{G}(x))$ denote the loss of a graph $G$, which is the 0/1 loss of neighborhood prediction.
Let $\hatlgraph (G) = \frac{1}{T}\sum_{t=1}^T\1{N_{G^\star}(x_t)\neq N_{G}(x_t)}$ denote the corresponding empirical loss.
It is hard to get an unbiased estimate of the loss  $\lgraph (G)$ since we cannot observe the neighborhood of any sampled $x$. 
However, it is easy for us to observe a $v \in N_{G^\star}(x)$ and remove all inconsistent $G$. Then the challenge is: how can we figure out the case of a strictly larger graph? This corresponds to the case that $N_{G^\star}(x)$ is a strict subset of $N_{G}(x)$. We deal with this case by letting $\hat G$ be the ``smallest'' consistent graph.
\begin{claim}\label{cla:neighborhood-loss}
Suppose that $G^\star$ and all graphs $G\in \cG$ have maximum out-degree at most $k$. For any $G\in \cG$ with the minimal empirical degrees, i.e., $\sum_{t=1}^T\abs{N_{G}(x_t)} - \abs{N_{G^\star}(x_t)} \leq 0$, we have 
    $$\hatlgraph (G)\leq \frac{2k}{T}\sum_{t=1}^T \Pr_{v\sim \Unif(N_{G^\star}(x_t)) }( v\notin N_{G}(x_t))\,.$$
\end{claim}
\begin{proof}
We decompose
\begin{align}
    \1{N_{G^\star}(x)\neq N_{G}(x)}
    \leq \abs{N_{G}(x)\setminus N_{G^\star}(x)} + \abs{N_{G^\star}(x)\setminus N_{G}(x)}\,.\label{eq:decompose}
\end{align}

    Since $\sum_{t=1}^T\abs{N_{G}(x_t)} \leq \sum_{t=1}^T \abs{N_{G^\star}(x_t)}$, we have 
\begin{align}
    \sum_{t=1}^T\abs{N_{G}(x_t)\setminus N_{G^\star}(x_t)} \leq \sum_{t=1}^T\abs{N_{G^\star}(x_t)\setminus N_{G}(x_t)}\,,\label{eq:mindegree}
\end{align}
by deducting $\sum_{t=1}^T |N_{G}(x_t)\cap N_{G^\star}(x_t)|$ on both sides.

By combining Eqs~\eqref{eq:decompose} and \eqref{eq:mindegree}, we have
\begin{align*}
    \hatlgraph (G) =&\frac{1}{T}\sum_{t=1}^T \1{N_{G^\star}(x_t)\neq N_{G}(x_t)}\\
    \leq &  \frac{2}{T}\sum_{t=1}^T\abs{N_{G^\star}(x_t)\setminus N_{G}(x_t)}\\
    = & \frac{2}{T}\sum_{t=1}^T\sum_{v\in N_{G^\star}(x_t)}\1{v\notin N_{G}(x_t)}\\
    = & \frac{2}{T}\sum_{t=1}^T \abs{N_{G^\star}(x_t)}\Pr_{v\sim \Unif(N_{G^\star}(x_t)) }( v\notin N_{G}(x_t))\\
    \leq & \frac{2}{T}\sum_{t=1}^T k\Pr_{v\sim \Unif(N_{G^\star}(x_t)) }( v\notin N_{G}(x_t))
    \,.
\end{align*}
\end{proof}
Now we have connected $\hatlgraph (G)$ with $\frac{1}{T}\sum_{t=1}^T \Pr_{v\sim \Unif(N_{G^\star}(x_t)) }( v\notin N_{G}(x_t))$, where the latter one is estimable. 
Since $\hat G$ is a consistent graph, we have
$$\frac{1}{T}\sum_{t=1}^T \1{ v_t\notin N_{\hat G}(x_t))} =0\,,$$
which is an empirical estimate of $\frac{1}{T}\sum_{t=1}^T \Pr_{v\sim \Unif(N_{G^\star}(x_t)) }( v\notin N_{\hat G}(x_t))$.
Then by showing that this loss is small, we can show that  $\hatlgraph (\hat G)$ is small.
\begin{claim}\label{cla:empirical-neighbor}
    Suppose that $G^\star$ and all graphs $G\in \cG$ have maximum out-degree at most $k$ and $\cD$ is $(\cG,\cH)$-realizable. For any fixed sampled sequence of $x_{1:T}$, with probability at least $1-\delta$ over $v_{1:T}$ (where $v_t$ is sampled from $\Unif(N_{G^\star}(x_t))$), we have 
    $$\hatlgraph (\hat G)\leq\epsilon\,,$$
    when $T\geq \frac{14k\log(|\cG|/\delta)}{3\epsilon}$.
\end{claim}
\begin{proof}
According to Claim~\ref{cla:neighborhood-loss}, we only need to upper bound $\frac{1}{T}\sum_{t=1}^T \Pr_{v\sim \Unif(N_{G^\star}(x_t)) }( v\notin N_{\hat G}(x_t))$ by $\frac{\epsilon}{2k}$.
For any graph $G$, by empirical Bernstein bounds~(Theorem 11 of \cite{maurer2009empirical}), with probability at least $1-\delta$ over $v_{1:T}$ we have
\begin{align*}
    \frac{1}{T}\sum_{t=1}^T \Pr_{v\sim \Unif(N_{G^\star}(x_t)) }( v\notin N_{G}(x_t))\leq \frac{1}{T}\sum_{t=1}^T \1{v_t\notin N_{G}(x_t)} + \sqrt{\frac{2V_{G,T}(v_{1:T})\log(2/\delta)}{T}} +\frac{7\log(1/\delta)}{3T}\,,
\end{align*}
where $V_{G,T}(v_{1:T}) := \frac{1}{T(T-1)} \sum_{t,\tau=1}^T \frac{(\1{v_t\notin N_{G}(x_t)}-\1{v_\tau\notin N_{G}(x_\tau)} )^2}{2}$ is the sample variance.
Since $\hat G$ is consistent, we have $\frac{1}{T}\sum_{t=1}^T \1{v_t\notin N_{\hat G}(x_t)} =0$ and $V_{\hat G,T}(v_{1:T}) =0$.
Then by union bound over all $G\in \cG$, we have that with probability at least $1-\delta$,
\begin{align*}
    \frac{1}{T}\sum_{t=1}^T \Pr_{v\sim \Unif(N_{G^\star}(x_t)) }( v\notin N_{\hat G}(x_t))\leq \frac{7\log(|\cG|/\delta)}{3T}\,.
\end{align*}
\end{proof}

Now we have that for any fixed sampled sequence of $x_{1:T}$, w.p. at least $1-\delta$ over $v_{1:T}$,  $\hatlgraph (\hat G)\leq\epsilon$. We will apply concentration inequality over $x_{1:T}$ to show that $ \lgraph (\hat G)$ is small and then finish the proof of Lemma~\ref{lmm:neighbor-term}.
\begin{align*}
    &\Pr_{x_{1:T},v_{1:T}}(\lgraph (\hat G) >2\epsilon) \\
    \leq & \Pr_{x_{1:T},v_{1:T}}(\lgraph (\hat G) >2\epsilon, \hatlgraph (\hat G)\leq\epsilon) + \Pr_{x_{1:T},v_{1:T}}(\hatlgraph (\hat G)>\epsilon) \\
    \leq& \Pr_{x_{1:T}}(\exists G\in \cG, \lgraph (G) >2\epsilon,\hatlgraph ( G)\leq\epsilon) + \delta \tag{Claim~\ref{cla:empirical-neighbor}}\\
    \leq & 2\delta\,,
\end{align*}
where the last inquality holds by Chernoff bounds when $T\geq \frac{8}{\epsilon} (\log(|\cG|/\delta))$. Hence, combined with Claim~\ref{cla:empirical-neighbor}, we need $T \geq O(\frac{k\log(|\cG|/\delta)}{\epsilon})$ overall. 

\end{proof}

\thmUgPacRelLb*
\begin{proof}
    Consider the input space of one node $o$ and $n(n+1)$ nodes $\{x_{ij}|i=0,\ldots,n, j=1,\ldots,n\}$, which are partitioned into $n$ subsets $X_0 = \{x_{01}, x_{02},...x_{0n}\}, X_1 = \{x_{11}, x_{12},...x_{1n}\},..., X_n$. The hypothesis will label one set in $\{X_1,\ldots,X_n\}$ by positive and the hypothesis class is $H = \{\1{X_i}|i\in [n]\}$. The target function is $\1{X_{i^*}}$. This class is analogous to singletons if we view each group as a composite node. However, since the degree of the manipulation graph is limited to $1$, we split one node into $n$ copies.
    The marginal data distribution put probability mass $1-2\epsilon$ on the irrelevant node $o$ and the remaining $2\epsilon$ uniformly over $X_0$.

    Let the graph class $\cG$ be the set of all graphs which connect $x_{0i}$ to at most one node in $\{ x_{1i},\ldots,x_{ni}\}$ for all $i$. So the cardinality of the graph class $\cG$ is $|\cG| = (n+1)^n$.
    
    When the sample size $T$ is smaller than $\frac{n}{8\epsilon}$, we can sample at most $n/2$ examples from $X_0$ with constant probability by Chernoff bounds. 
    Then there are at least $n/2$ examples in $X_0$ that have not been sampled.

    W.l.o.g., let $x_{01},\ldots,x_{0\frac{n}{2}}$ denote the sampled examples. The graph $G$ does not put any edge on these sampled examples. So all these examples are labeled as negative no matter what $i^*$ is. 
    Then looking at the output $\hat h$ at any unseen example $x_{0j}$ in $X_0$.
    \begin{itemize}
    \item If $\hat h$ predicts all points in $\{x_{0j}, x_{1j},\ldots,x_{nj}\}$ as $0$, we add an edge between $x_{0j}$ and $x_{i^*j}$ to $G$. So $\hat h$ misclassfy $x_{0j}$ under manipulation graph $G$.
    \item If $\hat h$ predicts $x_{0j}$ as positive, then we do not add any edge on $x_{0t}$ in $G$.  So $\hat h$ will classify $x_{0j}$ as $1$ but the target hypothesis will label $x_{0j}$ as $0$.
    \item If $\hat h$ predicts any node $x_{ij}$ with $i\neq i^*\in [n]$ as positive, we add an edge between $x_{0j}$ and $x_{ij}$. So $\hat h$ will classify $x_{0j}$ as $1$ but the target hypothesis will label $x_{0j}$ as $0$.
    \item If $\hat h$ predicts exactly $x_{i^*j}$ in $\{x_{0j}, x_{1j},\ldots,x_{nj}\}$ as positive. Then since we can arbitrarily pick $i^*$ at the beginning, there must exist at least one $i^*$ such that at most this case will not happen.
\end{itemize}
Therefore, $\hat h$ misclassfy every unseen point in $X_0$ under graph $G$ and $\Lstr_{G,\cD}(\hat h)\geq \epsilon$.
\end{proof}

\paragraph{Agnostic PAC Learning} 
Now, we explore the agnostic setting where there may be no perfect graph in $\cG$, no perfect hypothesis $h^\star\in \cH$, or possibly neither. In the following, we first define the loss of the optimal loss of $\cH$ and the optimal loss of $\cG$ and aim to find a predictor with a comparable loss.

\begin{definition}[Optimal loss of $\cH$]
Let $\Delta_\cH$ denote the optimal strategic loss achievable by hypotheses in $\cH$. That is to say,  $$\Delta_\cH := \inf_{h\in \cH} \Lstr_{G^\star,\cD}(h)\,.$$
\end{definition}

\begin{definition}[Optimal loss of $\cG$]\label{def:opt-loss-g}
Let $\Delta_\cG$ denote the graph loss (0/1 loss of neighborhood) of the optimal graph $G^\dagger\in \cG$. 
That is to say,  $$\Delta_\cG := \min_{G^\dagger\in \cG} \lgraph (G)= \min_{G^\dagger\in \cG} \Pr_{(x,y)\sim \cD}(N_{G^\star}(x)\neq N_{G}(x))\,.$$
\end{definition}


We start introducing our algorithm by providing the following lemma, which states that if we are given an approximately good graph, then by minimizing the strategic loss under this approximate graph, we can find an approximately good hypothesis.
\begin{lemma} \label{lmm:given-app-graph}
    Given a graph $G$ with the loss of neighborhood being $\lgraph (G)=\alpha$, for any $h$ being $\epsilon$-approximate optimal under manipulation graph $G$ is the true graph, i.e., $h$ satisfies
    $\Lstr_{G,\cD}(h)\leq \inf_{h^\star\in \cH}\Lstr_{G,\cD}(h^\star) +\epsilon$,
    it must satisfy
    $$\Lstr_{G^\star,\cD}(h) \leq 2\alpha + \Delta_\cH +\epsilon\,.$$
\end{lemma}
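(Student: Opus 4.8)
The plan is to reduce the entire statement to a single \emph{transfer inequality}: for any \emph{fixed} hypothesis, the strategic losses under two graphs $G$ and $G^\star$ differ by at most the probability that the graphs disagree on the neighborhood of a random point. Concretely, I would first establish that for every $h\in\cH$,
$$\abs{\Lstr_{G^\star,\cD}(h) - \Lstr_{G,\cD}(h)} \leq \Pr_{(x,y)\sim\cD}\bigl(N_{G^\star}(x)\neq N_G(x)\bigr) = \lgraph(G) = \alpha.$$
The justification is exactly the locality observation already used in the proof of Theorem~\ref{thm:ug-pac-rel}: since an agent can only manipulate within its out-neighborhood, whenever $N_{G^\star}(x)=N_G(x)$ the induced labelings coincide, $\bar h_{G^\star}(x)=\bar h_G(x)$, so the indicators $\1{\bar h_{G^\star}(x)\neq y}$ and $\1{\bar h_G(x)\neq y}$ agree. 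Hence the set on which they differ is contained in $\{N_{G^\star}(x)\neq N_G(x)\}$, which has probability $\alpha$; taking expectations yields the displayed bound. Note this is a deterministic (population-level) inequality valid for each $h$ separately, so no uniform-convergence machinery is needed.

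With this in hand I would apply it twice. First, applied to the returned hypothesis $h$ itself, it gives $\Lstr_{G^\star,\cD}(h)\leq \Lstr_{G,\cD}(h)+\alpha$. Second, applied to each $h'\in\cH$ and then taking the infimum, it gives
$$\inf_{h'\in\cH}\Lstr_{G,\cD}(h') \leq \inf_{h'\in\cH}\bigl(\Lstr_{G^\star,\cD}(h')+\alpha\bigr) = \Delta_\cH + \alpha,$$
using the definition of $\Delta_\cH$. Since $h$ is $\epsilon$-approximately optimal under $G$ by hypothesis, we also have $\Lstr_{G,\cD}(h) \leq \inf_{h'\in\cH}\Lstr_{G,\cD}(h') + \epsilon$.

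Chaining the three facts then produces
$$\Lstr_{G^\star,\cD}(h) \leq \Lstr_{G,\cD}(h) + \alpha \leq \inf_{h'\in\cH}\Lstr_{G,\cD}(h') + \epsilon + \alpha \leq \Delta_\cH + 2\alpha + \epsilon,$$
which is the claim. The argument is essentially triangle-inequality bookkeeping, so I do not anticipate a genuine obstacle; the only point requiring care is the \emph{direction} in which the transfer inequality is invoked. It must be used once to upper-bound $\Lstr_{G^\star}(h)$ by $\Lstr_G(h)$ (for the specific output $h$) and once in the reverse direction to compare the two comparator quantities $\inf_{h'}\Lstr_G(h')$ and $\Delta_\cH = \inf_{h'}\Lstr_{G^\star}(h')$, so that $\alpha$ is \emph{added} in both places. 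This is precisely what accumulates the factor $2\alpha$ rather than a single $\alpha$, and getting either direction backwards would spuriously cancel the graph-approximation error.
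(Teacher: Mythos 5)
Your proposal is correct and follows essentially the same route as the paper: the paper's proof is exactly the chain $\Lstr_{G^\star,\cD}(h)\leq \Lstr_{G,\cD}(h)+\alpha\leq \Lstr_{G,\cD}(h^\star)+\epsilon+\alpha\leq \Lstr_{G^\star,\cD}(h^\star)+2\alpha+\epsilon$, where each graph-swap step is justified by decomposing on the event $\{N_{G^\star}(x)=N_G(x)\}$ and using that the induced labelings coincide there. Your packaging of the two swaps into a single symmetric transfer inequality (and taking the infimum over $h'$ rather than invoking an achieving minimizer $h^\star$) is a clean, equivalent presentation of the same argument.
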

\begin{proof}
By definition, we have
    \begin{align*}
        \Lstr_{G^\star,\cD}(h)  =&\PPs{(x,y)\sim \cD}{\bar {h}_{G^\star}(x)\neq y} \\
        = &\PPs{(x,y)\sim \cD}{\bar {h}_{G}(x)\neq y \wedge N_{G^\star}(x) = N_{G}(x)} + \PPs{(x,y)\sim \cD}{\bar {h}_{G^\star}(x)\neq y \wedge N_{G^\star}(x) \neq N_{G}(x)}\\
        \leq &\PPs{(x,y)\sim \cD}{\bar {h}_{G}(x)\neq y} + \PPs{(x,y)\sim \cD}{N_{G^\star}(x) \neq N_{G}(x)}\\
        \leq &\PPs{(x,y)\sim \cD}{\bar {h^\star}_{G}(x)\neq y} +\epsilon + \alpha\tag{$\epsilon$-approximate optimality of $h$}\\
        = &\PPs{(x,y)\sim \cD}{\bar {h^\star}_{G^\star}(x)\neq y \wedge N_{G^\star}(x) = N_{G}(x)} + \PPs{(x,y)\sim \cD}{N_{G^\star}(x) \neq N_{G}(x)} +\epsilon+ \alpha\\
        \leq &\PPs{(x,y)\sim \cD}{\bar {h^\star}_{G^\star}(x)\neq y} +\epsilon+ 2\alpha =\Delta_\cH +\epsilon+ 2\alpha\,.
    \end{align*}
\end{proof}

Then the remaining question is: \textbf{How to find a good approximate graph?} As discussed in the previous section, the graph loss $\lgraph(\cdot)$ is not estimable. Instead, we construct a proxy loss that is not only close to the graph loss but also estimable. 
We consider the following alternative loss function as a proxy:
$$\lproxy(G) = 2\mathbb{E}_{x}[P_{v\sim N_{G^\star}(x)}(v\notin N_G(x))] + \frac{1}{k}\mathbb{E}[|N_G(x)|] - \frac{1}{k}\mathbb{E}[|N_{G^\star}(x)|]\,,$$
where $k$ is the degree of graph $G^\star$. Note that the first two terms are estimable and the third term is a constant. Hence, we can minimize this proxy loss.

\lmmproxyLoss*
\begin{proof}
Let $d_G = \mathbb{E}[|N_G(x)|] - \mathbb{E}[|N_{G^\star}(x)|]$ denote the difference between the expected degree of $G$ and that of $G^\star$.
We have 
\begin{equation}
    d_G = \mathbb{E}_{(x,y)\sim D}[|N_G(x)|] - \mathbb{E}_{(x,y)\sim D}[|N_{G^\star}(x)|] = \mathbb{E}[|N_G(x)\setminus N_{G^\star}(x)|] - \mathbb{E}[|N_{G^\star}(x)\setminus N_{G}(x)|]\,.\label{eq:dg}
\end{equation}
 

Then, we have 
\begin{align*}
    \lproxy(G) =& 2\mathbb{E}_{x}[\frac{|N_{G^\star}(x)\setminus N_{G}(x)|}{|N_{G^\star}(x)|}] +\frac{1}{k} d_G \\
    \geq& 2\cdot \frac{1}{k}\mathbb{E}_{x}[|N_{G^\star}(x)\setminus N_{G}(x)|] +\frac{1}{k} d_G \\
    =& \frac{1}{k} (\mathbb{E}[|N_G(x)\setminus N_{G^\star}(x)| + |N_{G^\star}(x)\setminus N_{G}(x)|])\tag{Applying Eq~\eqref{eq:dg}}\\
    \geq & \frac{1}{k} \lgraph (G)\,.
\end{align*}

On the other hand, we have
\begin{align*}
    \lproxy(G) =& 2\mathbb{E}_{x}[\frac{|N_{G^\star}(x)\setminus N_{G}(x)|}{|N_{G^\star}(x)|}] +\frac{1}{k} d_G \\ 
    \leq & 2\EEs{x}{\1{N_{G}(x) \neq N_{G^\star}(x)}} +\EEs{x}{\1{N_{G}(x) \neq N_{G^\star}(x)}}\\
    =& 3 \lgraph(G)\,.
\end{align*}
\end{proof}

Given a sequence of $S = ((x_1,v_1),\ldots,(x_T,v_T))$, we define the empirical proxy loss over the sequence $S$ as 
$$\hatlproxy (G, S) = \frac{2}{T}\sum_{t=1}^T\1{v_t \notin N_G(x_t)} + \frac{1}{kT}\sum_{t=1}^T |N_G(x_t)|- \frac{1}{k}\EEs{(x,y)\sim D}{|N_{G^\star}(x)|}$$
where the first two terms are the empirical estimates of the first two terms of $\lproxy(G)$ and the last term is not dependent on $G$, which is a constant.

Similar to the realizable setting, by implementing the hypothesis $h_t=\1{x\neq x_t}$, which labels every point as positive except $x_t$, we observe the manipulated feature vector $v_t\sim \Unif(N_{G^\star}(x))$. 

Given two samples, $S_1 = ((x_1,v_1,y_1),\ldots, (x_{T_1},v_{T_1},y_{T_1}))$ and $S_2=((x_1',v_1',y_1'),\ldots, (x_{T_2}',v_{T_2}',y_{T_2}'))$, we use $S_1$ to learn an approximate good graph and $S_2$ to learn the hypothesis.
Let $\hat G$ be the graph minimizing the empirical proxy loss, i.e., $$\hat G = \argmin_{G\in \cG} \hatlproxy(G, S_1)= \argmin_{G\in \cG} \frac{2}{T}\sum_{t=1}^{T_1}\1{v_t \notin N_G(x_t)} + \frac{1}{kT}\sum_{t=1}^{T_1} |N_G(x_t)|\,.$$
Then let $\hat h$ be the ERM implementation assuming that $\hat G$ is the true graph, i.e.,
$$\hat h =\argmin_{h\in \cH} \frac{1}{T_2}\sum_{t=1}^{T_2} \1{\bar h_{\hat G} (x_t') \neq y_t'}\,.$$
Next, we bound the strategic loss of $\hat h$ using $\epsilon,\Delta_\cG$ and $\Delta_\cH$.
\begin{theorem}\label{thm:ug-pac-agn}
For any hypothesis class $\cH$ with $\vcd(\cH)=d$, the underlying true graph $G^\star$ with maximum out-degree at most $k$, finite graph class $\cG$ in which all graphs have maximum out-degree at most $k$, any data distribution, and any $\epsilon,\delta\in (0,1)$, with probability at least $1-\delta$ over $S_1\sim \cD^{T_1}, S_2\sim \cD^{T_2}$ and $v_{1:T_1}$ where $T_1=O( \frac{k^2\log(|\cG|/\delta)}{\epsilon^2})$ and $T_2= O( \frac{d\log(kd) + \log(1/\delta)}{\epsilon^2})$
, the output $\hat h$ satisfies
\begin{equation*}
        \Lstr_{G^\star,\cD}(\hat h)\leq 6k\Delta_\cG  +\Delta_\cH + \epsilon\,.
\end{equation*}
\end{theorem}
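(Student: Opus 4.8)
The plan is to exploit the two-phase structure the algorithm already imposes: the first sample $S_1$ (with the random neighbors $v_{1:T_1}$) is used only to select a graph $\hat G$ by minimizing the empirical proxy loss, and the second, independent sample $S_2$ is used only to run ERM against the strategic loss induced by $\hat G$. The backbone of the argument is Lemma~\ref{lmm:given-app-graph}: if $\hat G$ has graph loss $\alpha := \lgraph(\hat G)$ and $\hat h$ is $\epsilon_2$-approximately optimal under $\hat G$, then $\Lstr_{G^\star,\cD}(\hat h) \le 2\alpha + \Delta_\cH + \epsilon_2$. So it suffices to establish (i) $\alpha \le 3k\Delta_\cG + \epsilon/4$ from the graph-learning phase, and (ii) $\epsilon_2 \le \epsilon/2$ from the ERM phase, and then add the two contributions.

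For (i), I would first set up uniform convergence for the proxy loss. Each summand $2\cdot\1{v_t \notin N_G(x_t)} + \frac{1}{k}\abs{N_G(x_t)}$ lies in $[0,3]$, the pairs $(x_t,v_t)$ are i.i.d.\ with $v_t\sim\Unif(N_{G^\star}(x_t))$, and the constant third term $-\frac{1}{k}\EE[\abs{N_{G^\star}(x)}]$ contributes no fluctuation. Hence Hoeffding's inequality plus a union bound over the $|\cG|$ graphs gives, with probability at least $1-\delta/2$, that $\abs{\hatlproxy(G,S_1)-\lproxy(G)}\le \epsilon'$ simultaneously for all $G\in\cG$, with $\epsilon' = O(\sqrt{\log(|\cG|/\delta)/T_1})$. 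Letting $G^\dagger$ be the proxy-optimal graph, optimality of $\hat G$ yields $\lproxy(\hat G)\le \hatlproxy(\hat G,S_1)+\epsilon' \le \hatlproxy(G^\dagger,S_1)+\epsilon' \le \lproxy(G^\dagger)+2\epsilon'$, and by the upper bound of Lemma~\ref{lmm:proxy-loss} this is at most $3\lgraph(G^\dagger)+2\epsilon' = 3\Delta_\cG + 2\epsilon'$. Applying the lower bound of Lemma~\ref{lmm:proxy-loss}, $\lgraph(\hat G)\le k\,\lproxy(\hat G)\le 3k\Delta_\cG + 2k\epsilon'$. Choosing $\epsilon' = \epsilon/(8k)$ forces $T_1 = O(k^2\log(|\cG|/\delta)/\epsilon^2)$ and delivers $\alpha \le 3k\Delta_\cG + \epsilon/4$.

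For (ii), I would condition on $S_1$ so that $\hat G$ is a fixed graph independent of $S_2$. By Theorem~\ref{thm:vcd-induced}, $\vcd(\bar\cH_{\hat G}) = O(d\log(kd))$, so the standard agnostic uniform-convergence bound for the class $\bar\cH_{\hat G}$ applied to the independent sample $S_2$ shows that, with probability at least $1-\delta/2$, the ERM output satisfies $\Lstr_{\hat G,\cD}(\hat h)\le \inf_{h\in\cH}\Lstr_{\hat G,\cD}(h)+\epsilon/2$ once $T_2 = O((d\log(kd)+\log(1/\delta))/\epsilon^2)$; that is, $\hat h$ is $(\epsilon/2)$-approximately optimal under $\hat G$. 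Since the failure probability is at most $\delta/2$ for every fixed graph and $\hat G$ is a function of $S_1$ alone, the bound also holds unconditionally after integrating over $S_1$.

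Finally, I would union-bound the two failure events (total probability at most $\delta$) and feed $\alpha \le 3k\Delta_\cG + \epsilon/4$ together with $\epsilon_2 = \epsilon/2$ into Lemma~\ref{lmm:given-app-graph}, obtaining $\Lstr_{G^\star,\cD}(\hat h)\le 2\alpha + \Delta_\cH + \epsilon_2 \le 6k\Delta_\cG + \Delta_\cH + \epsilon$. The crux of the argument—and the reason the guarantee is competitive only up to the multiplicative factor $k$ and demands $T_1 \propto k^2$—is that $\lgraph$ is not directly estimable, so we must route through the proxy, and Lemma~\ref{lmm:proxy-loss} can only pin $\lgraph$ down within a factor of $k$; this forces estimating the proxy to the sharper accuracy $\epsilon/k$, which simultaneously produces the $k^2$ sample cost and the $6k\Delta_\cG$ term. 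The only genuinely delicate checkpoint is confirming that the i.i.d.\ structure of $(x_t,v_t)$ is preserved so that Hoeffding legitimately applies to the proxy summands; the remaining steps are routine uniform-convergence bookkeeping.
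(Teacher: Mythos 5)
Your proposal is correct and follows essentially the same route as the paper: Hoeffding plus a union bound over $\cG$ to get uniform convergence of the empirical proxy loss at accuracy $\epsilon/(8k)$, the two-sided comparison of Lemma~\ref{lmm:proxy-loss} to convert proxy-optimality of $\hat G$ into $\lgraph(\hat G)\le 3k\Delta_\cG+\epsilon/4$, VC-based uniform convergence of the strategic loss under the fixed graph $\hat G$ on the independent sample $S_2$, and Lemma~\ref{lmm:given-app-graph} to combine the two. The only slip is labeling $G^\dagger$ as the \emph{proxy}-optimal graph while writing $3\lgraph(G^\dagger)=3\Delta_\cG$; taking $G^\dagger$ to be the $\lgraph$-minimizer instead (as the paper does) makes the chain valid verbatim, since the step $\hatlproxy(\hat G,S_1)\le\hatlproxy(G^\dagger,S_1)$ only uses that $\hat G$ minimizes the empirical proxy loss over all of $\cG$.
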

\begin{proof}
    We first prove that $\lgraph(\hat G) \leq 3k \Delta_\cG + 2\epsilon$.
    
    By Hoeffding bounds and union bound, with probability at least $1-\delta/2$ over $S_1$, for all $G\in \cG$,
    \begin{equation}
        |\hatlproxy(G, S_1) -\lproxy(G)| \leq \epsilon_1\,,
    \end{equation}
    when $T_1= O(\frac{\log(|\cG|/\delta)}{\epsilon_1^2})$.
    
    Hence, we have
    \begin{align*}
        \lgraph(\hat G)
        \leq & k \lproxy(\hat G)\tag{Applying Lemma~\ref{lmm:proxy-loss}}\\
        \leq & k\hatlproxy(\hat G, S_1) +k \epsilon_1\\
        \leq & k\hatlproxy(G^\dagger, S_1) + k \epsilon_1\\
        \leq & k\lproxy(G^\dagger) +2k \epsilon_1\\
        \leq & 3k \lgraph(G^\dagger) + 2k \epsilon_1\tag{Applying Lemma~\ref{lmm:proxy-loss}}\\
        = & 3k \Delta_\cG + 2k \epsilon_1\,.\tag{Def~\ref{def:opt-loss-g}}
    \end{align*}

    Then by VC theory and uniform convergence, with probability at least $1-\delta/2$ over $S_2$, for all $h\in \cH$, 
    \begin{equation*}
        |\hatlstr_{\hat G}(h) - \Lstr_{\hat G,\cD}(h)| \leq \epsilon_2\,,
    \end{equation*}
    when $T_2=O( \frac{d\log(kd) + \log(1/\delta)}{\epsilon_2^2})$.
    
    Therefore $\hat h$ is an approximately good implementation if $\hat G$ is the true graph, i.e.,
    \begin{equation*}
        \Lstr_{\hat G,\cD}(\hat h)\leq \min_{h\in \cH}\Lstr_{\hat G,\cD}(h) + 2\epsilon_2\,.
    \end{equation*}
    Then by applying Lemma~\ref{lmm:given-app-graph}, we get
    \begin{equation*}
        \Lstr_{G^\star,\cD}(\hat h)\leq 6k\Delta_\cG  +\Delta_\cH +4k\epsilon_1+ 2\epsilon_2\,. 
    \end{equation*}  
    We complete the proof by plugging in $\epsilon_1 = \frac{\epsilon}{8k}$ and $\epsilon_2 = \frac{\epsilon}{4}$.
\end{proof}


\subsection{Application of the Graph Learning Algorithms in Multi-Label Learning}
Our graph learning algorithms have the potential to be of independent interest in various other learning problems, including multi-label learning. To illustrate, let us consider scenarios like the recommender system, where we aim to recommend movies to users. In such cases, each user typically has multiple favorite movies, and our objective is to learn this set of favorite movies.
Similarly, in the context of object detection in computer vision, every image often contains multiple objects. Here, our learning goal is to accurately identify and output the set of all objects present in a given image.

This multi-label learning problem can be effectively modeled as a bipartite graph denoted as $G^\star = (\cX, \cY, \cE^\star)$, where $\cX$ represents the input space (in the context of the recommender system, it represents users), $\cY$ is the label space (in this case, it signifies movies), and $\cE^\star$ is a set of edges. In this graph, the presence of an edge $(x, y) \in \cE^\star$ implies that label $y$ is associated with input $x$ (e.g., user $x$ liking movie $y$). Our primary goal here is to learn this graph, i.e., the edges $\cE^\star$. More formally, given a marginal data distribution $\cD_\cX$, our goal is to find a graph $\hat G$ with minimum neighborhood prediction loss $\lgraph (G)= \Pr_{x\sim \cD_\cX}(N_{G}(x)\neq N_{G^\star}(x))$. Suppose we are given a graph class $\cG$, then our goal is to find a graph $G^\dagger$ such that 
\[G^\dagger = \argmin_{G\in \cG}\lgraph (G).\]

However, in real-world scenarios, the recommender system cannot sample a user along with her favorite movie set.
Instead, at each time $t$, the recommender recommends a set of movies $h_t$ to the user, and the user randomly clicks on one of the movies in $h_t$ that she likes (i.e., $v_t\in N_{G^\star}(x_t)\cap h_t$). Here we abuse the notation a bit by letting $h_t$ represent the set of positive points labeled by this hypothesis.
This setting perfectly fits into our unknown graph setting.

Therefore, in the realizable setting where $\lgraph (G^\dagger) =0$, we can find $\hat G$ satisfying $\lgraph (\hat G) \leq \epsilon$ given 
$O(\frac{8k\log(|\cG|/\delta)}{\epsilon})$ examples according to Lemma~\ref{lmm:neighbor-term} .
In the agnostic setting, we can find $\hat G$ satisfying $\lgraph (\hat G) \leq 3k\lgraph (G^\dagger) + \epsilon$ given $O(\frac{k^2 \log(|\cG|/\delta)}{\epsilon^2})$ examples according to Theorem~\ref{thm:ug-pac-agn}.

\subsection{Online Learning}
In the unknown graph setting, we do not observe $N_{G^\star}(x_t)$ or $N_{G^\star}(v_t)$ anymore.
We then design an algorithm by running an instance of Algorithm~\ref{alg:reduction2online-pmf} over the majority vote of the graphs.
\begin{algorithm}[H]\caption{Online Algorithm in the Unknown Graph Setting}\label{alg:ug-online}
    \begin{algorithmic}[1]
        \STATE initialize an instance $\cA=$ Red2Online-PMF (SOA, $2k$) 
        \STATE let $\cG_1 = \cG$
        \FOR{$t=1,2,\ldots$}
        \STATE \underline{Prediction}: after observing $x_t$, for every node $x$
        \IF{$(x_t,x)$ are connected in at most half of the graphs in $\cG_t$}
        \STATE $h_t(x) = 1$\label{alg-line:type1}
        \ELSE
        \STATE let the prediction at $x$ follow $\cA$, i.e., $h_t(x) = \cA(x)$ \label{alg-line:type2}
        \ENDIF
        \STATE \underline{Update}:  when we make a mistake at the observed node $v_t$:
        \IF{$(x_t,v_t)$ are connected in at most half of the graphs in $\cG_t$}
        \STATE update $\cG_{t+1} = \{G\in \cG_t| (x_t,v_t) \text{ are connected in }G\}$ to be the set of consistent graphs
        \ELSE
        \STATE feed $\cA$ with $(v_t, \tilde N(v_t), \hat y_t, y_t)$ with
       $\tilde N(v_t) = \{x| \abs{\{G\in \cG_t|(v_t,x) \text{ are connected in } G\}} \geq |\cG_t|/2\}$, which is the set of vertices which are an out-neighbor of $v_t$ in more than half of the graphs in $\cG_t$
        \STATE $\cG_{t+1} = \cG_t$
        \ENDIF
        \ENDFOR
    \end{algorithmic}
\end{algorithm}

\ugOnline*
\begin{proof}
Let $W_t$ denote the total weight of experts at the beginning of round $t$ in the algorithm instance $\cA$. 
When we make a mistake at round $t$, there are two cases.
\begin{itemize}
    \item Type 1: $(x_t,v_t)$ are connected in at most half of the graphs in $\cG_t$. In this case, we can remove half of the graphs in $\cG_t$, i.e., $\abs{\cG_{t+1}}\leq \frac{\abs{\cG_t}}{2}$. The total number of mistakes in this case will be $N_1\leq \log(\abs{\cG})$.

    \item Type 2: $(x_t,v_t)$ are connected in more than half of the graphs in $\cG_t$. In this case, our prediction rule applies $\cA$ to predict at $v_t$. Then there are two cases (as we did in the proof of Theorem~\ref{thm:fu-online})
    \begin{itemize}
        \item False positive: meaning that we predicted the neighbor $v_t$ of $x_t$ by $1$ but the correct prediction is $0$. This means that the neighbor $v_t$ we saw should also be labeled as $0$ by the target implementation $h^\star$. In this case, since Algorithm~\ref{alg:reduction2online-pmf} does not use the neighborhood to update, and it does not matter what $\tilde N_{G^\star}(v_t)$ is.
    \item False negative: meaning we predicted $x_t$ with $0$ but the correct prediction is $1$. In this case, $v_t=x_t$, the entire neighborhood $N_{G^\star}(x_t)$ is labeled as $0$ by $h_t$, and $h^\star$ labeled some point in $N_{G^\star}(x_t)$ by $1$.
    Since only nodes in $\tilde N(v_t)$ are labeled as $0$ by our algorithm (as stated in line~\ref{alg-line:type1}, all nodes not in $\tilde N(v_t)$ are labeled as $1$ by $h_t$),  the true neighborhood $N_{G^\star}(v_t)$ must be a subset of $\tilde N(v_t)$. Since all graphs in $\cG$ has maximum out-degree at most $k$, we have $|\tilde N(v_t)|\leq \frac{k\cdot |\cG_t|}{|\cG_t|/2} = 2k$.  
    \end{itemize}
    Then, by repeating the same analysis of Theorem~\ref{thm:fu-online} (since the analysis only relies on the fact that the observed $N_{G^\star}(v_t)$ satisfy $|N_{G^\star}(v_t)|\leq k$), we will make at most $N_2\leq O(k\log k \cdot \ld(\cH))$ type 2 mistakes.

\end{itemize}
Therefore, we will make at most $N_1+N_2 \leq  O(\log\abs{\cG}+k\log k \cdot \ld(\cH))$ mistakes.
\end{proof}

\ugOnlineLb*
\begin{proof}
The example is very similar to the one in the proof of Theorem~\ref{thm:ug-pac-rel-lb}.
    Consider the input space of $n(n+1)$ nodes, which are partitioned into $n+1$ subsets $X_0 =\{x_{01}, x_{02},...x_{0n}\}, X_1 = \{x_{11}, x_{12},...x_{1n}\},..., X_n$. The hypothesis will label one set in $\{X_1,\ldots,X_n\}$ by positive and the hypothesis class is $H = \{\1{X_i}|i\in [n]\}$. The target function is $\1{X_{i^*}}$. This class is analogous to singletons if we view each group as a composite node. However, since the degree of the manipulation graph is limited to $1$, we split one node into $n$ copies.
    
    The agent will always pick an agent in $X_0$ and the true label is positive only when it is connected to $X_{i^*}$. To ensure that the degree of every node is at most $1$, only one node in each set will only be used in round $t$, i.e., $\{x_{0t}, x_{1t},\ldots,x_{nt}\}$.

At round $t$:
\begin{itemize}
    \item If the learner predicts all points in $\{x_{0t}, x_{1t},\ldots,x_{nt}\}$ as $0$, the adversary picks the agent $x_{0t}$ and adds an edge between $x_{0t}$ and $x_{i^*t}$. The agent does not move, the learner predicts $0$, and the true label is $1$ no matter what $i^*$ is. We do not learn anything about the target function.
    \item If the learner predicts $x_{0t}$ as positive, then the adversary picks the agent  $x_{0t}$  and does not add any edge on  $x_{0t}$. The agent does not move, the learner predicts $1$, and the true label is $0$. We learn nothing.
    \item If the learner predicts any node $x_{it}$ with $i\neq i^*\in [n]$ as positive, the adversary picks the agent  $x_{it}$  and adds no edge  on $x_{0t}$. The agent will stay at $x_{it}$, the learner predicts $1$, and the true label is $0$. But we can only eliminate one hypothesis $\1{X_i}$.
\end{itemize}
Hence there must exist an $i^*$ the algorithm will make at least $n-1$ mistakes.
Since all possible graphs will add at most one edge between $x_{0t}$ and $\{x_{1t},\ldots,x_{nt}\}$, the graph class $\cG$ has at most $(n+1)^n$ graphs.
\end{proof}
    
Next, we restate Proposition~\ref{prop:online-v-lb}. 
\uglessinfo*
\begin{proof}
    Consider $n+2$ nodes, $A,B$ and $C_1, C_2,\ldots, C_n$. 
    The graph class $\cG$ has $n$ graphs in the form of $A-B-C_i$ for $i\in [n]$. 
    The hypothesis class is singletons over $C_1,...,C_n$. 
    So we have $n$ graphs with degree $2$ and $n$ hypotheses. 
Suppose that the true graph is $A-B-C_{i^*}$ and the target function is $\1{C_{i^*}}$. Hence only the agents $B$ and $C_{i^*}$ are positive.

Then at each round:
\begin{itemize}
    \item If the learner is all-negative, the adversary picks agent $(B,1)$. The learner makes a mistake at 
    $x_t = v_t =B$, and learn nothing.
    \item If the learner implements $h_t$ satisfying $h_t(B)= 1$ (or $h_t(A)= 1$), then the adversary picks agent $(A,-1)$. The learner makes a mistake at $v_t = B$ (or $v_t = A$) and learns nothing.
    \item If the learner implements $h_t$ predicting any $C_i$ with $i \neq i^*$ by positive, then the adversary picks agent $(C_i, -1)$. The learner makes a mistake at $v_t = x_t = C_i$ and can only eliminate one hypothesis $\1{C_{i}}$ (and graph).
\end{itemize}
Hence, there must exist an $i^*$ such that the learner makes at least $n-1$ mistakes.
\end{proof}

\section{From Realizable Online Learning to Agnostic Online Learning}\label{app:online-agn}
Since we cannot achieve sublinear regret even in the standard online learning by deterministic algorithms, we have to consider randomized algorithms in agnostic online learning as well.
Following the model of ``randomized algorithms'' (instead of ``fractional classifiers'') by \cite{ahmadi2023fundamental},
for any randomized learner $p$ (a distribution over hypotheses), the strategic behavior depends on the realization of the learner. Then the strategic loss of $p$ is 
\begin{equation*}
    \lstr_G (p,(x,y)) = \EEs{h \sim p}{ \lstr_G(h,(x,y))}\,.
\end{equation*}

When we know $x_t$ and $N_{G^\star}(x_t)$, we actually have full information feedback (i.e., we can compute the prediction of every hypothesis ourselves). 
Hence, we can apply the realizable-to-agnostic technique by constructing a cover of $\cH$ with $T^M$ experts, where $M$ is the mistake bound in the realizable setting, and then running multiplicative weights over these experts.

For the construction of the experts, we apply the same method by~\cite{ben2009agnostic}. The only minor difference is that the learner in our setting needs to produce a hypothesis at each round to induce some labeling instead of deciding a labeling directly. Given the hypothesis $\tilde h_t$ generated by a realizable algorithm $\cA_{\text{REL}}$, to flip the prediction induced by $\tilde h_t$, we can change the entire neighborhood of $x_t$ to negative if $\tilde h_t$'s prediction is positive, and predict $x_t$ by positive if $\tilde h_t$'s prediction is negative. If we do not know $(x_t,N_{G^\star}(x_t))$, then we do not even know how to flip the prediction.

\begin{algorithm}[t]\caption{Construction of expert $\text{Exp}(i_1,\ldots,i_L)$ }\label{alg:expert-cover}
    \begin{algorithmic}[1]
    \STATE \textbf{Input: } an algorithm in the realizable setting $\cA_{\text{REL}}$ and indices $i_1,\ldots,i_L$
    \FOR{$t=1,\ldots, T$}{
    \STATE receive $x_t$
    \STATE let $\tilde h_t$ denote the hypothesis produced by $\cA_{\text{REL}}$, i.e., $\tilde h_t = \cA_{\text{REL}}((x_1,N_{G^\star}(x_1),\tilde y_1), \ldots,(x_{t-1},N_{G^\star}(x_{t-1}),\tilde y_{t-1}))$
    \IF{$t\in \{i_1,\ldots,i_L\}$}{
    \IF{$\tilde h_t$ labels the whole $N_{G^\star}(x_t)$ by negative}{
    \STATE define $h_t = \tilde h_t$ except $h_t(x_t) = 1$ and implement $h_t$
    \STATE let $\tilde y_t = 1$
    }
    \ELSE
    \STATE define $h_t$ s.t. $h_t$ labels the entire neighborhood $N_{G^\star}(x_t)$ by negative and implement $h_t$
    \STATE let $\tilde y_t = 0$
    \ENDIF
    }
    \ELSE   
    \STATE implement $\tilde h_t$ and let $\tilde y_t = \bar{\tilde h_t}_{G^\star}(x_t)$  
    \ENDIF
    }
    \ENDFOR
    \end{algorithmic}
\end{algorithm}
\begin{lemma}
    Let $\cA_{\text{REL}}$ be an algorithm that makes at most $M$ mistakes in the realizable setting. Let $x_1,\ldots,x_T$ be any sequence of instances. For any $h\in \cH$, there exists $L\leq M$ and indices $i_1,\ldots,i_L$, such that running $\text{Exp}(i_1,\ldots,i_L)$ (Algorithm~\ref{alg:expert-cover}) on the sequence $((x_1,N_{G^\star}(x_1)), \ldots, (x_T,N_{G^\star}(x_T)))$ generate the same prediction as implementing $h$ on each round, i.e., $\bar h_{tG^\star}(x_t) = \bar h_{G^\star}(x_t)$ for all $t$.
\end{lemma}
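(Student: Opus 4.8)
The plan is to feed $\cA_{\text{REL}}$ the label sequence \emph{induced by the target hypothesis}, and to place the flip indices exactly at the rounds where $\cA_{\text{REL}}$ errs on that sequence. Fix $h\in\cH$ and write $y_t := \bar h_{G^\star}(x_t)$ for the labeling induced by $h$ at round $t$. Consider the hypothetical run in which $\cA_{\text{REL}}$ is presented with the examples $(x_t, N_{G^\star}(x_t), y_t)$. Because every $y_t$ is realized by $h$ itself (i.e.\ $\lstr_{G^\star}(h,(x_t,y_t))=0$ for all $t$), this label sequence is realizable, so $\cA_{\text{REL}}$ makes at most $M$ mistakes on it. Let $i_1<\cdots<i_L$ with $L\le M$ be exactly the rounds on which the hypothesis $\tilde h_t$ produced by $\cA_{\text{REL}}$ in this run has induced labeling $\bar{\tilde h_t}_{G^\star}(x_t)\neq y_t$. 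The claim is that $\text{Exp}(i_1,\ldots,i_L)$ reproduces this run and matches $h$ at every step.

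The core is an induction on $t$ showing: (i) the label $\tilde y_s$ that the expert feeds to its internal copy of $\cA_{\text{REL}}$ equals $y_s$ for every $s<t$, so that the recomputed $\tilde h_t$ coincides with the hypothesis of the hypothetical run; and (ii) the induced labeling of the implemented hypothesis satisfies $\bar{h_t}_{G^\star}(x_t)=y_t$. The base case is vacuous. For the step, (i) follows directly from the inductive hypothesis applied to the fed labels, since $\cA_{\text{REL}}$'s output depends only on the history $((x_s,N_{G^\star}(x_s),\tilde y_s))_{s<t}$. Now, if $t\notin\{i_1,\ldots,i_L\}$, then by the choice of the flip set $\bar{\tilde h_t}_{G^\star}(x_t)=y_t$; the expert implements $h_t=\tilde h_t$ and records $\tilde y_t=\bar{\tilde h_t}_{G^\star}(x_t)=y_t$, giving both (i) and (ii). If instead $t\in\{i_1,\ldots,i_L\}$, then $\bar{\tilde h_t}_{G^\star}(x_t)=1-y_t$, and the expert modifies $\tilde h_t$ into $h_t$ whose induced labeling is the complement, recording $\tilde y_t$ equal to that complement; hence $\bar{h_t}_{G^\star}(x_t)=1-\bar{\tilde h_t}_{G^\star}(x_t)=y_t=\tilde y_t$, again giving (i) and (ii).

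Assertion (ii), holding for all $t$ with $L\le M$ flip indices, is exactly the desired conclusion $\bar{h_t}_{G^\star}(x_t)=\bar h_{G^\star}(x_t)$. The step that requires the most care — and the only place where the strategic (manipulation) structure enters — is verifying that the two branches of Algorithm~\ref{alg:expert-cover} truly complement the \emph{induced} labeling $\bar{\tilde h_t}_{G^\star}(x_t)$ rather than the raw value $\tilde h_t(x_t)$. Concretely, the induced labeling is $0$ precisely when the whole \emph{closed} neighborhood $N_{G^\star}[x_t]$ is labeled negative (so the agent has no positive neighbor to move to); in that case setting $h_t(x_t)=1$ forces $\bar{h_t}_{G^\star}(x_t)=1$, while in the complementary case relabeling all of $N_{G^\star}[x_t]$ negative forces $\bar{h_t}_{G^\star}(x_t)=0$. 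These two modifications therefore flip the induced labeling, which is what the inductive step used, completing the argument.
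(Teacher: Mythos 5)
Your proof is correct and follows essentially the same route as the paper's: run $\cA_{\text{REL}}$ on the labels $\bar h_{G^\star}(x_t)$ induced by the target $h$ (a realizable sequence, hence at most $M$ mistakes), take the mistake rounds as the flip indices, and observe that the expert then reproduces that run while agreeing with $h$ everywhere. Your added induction and the observation that the two branches of Algorithm~\ref{alg:expert-cover} must flip the \emph{induced} labeling via the closed neighborhood $N_{G^\star}[x_t]$ are welcome extra rigor that the paper's brief argument leaves implicit, but they do not constitute a different approach.
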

\begin{proof}
    Fix $h\in \cH$ and the sequence $((x_1,N_{G^\star}(x_1)), \ldots, (x_T,N_{G^\star}(x_T)))$. Then consider running $\cA_{\text{REL}}$ over $((x_1,N_{G^\star}(x_1), \bar h(x_1)), \ldots, (x_T,N_{G^\star}(x_T),\bar h(x_T)))$ and $\cA_{\text{REL}}$ makes $L\leq M$ mistakes. Define $\{i_1,\ldots,i_L\}$ to be the set of rounds $\cA_{\text{REL}}$ makes mistakes. Since the prediction of $\cA_{\text{REL}}$ is only different from $\bar h_{G^\star}$ at rounds in $\{i_1,\ldots,i_L\}$, the predictions when implementing $\text{Exp}(i_1,\ldots,i_L)$ are the same as implementing $h$.
\end{proof}
We finish with a result that holds directly by following~\cite{ben2009agnostic}.
\begin{proposition}
    \label{res:agn-online}
By running multiplicative weights over the experts, we can achieve regret
\begin{align*}
    \regret(T)\leq O(\sqrt{k\log k\cdot \ld(\cH) T\log(T)})\,.
\end{align*} 
\end{proposition}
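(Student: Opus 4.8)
The plan is to apply the standard realizable-to-agnostic conversion of \cite{ben2009agnostic}, instantiated with the expert family $\{\text{Exp}(i_1,\ldots,i_L)\}$ built in Algorithm~\ref{alg:expert-cover} and the cover lemma just proved. The first step is to count the experts. Each expert is determined by a subset $\{i_1,\ldots,i_L\}\subseteq[T]$ of the rounds on which it deviates from the base learner $\cA_{\text{REL}}$, with $L\le M$, where $M=O(k\log k\cdot\ld(\cH))$ is the realizable mistake bound obtained by taking Red2Online-PMF(SOA) as $\cA_{\text{REL}}$ (Theorem~\ref{thm:fu-online}). Hence the number of experts is at most $\sum_{L=0}^{M}\binom{T}{L}\le (M+1)T^{M}$, so that $\log(\#\text{experts}) = O(M\log T) = O\!\left(k\log k\cdot\ld(\cH)\cdot\log T\right)$.

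The second step is to control the comparator. By the cover lemma, for every $h\in\cH$ there is an index set $(i_1,\ldots,i_L)$ with $L\le M$ such that $\text{Exp}(i_1,\ldots,i_L)$ reproduces exactly the induced labeling $\bar h_{G^\star}(x_t)$ of $h$ on every round. Consequently the cumulative strategic loss of the best expert in hindsight is at most $\min_{h\in\cH}\sum_{t=1}^T\lstr_{G^\star}(h,(x_t,y_t))$, which is precisely the benchmark subtracted in the definition of $\regret(T)$. Thus competing with the best expert suffices to compete with the best hypothesis.

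The third step is to run Hedge / multiplicative weights over these experts. The crucial structural fact, noted in the discussion preceding Algorithm~\ref{alg:expert-cover}, is that once $x_t$ and $N_{G^\star}(x_t)$ are available, the learner has \emph{full-information} feedback: after observing $y_t$ it can compute the induced label $\bar{g}_{G^\star}(x_t)$, and hence the $\{0,1\}$-valued strategic loss, of \emph{every} expert $g$, not just of the one played. I would therefore maintain Hedge weights, sample an expert according to the current distribution $p_t$, and let the agent best-respond to the realized hypothesis. By the definition of the strategic loss of a randomized learner, $\lstr_{G^\star}(p_t,(x_t,y_t))=\E_{g\sim p_t}[\lstr_{G^\star}(g,(x_t,y_t))]$, so the expected per-round loss equals the weight-averaged expert loss and the classical Hedge analysis applies without modification.

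Finally, the standard Hedge guarantee against the best of $N$ experts over $T$ rounds with losses in $[0,1]$ is $O(\sqrt{T\log N})$. Substituting $\log N=O(k\log k\cdot\ld(\cH)\cdot\log T)$ yields
\[
\regret(T)=O\!\left(\sqrt{k\log k\cdot\ld(\cH)\cdot T\log T}\right),
\]
as claimed. I expect the only genuinely delicate point to be the bookkeeping in the third step, namely verifying that the realized strategic loss of the sampled hypothesis and its expectation over $p_t$ coincide with the Hedge loss vector, so that this truly reduces to full-information prediction with expert advice; the expert count and the final square-root optimization of the learning rate are routine.
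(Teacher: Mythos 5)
Your proposal is correct and follows essentially the same route as the paper: build the $\le (M{+}1)T^{M}$ experts from Algorithm~\ref{alg:expert-cover} with $M=O(k\log k\cdot\ld(\cH))$ via Red2Online-PMF(SOA), invoke the cover lemma so the best expert matches the best $h\in\cH$, observe that knowledge of $x_t$ and $N_{G^\star}(x_t)$ yields full-information feedback on every expert's strategic loss, and apply the standard Hedge bound $O(\sqrt{T\log N})$. The paper leaves these steps implicit (citing \cite{ben2009agnostic} directly), so your write-up is in fact a more explicit version of the intended argument, including the correct handling of the randomized learner's strategic loss as an expectation over the sampled hypothesis.
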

It is unclear to us how to design algorithms in post-manipulation feedback and unknown graph settings.


\end{document}